\DeclareMathOperator*{\argmax}{arg\,max}
\DeclareMathOperator*{\argmin}{arg\,min}
\newtheorem{theorem}{Theorem}
\newtheorem{lemma}{Lemma}
\newtheorem{corollary}{Corollary}
\newtheorem{remark}{Remark}
\newcommand{\A}{\mathcal{A}}
\newcommand{\htheta}{\hat{\theta}}
\newcommand{\RR}{\mathbb{R}}
\newcommand{\la}{\langle}
\newcommand{\ra}{\rangle}
\newcommand{\T}{\mathcal{T}}
\newcommand{\R}{R}
\newcommand{\m}{M}
\newcommand{\vu}{U}
\begin{document}

\title{Stochastic Linear Bandits Robust to Adversarial Attacks}

\author[1]{Ilija Bogunovic}
\author[2]{Arpan Losalka}
\author[1]{Andreas Krause}
\author[2]{Jonathan Scarlett}

\affil[1]{ETH Z\"urich}
\affil[2]{National University of Singapore}

\date{}

\sloppy
\onehalfspacing

\maketitle


\begin{abstract}
    We consider a stochastic linear bandit problem in which the rewards are not only subject to random noise, but also adversarial attacks subject to a suitable budget $C$ (i.e., an upper bound on the sum of corruption magnitudes across the time horizon).  We provide two variants of a Robust Phased Elimination algorithm, one that knows $C$ and one that does not.  Both variants are shown to attain near-optimal regret in the non-corrupted case $C = 0$, while incurring additional additive terms respectively having a linear and quadratic dependency on $C$ in general.  We present algorithm-independent lower bounds showing that these additive terms are near-optimal. In addition, in a contextual setting, we revisit a setup of diverse contexts, and show that a simple greedy algorithm is provably robust with a near-optimal additive regret term, despite performing no explicit exploration and not knowing $C$.    
\end{abstract}

\vspace*{-3ex}
\section{Introduction}
\vspace*{-1ex}

Over the past years, bandit algorithms have found application in computational advertising, recommender systems, clinical trials, and many more. These algorithms make online decisions by balancing between exploiting previously high-reward actions vs.~exploring less known ones that could potentially lead to higher rewards. Bandit problems can roughly be categorized \cite{lattimore2018bandit} into {\em stochastic bandits}, in which subsequently played actions yield independent rewards, and {\em adversarial bandits}, where the rewards are chosen by an adversary, possibly subject to constraints.  
%
A recent line of works has sought to reap the benefits of both approaches by studying bandit problems that are stochastic in nature, but with rewards subject to a limited amount of {\em adversarial corruption}.  Various works have developed provably robust algorithms~\cite{gupta2019better,lykouris2018stochastic,bogunovic2020corruption,li2019stochastic}, and attacks have been designed that cause standard algorithms to fail \cite{garcelon_adversarial_2020,gupta2019better,jun_adversarial_nodate,liu2019data}.

While near-optimal theoretical guarantees have been established in the case of independent arms \cite{gupta2019better}, more general settings remain relatively poorly understood or even entirely unexplored; see Section \ref{sec:related} for details.  Our primary goal is to bridge these gaps via a detailed study of stochastic {\em linear} bandits with adversarial corruptions.  In the case of a fixed finite (but possibly very large) set of arms, we develop an elimination-based robust algorithm and provide regret bounds with a near-optimal joint dependence on the time horizon and the adversarial attack budget, demonstrating distinct behavior depending on whether the attack budget is known or unknown.  In addition, we introduce a novel \emph{contextual} linear bandit setting under adversarial corruptions, and show that under a context diversity assumption, a simple greedy algorithm attains near-optimal regret under adversarial corruptions, despite having no built-in mechanism that explicitly encourages exploration or robustness. 

\vspace*{-1ex}
\subsection{Problem Setting} \label{sec:problem_setup}
\vspace*{-1ex}

We consider the stochastic linear bandit setting with a given set of arms $\A_0 \subset \RR^d$ of finite size $k$, and adversarially corrupted rewards. At each round $t \in \lbrace 1, \dots, T \rbrace$:
\begin{itemize}[leftmargin=5ex,itemsep=0ex,topsep=0.25ex]
  \item The learner chooses an action $A_t \in \A_0$.
  \item The adversary observes $A_t$ and decides upon the attack/corruption $c_t(A_t)$; in addition, $c_t(\cdot)$ may (implicitly) depend on other problem parameters, as detailed below.
  \item The learner receives a corrupted reward $Y_t$:
    \begin{equation}
      Y_t =  \langle \theta, A_t \rangle + \epsilon_t + c_t(A_t),
    \end{equation}
    where $\theta \in \RR^d$ is an unknown parameter vector, and $(\epsilon_t)_{t=1}^T$ is a random noise term, which is assumed to be zero-mean and $1$-sub-Gaussian.
\end{itemize}

We assume that the action feature vectors are unique, span $\RR^d$, and are bounded, i.e., $\| a\|_2 \leq 1, \forall a \in \A_0$.  We similarly make the standard assumption $\|\theta \|_2 \leq 1$, which implies that $|\langle \theta, a \rangle| \leq 1, \forall a \in \A_0$.

We consider an adversary/attacker that has complete knowledge of the problem -- it knows both $\A_0$ and $\theta$, and observes both the precise arm pulled and the noise realization $\epsilon_t$ before choosing its attack. The total {\em attack budget} of the adversary is given by $\sum_{t=1}^T |c_t(A_t)| \leq C$. We will consider both the cases that $C$ is known and unknown to the learner.

The goal of the learner is to minimize the {\em cumulative regret}, defined as
\begin{equation}
   R_T = \sum_{t=1}^T \max_{a \in \A_0} \langle \theta, a - A_t \rangle. \label{eq:regret_def}
\end{equation}
Broadly speaking, we say that an algorithm that attains low regret (e.g., sublinear scaling $R_T = o(T)$) is {\em corruption-tolerant} or {\em robust to adversarial attacks}.

As noted in \cite{lykouris2018stochastic}, one could alternatively count the corruption as being part of the reward and define regret with the corruption included.  Both notions are of interest depending on the application (e.g., depending on whether a fake ad click is considered beneficial or not).  The two notions differ by at most $O(C)$, whereas our upper bounds will contain at least an $O(C \log T)$ term.  In addition, in the multi-armed bandit setting, $\Omega(C)$ lower bounds were shown for both notions in \cite{lykouris2018stochastic}.

\vspace*{-1ex}
\subsection{Related Work} \label{sec:related}
\vspace*{-1ex}

Recent surveys on bandit algorithms can be found in \cite{lattimore2018bandit,slivkins2019introduction}; here we focus on the most relevant works considering \emph{stochastic} settings with adversarial corruptions and bandit attacks. 

Adversarial attacks on standard bandit algorithms (e.g., UCB, $\epsilon$-greedy, and Thompson sampling) were introduced for the case of independent arms (i.e., a classical multi-armed bandit setting) in \cite{jun_adversarial_nodate,liu2019data,liu2020action}, and for linear bandits in \cite{garcelon_adversarial_2020}.  We will use the latter in our experiments to test robustness of the proposed algorithms, along with other heuristic attacks.

In the case of \emph{independent} arms, Lykouris {\em et al.}~\cite{lykouris2018stochastic} show that a simple elimination algorithm with enlarged confidence bounds is robust and near-optimal when the attack budget $C$ is known.  For unknown $C$, randomized algorithm is given whose regret bound roughly amounts to scaling the uncorrupted regret 
by $C$, i.e., multiplicative dependence.  Subsequently, Gupta {\em et al.}~\cite{gupta2019better} gave an improved algorithm whose regret is near-optimal, with an {\em additive} dependence on $C$.


Bogunovic {\em et al.}~\cite{bogunovic2020corruption} consider corruption-tolerant bandits for functions with a bounded RKHS norm, which includes linear bandits as a special case.  The algorithm of \cite{bogunovic2020corruption} is based on that of \cite{lykouris2018stochastic}, and has analogous guarantees.  However, even in the case of known $C$, the best dependence obtained is multiplicative; the possibility of additive dependence was left as an open problem, which we resolve in this work in the linear case. 

Li {\em et al.}~\cite{li2019stochastic} also study 
stochastic linear bandits with adversarial corruptions.  A distinction in \cite{li2019stochastic} is that the regret bounds are {\em instance-dependent}, relying on positive gaps between the function values at corner points of the polyhedral domain.  These results are distinct from the {\em instance-independent} bounds with {\em a finite number of arms} that we seek in this paper, and neither can be deduced from the other; see \cite[App.~K]{bogunovic2020corruption} for further discussion, as well as Remark \ref{rem:comparison} below.  

It is worth noting that the above-mentioned works \cite{lykouris2018stochastic,gupta2019better,bogunovic2020corruption,li2019stochastic} consider a weaker adversary that cannot observe the current action, while this has often also been assumed when designing efficient bandit attacks \cite{jun_adversarial_nodate,liu2019data}. 
Our more powerful adversary has also been considered before (e.g., see \cite[Fig.~2]{liu2019data}), and naturally, any given upper bound on regret is stronger the more powerful of an adversary it applies to. 

In Appendix~\ref{sec:more_rw}, we discuss further existing works that are less directly related to ours compared  to those above, including distinct adversarial settings (e.g., handled by the EXP2 and EXP3 algorithms), ``best of both worlds'' results for stochastic and adversarial bandits, model mismatch and misspecification, and fractional/Huber-like contamination models. 

\begin{remark} \label{rem:comparison}
    Returning to the results in \cite{li2019stochastic}, one may note that instance-dependent bounds can potentially be transferred to instance-independent bounds.  However, we show in Appendix \ref{app:transfer} that doing this for the results in \cite{li2019stochastic} would at best lead to $R_T = O( T^{2/3} + \sqrt{CT} )$, which is strictly higher than than our analogous result (Theorem \ref{thm:regret_bound2}) whenever $C = o(T^{1/3})$.  This is despite the fact that we are considering a stronger adversary.  However, it should be kept in mind that the domains adopted are different (polyhedral vs.~finite), posing another hurdle that would need to be overcome to transfer results from one setting to the other.
\end{remark}




\vspace*{-1ex}
\subsection{Contributions}
\vspace*{-1ex}

Our main contributions are as follows:
\begin{itemize}[leftmargin=5ex,itemsep=0ex,topsep=0.25ex]
    \item For known $C$, we present a Robust Phased Elimination algorithm, and show that it recovers a near-optimal regret bound when $C = 0$, while incurring an additive $O(d^{3/2}C \log T)$ term (up to $\log \log (dT)$ factors) more generally.  A standard lower bound argument \cite{lykouris2018stochastic} shows that $\Omega(C)$ dependence is unavoidable, thus certifying the upper bound as being optimal up to logarithmic factors   when $d = O(1)$ (the precise $d$ dependence is not a main focus of our work).
    \item For unknown $C$, we modify our algorithm to gradually decrease its confidence bound enlargement term over time, and show that we only pay a further $O(C^2)$ term compared to the known $C$ case.  While this limits the regime of sublinear regret to $C = o(\sqrt T)$ (in contrast with $C = o(T)$ when $C$ is known), we additionally provide a novel algorithm-independent lower bound showing that this is unavoidable for any algorithm that achieves a near-optimal non-corrupted ($C = 0$) bound.  Thus, we prove a fundamental difficulty in being robust against our strong adversary when $C$ is unknown, and demonstrate a fundamental gap between the known $C$ and unknown $C$ settings.
    \item We introduce a linear contextual problem with adversarial attacks, and show that under the model of diverse contexts from \cite{kannan_smoothed_2018}, the greedy algorithm not only attains near-optimal regret in the uncorrupted setting (as shown in \cite{kannan_smoothed_2018}), but is also {\em robust to adversarial attacks}.
\end{itemize} 

\vspace*{-1ex}
\section{Algorithm and Regret Bounds} \label{sec:algorithm_and_regret_bound}
\vspace*{-1ex}

We present our Robust Phased Elimination algorithm in Algorithm~\ref{alg:cpe}, which builds on non-robust elimination algorithms \cite{lattimore2018bandit,lattimore2019learning,valko2014spectral}, with some important differences outlined in Remark \ref{rem:differences} below. The known $C$ vs.~unknown $C$ variants only differ on Line \ref{line:chooseC}.
The algorithm runs in epochs of exponentially increasing length and maintains a set of potentially optimal actions.
In every epoch, the following steps are performed: (i) compute a near-optimal experimental design over a set of potentially optimal actions, and play each action from this subset in proportion to the computed design (Lines 2-4); 
(ii) compute an estimate of $\theta$, and use it to eliminate actions that appear suboptimal (Lines 5-6). We proceed by describing these steps in more detail.

\textbf{Action selection.} To introduce the action selection procedure, consider the problem of finding a probability distribution $\zeta: \A \rightarrow [0,1]$ that solves the following:
\begin{equation} \label{eq:experimental_design}
   {\rm minimize}_{\zeta} ~~~ {\max}_{a \in \A} \| a \|_{\Gamma(\zeta)^{-1}}^2 \quad \text{s.t. } ~~\sum_{a \in \A} \zeta(a) = 1,
\end{equation}
where $\Gamma(\zeta) = \sum_{a \in \A} \zeta(a) aa^T$, and $\| a\|_{M} = \sqrt{a^T M a}$.
A classical result from \cite{kiefer1960equivalence} states that the optimal solution $\zeta^*$ exists, and achieves $\max_{a \in \A} \| a \|_{\Gamma(\zeta^*)^{-1}}^2 = d$ with $|\text{supp}(\zeta^*)| \le \tfrac{d(d+1)}{2}$. 
%
%
For our purposes, however, it suffices to solve the problem in~\eqref{eq:experimental_design} only near-optimally. As noted in \cite{lattimore2019learning}, there exists a near-optimal design of smaller support than $d(d+1)/2$. In particular, if $\A$ spans $\RR^d$,\footnote{See Remark \ref{rem:observations} below for the general case.} then we can \emph{efficiently} compute $\zeta: \A \rightarrow [0,1]$ such that 
\begin{equation} \label{eq:solve_near_optimal_design}
  \max_{a \in \A} \| a \|_{\Gamma(\zeta)^{-1}}^2 \leq 2d, ~~ |\text{supp}(\zeta)| \leq 4d(\log \log d + 18)
\end{equation}
This follows from~\cite[Proposition 3.17]{booktodd}, who provide a polynomial-time Frank-Wolfe algorithm. 

Hence, in every epoch $h$, the algorithm recomputes a near-optimal design from \eqref{eq:solve_near_optimal_design} over a subset of the actions that are still potentially optimal, i.e., $\A_{h}$. It then plays each action from this subset in proportion to the computed design, but it also makes sure that every arm in its support is played at least some minimal number of times $\lceil \nu m_h \rceil$, where $\nu$ is an input truncation parameter to be chosen below, and $m_h$ is an exponentially increasing parameter with respect to the epoch length. 


\textbf{Parameter estimation and arm elimination.} Consider the estimator given in \eqref{eq:estimators2}.
This estimator only depends on the observations received in the current epoch, and hence, it is not affected by attacks suffered during previous epochs. However, it can still be biased due to the adversarial attacks suffered in the current epoch, and we need to account for this bias. In Lemma \ref{lem:robust_conf} (Appendix \ref{sec:pf_pe}), for any of the remaining potentially optimal actions, we bound the difference of the true mean reward and estimated one, and show that this error grows linearly with the total attack budget $C$.    
Hence, the algorithm makes use of the enlarged confidence bounds in \eqref{eq:retain_arms_condition} to retain potentially optimal arms.
Moreover, we show that when $C$ is known, our estimator is guaranteed to have sufficient accuracy so that the optimal arm is always retained in \eqref{eq:retain_arms_condition} with high probability.  For unknown $C$, this is not always the case, but we can control the level of suboptimality of the arms that are retained.

The estimator of $\theta$ is robust due to the fact that it averages the rewards corresponding to the same played action, reducing the effect of the attack. Intuitively, actions that have higher importance according to the found near-optimal design are played more times than others. Consequently, it is harder for the adversary to corrupt them as it needs to use more of the attack budget.  In addition, due to the introduced truncation, the algorithm plays each arm in the support of the computed design a fixed minimum number of times. 

\begin{algorithm}[t!]
    \caption{Robust Phased Elimination}
    \label{alg:cpe}
    \begin{algorithmic}[1]
        \Require Actions $\A_0 \subset \RR^d$, confidence $\delta \in (0,1)$, truncation parameter $\nu \in (0,1)$, time horizon $T$
        \State Initialize\footnotemark~$m_0 = 4d(\log \log d + 18)$, and for each $h \in \{0,1,\dotsc,\log_2 T-1\}$, set $\hat{C}_h = C$ for known $C$, or $\hat{C}_h = \min\lbrace \frac{\sqrt{T}}{m_0 \log_2 T}, m_0\sqrt{d} 2^{\log_2 T-h} \rbrace$ for unknown $C$.  Initialize $h=0$. \label{line:chooseC}
        \State Compute design $\zeta_h: \A_{h} \rightarrow [0,1]$ such that
        \begin{equation} \label{eq:solve_near_optimal_design2}
            \max_{a \in \A_{h}} \| a \|_{\Gamma(\zeta_h)^{-1}}^2 \leq 2d, 
            ~ \text{and} ~ |\text{supp}(\zeta_h)| \leq m_0. 
        \end{equation}
        \State Set $u_h(a)=0$ if $\zeta_h(a) = 0$, and $u_h(a)=\lceil m_{h} \max\lbrace\zeta_h(a), \nu\rbrace \rceil$ otherwise.
        \State Take each action $a \in \A_{h}$ exactly $u_h(a)$ times with corresponding features $(A_t)_{t=1}^{u_h}$ and rewards $(Y_t)_{t=1}^{u_h}$ (implicitly depending on $h$), where $u_h = \sum_{a \in \A_{h}} u_h(a)$.
        \State Estimate the parameter vector $\hat{\theta}_h$:
        \begin{gather} \label{eq:estimators2}
          \hat{\theta}_h = \Gamma_h^{-1} \sum_{t=1}^{u_h} A_t u_h(A_t)^{-1} \sum_{s \in \T(A_t)}  Y_s,  \\ \Gamma_h = \sum_{a \in \A_{h}} u_h(a) aa^T, 
        \end{gather}
        where $\T(a) = \big\lbrace  s \in \lbrace 1,\dots,u_h \rbrace : A_s = a \big\rbrace$ is the set of times at which arm $a$ is played.
        \State Update the active set of arms:                                     
        \begin{align} 
          &\A_{h+1} \leftarrow \Big\lbrace a \in \A_{h}: \max_{a' \in \A_{h}} \langle \hat{\theta}_h, a'- a \rangle \nonumber \\ &~~\leq 2 \sqrt{\tfrac{4d}{m_{h}}\log\big(\tfrac{1}{\delta}\big)} +  
  \tfrac{2\hat{C}_h}{m_h \nu} \sqrt{4d(1 + \nu m_0)} \Big\rbrace. \label{eq:retain_arms_condition}
        \end{align}
        \State Set $m_{h+1} \leftarrow 2m_{h}$, $h \leftarrow h+1$ and return to step 2 (terminating after $T$ total arm pulls).
    \end{algorithmic}
\end{algorithm}



\begin{remark} \label{rem:observations}
    The following observations from \cite{lattimore2019learning} are useful: (i) While \eqref{eq:solve_near_optimal_design} is stated assuming the arms span $\RR^d$, we can simply work in the lower-dimensional subspace otherwise (e.g., when $k < d$); (ii) We can extend the algorithm and its analysis to infinite-arm settings using a covering argument.
\end{remark}

\begin{remark} \label{rem:differences}
    Phased elimination algorithms (without robustness to adversarial attacks) have previously been considered in various settings, including the standard setting \cite[Ch.~22]{lattimore2018bandit}, misspecified setting \cite{lattimore2019learning}, and graph bandits \cite{valko2014spectral}. Among these, our algorithm is most similar to~\cite{lattimore2019learning}, but has several important differences: (i) We use a different and more robust estimator of $\theta$; (ii) The confidence bounds are enlarged and explicitly depend on $C$ to account for adversarial corruptions; (iii) The truncation parameter is introduced to ensure that each arm is pulled enough; (iv) In the unknown $C$ case, we need to carefully choose the sequence $\hat{C}_h$ to trade off robustness against aggressiveness in eliminating suboptimal arms; (v) In contrast to the vast majority of existing elimination algorithms, the optimal arm {\em may} be eliminated in the unknown $C$ setting (i.e., the confidence bounds may not be ``valid''), but this only occurs when the best remaining arm is still good enough to control the regret.
\end{remark}

\vspace*{-1ex}
\subsection{Upper Bounds on Regret}\label{sec:upper_bounds}
\vspace*{-1ex}


We first provide a regret bound for the known $C$ case, proved in Appendix \ref{sec:pf_pe}.

\begin{theorem} \label{thm:regret_bound}
    For any attack budget $C \ge 0$, with probability at least $1 - \delta$, the Robust Phased Elimination algorithm with known $C$ and truncation parameter $\nu = \tfrac{1}{4d(\log \log d + 18)}$ satisfies
      \begin{equation}
        R_T = \tilde{O}\Big(\sqrt{dT\log\big(\tfrac{k}{\delta}\big)} + Cd^{3/2}\log T\Big), \label{eq:RT1}
      \end{equation}
    where the notation $\tilde{O}(\cdot)$ hides $\log\log(dT)$ factors.
\end{theorem}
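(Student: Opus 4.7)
The plan is a standard phased-elimination analysis adapted to the adversarial-corruption model. First, fix an epoch $h$ and decompose the estimator error as
\[
\hat{\theta}_h - \theta \;=\; \Gamma_h^{-1}\sum_{s=1}^{u_h} A_s \epsilon_s \;+\; \Gamma_h^{-1}\sum_{a' \in \text{supp}(\zeta_h)} a'\, c(a'),
\]
where $c(a') = \sum_{s \in \T(a')} c_s(a')$ and $\sum_{a'}|c(a')| \leq C$ by the budget constraint. Projecting onto any $a \in \A_h$, the noise part is sub-Gaussian with variance proxy $\|a\|_{\Gamma_h^{-1}}^2$. Since $\Gamma_h \succeq m_h \Gamma(\zeta_h)$ and the near-optimal design in Algorithm~\ref{alg:cpe} guarantees $\|a\|_{\Gamma(\zeta_h)^{-1}}^2 \leq 2d$, one has $\|a\|_{\Gamma_h^{-1}}^2 \leq 2d/m_h$, so a Hoeffding-type bound gives $|\text{noise}(a)| \leq 2\sqrt{(d/m_h)\log(1/\delta')}$ with probability at least $1-\delta'$.

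Next I would bound the corruption bias $\langle a, \Gamma_h^{-1}\sum_{a'} a'\, c(a')\rangle$ using Cauchy--Schwarz together with (a) the bound $\|a\|_{\Gamma_h^{-1}} \leq \sqrt{2d/m_h}$ uniform over $a \in \A_h$, and (b) the truncation, which forces every support arm to be pulled at least $\lceil \nu m_h\rceil$ times and so limits how the adversary's budget can concentrate. The aim is the form $\tfrac{2 C}{m_h \nu}\sqrt{4d(1+\nu m_0)}$ appearing in the retention rule~\eqref{eq:retain_arms_condition}. A union bound over the at most $k$ active arms and the $H = \log_2 T$ epochs, after replacing $\delta$ by $\delta/(kH)$, makes both estimates hold simultaneously throughout the run with probability at least $1-\delta$.

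On this good event the right-hand side of the retention condition dominates twice the total (noise + bias) width, so the optimal arm $a^\star$ is never eliminated, using $\langle\hat{\theta}_h, a'-a^\star\rangle = \langle\theta,a'-a^\star\rangle + \text{errors} \leq 0 + \text{errors}$ for every $a' \in \A_h$. Dually, any arm $a$ surviving to $\A_{h+1}$ satisfies the suboptimality bound $\Delta(a) = O\!\left(\sqrt{(d/m_h)\log(kH/\delta)} + \tfrac{C}{m_h \nu}\sqrt{d(1+\nu m_0)}\right)$.

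Finally I would sum regret across epochs. With $\nu = 1/m_0$ and $m_0 = \Theta(d\log\log d)$ one checks $u_h = \Theta(m_h)$, so the regret incurred in epoch $h{+}1$ is at most $u_{h+1}\cdot \max_{a \in \A_{h+1}}\Delta(a) = O\!\left(\sqrt{m_{h+1}\, d\log(kH/\delta)} + d^{3/2} C\right)$ (using $m_{h+1}/m_h = 2$ to absorb the corruption ratio). Summing the first term gives a geometric series dominated by the last epoch, yielding $\tilde{O}(\sqrt{Td\log(k/\delta)})$, while summing the second gives $O(d^{3/2} C H) = O(d^{3/2} C \log T)$, matching the stated bound. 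The hardest step I anticipate is the bias analysis: a naive Cauchy--Schwarz already yields $O(dC/m_h)$, but matching the exact functional form $\tfrac{C}{m_h\nu}\sqrt{d(1+\nu m_0)}$ used in the retention rule requires careful bookkeeping of how the adversarial budget splits across arms and how the truncation controls arms both inside and outside $\text{supp}(\zeta_h)$, especially to ensure the bound is uniform over all $a \in \A_h$ needed for the elimination invariant.
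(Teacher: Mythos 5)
Your proposal is correct and follows essentially the same route as the paper's proof: the same noise-plus-corruption decomposition of $\hat{\theta}_h-\theta$, the same use of $\Gamma_h \succeq m_h\Gamma(\zeta_h)$ to get $\|a\|_{\Gamma_h^{-1}}^2\le 2d/m_h$, the same union bound over arms and epochs, the same retention invariant for $a^\star$, and the same geometric summation across epochs. The bias step you flag as hardest is carried out in the paper exactly with the ingredients you name --- the truncation lower bound $u_h(a)\ge \nu m_h$, a Cauchy--Schwarz/Jensen step over the played-arm distribution, and the epoch-length bound $u_h\le 2m_h(1+\nu m_0)$ --- yielding precisely the $\tfrac{C}{m_h\nu}\sqrt{4d(1+\nu m_0)}$ form in the retention rule.
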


When $C = 0$, we recover the scaling of \cite[Thm.~22.1]{lattimore2018bandit}, which is near-optimal in light of known lower bounds \cite{dani2008stochastic}.  In Section~\ref{sec:lower_bounds}, we will argue that the second term is also near-optimal. 

\footnotetext{When $d = 1$, we have $\log \log d = -\infty$, but the results hold with $\log \log d$ replaced by $\log(1+\log d)$.} 


Next, we consider the case that the total attack budget $C$ is unknown to the learner.  We start by discussing the choice of $\hat{C}_h$ in Algorithm \ref{alg:cpe}.  Let $H$ be the number of epochs, and note that $\tilde{H} = \log_2 T$ be a deterministic upper bound on $H$ (see Appendix \ref{sec:regret_analysis} for a short proof).
%
Then, the choice in Algorithm \ref{alg:cpe} can be rewritten as $\hat{C}_h =  \min \lbrace \frac{\sqrt{T}}{m_0 \log_2 T}, m_0\sqrt{d} 2^{\tilde{H}-h} \rbrace$.  Observe that the epochs' lengths $u_h$ and corruption thresholds $\hat{C}_h$ are exponentially increasing and decreasing, respectively. It follows that the algorithm is more cautious in early epochs (i.e., uses larger thresholds).  Our second main result stated is as follows, and proved in Appendix \ref{sec:pf_pe}.

\begin{theorem} \label{thm:regret_bound2}
    For any 
    $C \le \frac{\sqrt T}{4d(\log \log d + 18) \log T}$, with probability at least $1 - \delta$, the Robust Phased Elimination algorithm with unknown $C$ and truncation parameter $\nu = \tfrac{1}{4d(\log \log d + 18)}$ satisfies   
      \begin{equation}
        R_T = \tilde{O}\Big(\sqrt{dT\log\big(\tfrac{k}{\delta}\big)} + Cd^{3/2}\log T + C^2\Big). \label{eq:RT2}
      \end{equation}
\end{theorem}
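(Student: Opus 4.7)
The plan is to follow the proof of Theorem \ref{thm:regret_bound} as long as possible, and to handle separately the epochs in which the adaptive threshold $\hat C_h$ has fallen below the true budget $C$. First I would condition on the high-probability event (via Lemma \ref{lem:robust_conf}, union-bounded over the at most $\log_2 T$ epochs) that, for every epoch $h$ and every $a \in \A_h$, the estimation error $|\langle \hat\theta_h - \theta, a\rangle|$ is at most a statistical half-width $\Delta_h^{\mathrm{stat}} = O\bigl(\sqrt{d\log(1/\delta)/m_h}\bigr)$ plus a corruption half-width of the form $O(m_0 \sqrt d\, F_h / m_h)$, where $F_h$ denotes the corruption actually spent in epoch $h$ (so that $\sum_h F_h \le C$).

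Next, partition the epochs into $P_1 = \{h : \hat C_h \ge C\}$ and $P_2 = \{h : \hat C_h < C\}$. The hypothesis $C \le \sqrt T/(m_0 \log T)$ together with $\hat C_0 = \sqrt T/(m_0 \log_2 T)$ ensures $P_1 \ne \emptyset$, and the monotone, eventually-geometric structure of $\hat C_h$ places $P_2$ at the tail of the horizon. On every $h \in P_1$ we have $\hat C_h \ge F_h$, so the elimination threshold in \eqref{eq:retain_arms_condition} dominates twice the true estimation error; the optimal arm $a^*$ is retained throughout Phase $1$, and the standard elimination argument from Theorem \ref{thm:regret_bound} applies with $\hat C_h$ playing the role of the attack budget. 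Summing the per-epoch contributions with $u_{h+1} = O(m_h)$, using (a) $\sum_{h \in P_1}\hat C_h = O(\sqrt T/m_0)$ so that the $\hat C_h$-term sums to $O(\sqrt{dT})$, and (b) the global budget $\sum_h F_h \le C$ so that the $F_h$-term sums to $O(d^{3/2} C \log T)$ (arising from the at most $\log_2 T$ epochs over which the per-epoch corruption contribution $O(m_0\sqrt d F_h)$ must be charged), yields the $P_1$ bound $R_{P_1} = \tilde O\!\bigl(\sqrt{dT\log(k/\delta)} + C d^{3/2}\log T\bigr)$.

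The heart of the proof is Phase $2$. Let $h^* = \max P_1$. Because $\hat C_h$ decays by a factor of $2$ per epoch after its initial constant segment, $\hat C_{h^*} < 2C$, and this yields the crucial inequality $m_{h^*+1} = \Omega(m_0^2 \sqrt d\, T / C)$. Since $a^* \in \A_{h^*+1}$, we initialize $\mathrm{SubOpt}_{h^*+1} = 0$, where $\mathrm{SubOpt}_h := \max_{a \in \A_0}\langle\theta,a\rangle - \max_{a \in \A_h}\langle\theta,a\rangle$. The elimination rule together with the error bound on the good event yields the one-step recursion $\mathrm{SubOpt}_{h+1} - \mathrm{SubOpt}_h \le 2 \Delta_h^{\mathrm{stat}} + \gamma_h(\hat C_h + F_h)$, with $\gamma_h(x) = O(m_0 x \sqrt d / m_h)$. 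Telescoping over $P_2$ and combining (i) geometric summability of $1/m_h$ and $\hat C_h / m_h$ (both decaying like $4^{-h}$ for $h > h^*$), and (ii) the budget $\sum_h F_h \le C$ together with $1/m_h \le 1/m_{h^*+1} = O\bigl(C/(m_0^2\sqrt d\, T)\bigr)$, yields $\mathrm{SubOpt}_h = O\!\bigl(C^2/(m_0 T)\bigr)$ uniformly over $P_2$. Multiplying by $\sum_{h \in P_2} u_h \le T$ gives an $O(C^2/m_0) \subseteq O(C^2)$ contribution, while the remaining direct per-epoch statistical and corruption contributions on $P_2$ are of the same form as those already bounded in Phase $1$.

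The main technical obstacle is the Phase $2$ bookkeeping: one must show that no matter how the adversary distributes its residual budget across the late epochs, the incremental suboptimality of the best surviving arm summed over $P_2$ is only $O(C^2)$. This cancellation is precisely what the schedule $\hat C_h = \min\{\sqrt T/(m_0 \log T),\, m_0 \sqrt d\, 2^{\log_2 T - h}\}$ is engineered to enforce---its geometric decay is tuned to the geometric growth of $m_h$ so that the product $\hat C_h / m_h$ is square-summable at the rate $C^2/T$---and tracking the constants cleanly through the recursion, while also verifying that the good event continues to cover the $\mathrm{SubOpt}$ comparison even after $a^*$ has been (possibly) eliminated, is where the argument departs most substantially from the known-$C$ case.
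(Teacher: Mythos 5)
Your overall architecture coincides with the paper's: the partition into epochs with $\hat C_h \ge C$ (where the known-$C$ analysis applies verbatim with $\hat C_h$ in place of $C$) versus the tail epochs with $\hat C_h < C$, the lower bound $m_{h'} \ge m_0^2\sqrt{d}\,T/C$ on the length of the first unsafe epoch, the bound $\sum_h \hat C_h = O(\sqrt{T}/m_0)$, and the control of the unsafe phase by a per-epoch recursion on the suboptimality of the best surviving arm, with the $O(C^2)$ term arising from a uniform $O\bigl(C^2/(m_0 T)\bigr)$ suboptimality multiplied by at most $T$ rounds (the paper equivalently sums $u_{h+1}\eta_h$ and uses $m_{H-2}/m_{h'} \le C/(m_0\sqrt d)$).

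There is, however, a genuine flaw in your Phase-2 recursion: the $2\Delta_h^{\mathrm{stat}}$ term must not appear, and with it present your claimed conclusion does not follow. When the best surviving arm $b_h$ is eliminated, the retention rule \eqref{eq:retain_arms_condition} gives $\langle \hat\theta_h, a''-b_h\rangle > 2\Delta_h^{\mathrm{stat}} + \tfrac{2\hat C_h}{m_h\nu}\sqrt{4d(1+\nu m_0)}$ for the retained empirical maximizer $a''$, while the confidence bound gives $\langle \hat\theta_h, a''-b_h\rangle \le \langle\theta,a''-b_h\rangle + 2\Delta_h^{\mathrm{stat}} + \tfrac{2C}{m_h\nu}\sqrt{4d(1+\nu m_0)}$; the statistical half-widths cancel \emph{exactly} (this is precisely \eqref{eq:thet_ahprime}), leaving an increment controlled solely by the corruption mismatch. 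If instead you carry $2\Delta_h^{\mathrm{stat}}$ through the telescoping, the sum over $P_2$ is at least $2\Delta_{h'}^{\mathrm{stat}} = \Theta\bigl(\sqrt{d\log(1/\delta)/m_{h'}}\bigr) = \Theta\bigl(d^{1/4}\sqrt{C\log(1/\delta)/T}/m_0\bigr)$, which is not $O\bigl(C^2/(m_0T)\bigr)$ unless $C = \Omega(T^{1/3})$; multiplied by the up-to-$T$ remaining rounds it contributes an extra $\Theta\bigl(d^{1/4}\sqrt{CT\log(1/\delta)}/m_0\bigr)$ to the regret. For fixed $d,k,\delta$ and $C = T^{1/4}$ this is $\Theta(T^{5/8})$, exceeding the claimed $\tilde O\bigl(\sqrt{dT\log(k/\delta)} + Cd^{3/2}\log T + C^2\bigr) = \tilde O(\sqrt T)$ by a polynomial factor. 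The proof goes through only once you observe that the threshold's statistical part is engineered to absorb the estimator's statistical error exactly, so that $\mathrm{SubOpt}_{h+1}-\mathrm{SubOpt}_h$ is bounded by $\tfrac{2}{m_h\nu}\sqrt{4d(1+\nu m_0)}$ times the corruption overshoot alone.
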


This result matches that of Theorem \ref{thm:regret_bound}, but with an additional penalty of $C^2$.  In fact, due to this penalty, the regret bound \eqref{eq:RT2} trivially holds when $C = \Omega(\sqrt T)$, because we have $R_T \le 2T$ due to our assumption of bounded rewards.  If $d = \omega(1)$, then there still remains the regime where $\frac{\sqrt T}{(d \log\log d)  \log T} \ll C \ll \sqrt{T}$, but in any case, one can slightly increase the final term and state that $R_T = \tilde{O}\big(\sqrt{dT\log\big(\tfrac{k}{\delta}\big)} + Cd^{3/2}\log T + C^2d^2  (\log T)^2\big)$ for arbitrary $C$.  

At this stage, observing that our regret bound is not sublinear in $T$ when $C = \Omega(\sqrt{T})$, the natural question arises as to whether attaining such a goal is impossible for all robust bandit algorithms.
In the following subsection, we use an algorithm-independent lower bound to provide a partial answer to this question; specifically, such a goal is indeed impossible (up to logarithmic factors) whenever the algorithm is required to have order-optimal regret in the uncorrupted ($C=0$) case.



\vspace*{-1ex}
\subsection{Algorithm-Independent Lower Bounds on Regret}\label{sec:lower_bounds}
\vspace*{-1ex}

Using the same reasoning as the standard multi-armed bandit setting \cite{lykouris2018stochastic}, it is straightforward to see that $\Omega(C)$ regret is unavoidable: The adversary can simply shift all rewards to zero for the first $C$ rounds, and the learner cannot do better than random guessing.  For completeness, this argument is given in more detail in Appendix \ref{sec:pf_lower}.  This argument holds even when $C$ is known, and thus, we see that the second term in Theorem \ref{thm:regret_bound} is optimal up to at most an $\tilde{O}(\log T)$ factor for fixed $d$.  
We expect that an improvement on the $d^{3/2}$ dependence may be possible, but the following result, proved in Appendix \ref{sec:pf_lower}, shows that at least $\Omega(Cd)$ is unavoidable.

\begin{theorem} \label{thm:lower_d}
    For any dimension $d$, there exists an instance with $k = d$ such that any algorithm (even with knowledge of $C$) must incur $\Omega(Cd)$ regret with probability at least $\frac{1}{2}$.
\end{theorem}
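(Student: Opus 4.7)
The plan is to construct the hard instance with $k=d$ arms taken as the standard basis vectors $\{e_1,\dots,e_d\}\subset\RR^d$ and to pick $\theta$ adaptively to the algorithm's own exploration pattern via a reduction to a null reference run with $\theta=0$ and no corruption. For any fixed algorithm, let $\bar N_j$ denote the expected number of pulls of arm $e_j$ under the null run; since $\sum_j \bar N_j = T$, some coordinate $j^*$ satisfies $\bar N_{j^*}\le T/d$, and Markov's inequality applied to the pull count $N_{j^*}^0$ then gives
\[
  \Pr\bigl(N_{j^*}^0 < 2T/d\bigr)\ge 1/2.
\]

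I would then take the hard instance to be $\theta = \Delta\,e_{j^*}$ with gap $\Delta = Cd/(2T)$, which lies in $(0,1]$ whenever $Cd\le 2T$ (the regime in which the target bound $R_T=\Omega(Cd)$ is not already trivially vacuous since $R_T\le 2T$). The adversary subtracts $\Delta$ from each observed reward of arm $e_{j^*}$ so long as its cumulative budget allows. Coupling the noise sequence $(\epsilon_t)$ and any internal randomness of the algorithm between the null and adversarial runs, an inductive argument on $t$ shows that on the event $E=\{N_{j^*}^0<2T/d\}$ the adversary successfully masks every pull of $e_{j^*}$ using total budget at most $\Delta\cdot 2T/d = C$. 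Hence the observations in the two runs coincide on $E$, the algorithm's plays coincide, and in particular $N_{j^*}^{\mathrm{adv}}=N_{j^*}^0$.

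With this coupling in hand, the regret on $E$ is
\[
  R_T \;=\; \Delta\,(T - N_{j^*}^{\mathrm{adv}}) \;\ge\; \Delta\,T\,(1-2/d) \;=\; \frac{(d-2)C}{2},
\]
which is $\Omega(Cd)$ for $d\ge 4$; together with $\Pr(E)\ge 1/2$ this yields the theorem in that regime. For $d\in\{1,2,3\}$ the standard $\Omega(C)$ lower bound discussed at the start of Section \ref{sec:lower_bounds} already matches $\Omega(Cd)$ up to a constant.

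The main obstacle is the coupling step: one must carefully argue, by induction on $t$, that so long as the null run has not yet pulled $e_{j^*}$ more than $2T/d$ times, the adversary's cumulative budget use is strictly below $C$ and the two runs have therefore produced identical observation distributions up to round $t$, which in turn forces them to take the same action at round $t+1$. Once this inductive invariant is in place, the probability estimate $\Pr(E)\ge 1/2$ and the regret computation on $E$ are routine, and the construction works uniformly against any algorithm (even one that knows $C$), since the null-run observations carry no information about $j^*$.
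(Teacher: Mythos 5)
Your proof is correct and follows essentially the same strategy as the paper's: couple the corrupted run to an uninformative reference run, use an averaging/Markov argument to find a coordinate $j^*$ that the algorithm under-explores, and let the adversary cancel exactly the gap of arm $e_{j^*}$ so that the two runs are indistinguishable while the budget lasts. The only substantive difference is the tuning: the paper takes $\theta = e_i$ with gap $1$ and masks only during the first $\Theta(Cd)$ rounds (pigeonholing over the $d$ instances to find an arm pulled fewer than $\lfloor C\rfloor$ times in that window), whereas you take the small gap $\Delta = Cd/(2T)$ and mask over the entire horizon; both constructions operate in the same regime $Cd = O(T)$ and yield $\Omega(Cd)$. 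Your version has the minor virtue of making the probability-$\tfrac{1}{2}$ claim explicit via Markov's inequality applied to the null-run pull count, a step the paper leaves informal.
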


Next, we provide another lower bound that allows us to claim that the regret bound of Theorem~\ref{thm:regret_bound2} is near-optimal in the case that $C$ is unknown and the uncorrupted regret is required to be near-optimal.

\begin{theorem} \label{thm:lower_C}
    For $d=2$ and $k=2$, for any algorithm that guarantees $R_T \le \bar{R}^{(0)}_T$ with probability at least $1 - \delta$ for a given uncorrupted regret bound $\bar{R}^{(0)}_T \le \frac{T}{16}$ when $C = 0$, there exists an instance in which $R_T = \Omega(T)$ with probability at least $1-\delta$ when $C = 2 \bar{R}^{(0)}_T$. 
\end{theorem}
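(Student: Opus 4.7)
The plan is a standard two-instance coupling/indistinguishability argument: construct two $2$-armed, $2$-dimensional instances whose observation distributions become identical under a small per-round corruption, and then invoke the uncorrupted regret bound to show that the required total corruption fits within the budget $C = 2\bar{R}^{(0)}_T$. Concretely, take $a_1 = (1,0)$, $a_2 = (0,1)$, fix $\Delta = 1/4$, and consider two instances $I_1$ with $\theta^{(1)} = (0,-\Delta)$ (arm $1$ optimal, gap $\Delta$) and $I_2$ with $\theta^{(2)} = (0,+\Delta)$ (arm $2$ optimal, gap $\Delta$); both satisfy $\|\theta\|_2 \le 1$ and $|\la \theta, a \ra| \le 1$. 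In $I_1$ the adversary does nothing, while in $I_2$ the adversary's attack rule is $c_t(a_1) = 0$ and $c_t(a_2) = -2\Delta$, i.e., it subtracts $\la \theta^{(2)} - \theta^{(1)}, A_t \ra$ from the reward at each round. Because this exactly cancels the parameter difference, the reward distribution seen by the learner in $I_2$ (with corruption) agrees arm-by-arm with the one in $I_1$ (without corruption).

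Next I would couple the noise $\{\epsilon_t\}$ across the two worlds so that the learner's observations, and therefore its (possibly randomized) action sequence $\{A_t\}$ and the per-arm pull counts $N_1, N_2$, are pathwise identical in $I_1$ and $I_2$. Let $\mathcal{E}$ be the event that $R_T \le \bar{R}^{(0)}_T$ in $I_1$; by hypothesis $\Pr[\mathcal{E}] \ge 1-\delta$. Since arm $2$ has gap $\Delta$ in $I_1$, on $\mathcal{E}$ we have $\Delta \cdot N_2 \le \bar{R}^{(0)}_T$, hence $N_2 \le \bar{R}^{(0)}_T/\Delta$. The adversary's total corruption in $I_2$ is therefore $\sum_t |c_t(A_t)| = 2\Delta \cdot N_2 \le 2\bar{R}^{(0)}_T = C$ on $\mathcal{E}$, so the budget constraint is never violated and the coupling remains valid for the full horizon. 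On the same event, the regret in $I_2$ is $R_T^{(I_2)} = \Delta \cdot N_1 = \Delta(T - N_2) \ge \Delta T - \bar{R}^{(0)}_T \ge T/4 - T/16 = 3T/16 = \Omega(T)$, using $\Delta = 1/4$ and $\bar{R}^{(0)}_T \le T/16$. This delivers the claimed $\Omega(T)$ lower bound with probability at least $1-\delta$ on the $I_2$ side.

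The only part that requires care is the budget accounting: the adversary spends $2\Delta$ per pull of arm $2$, while the uncorrupted regret guarantee caps $N_2$ at $\bar{R}^{(0)}_T/\Delta$, so the two $\Delta$'s cancel and the required budget is exactly $2\bar{R}^{(0)}_T$. This algebraic coincidence is what pins down the scaling $C = 2\bar{R}^{(0)}_T$ in the statement, and it captures precisely why an unknown-$C$ algorithm whose uncorrupted regret is much smaller than $\sqrt{T}$ cannot simultaneously tolerate corruption budgets much larger than that regret. A minor detail to confirm in a full write-up is admissibility of the adversary: here $c_t$ is a deterministic function of $A_t$, so the requirement that the attack be chosen after observing $A_t$ (and $\epsilon_t$) is trivially satisfied, and the argument is insensitive to whether the algorithm's internal randomness is private or visible to the adversary.
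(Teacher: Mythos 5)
Your proposal is correct and follows essentially the same two-instance indistinguishability argument as the paper: the uncorrupted regret guarantee caps the pull count of the gap-$\Delta$ arm at $\bar{R}^{(0)}_T/\Delta$, a per-pull corruption of $2\Delta$ erases the difference between the two instances within budget $C = 2\bar{R}^{(0)}_T$, and the surviving pulls of the now-suboptimal arm force $\Omega(T)$ regret. The only (cosmetic) difference is that you keep the arm set fixed and flip $\theta$, whereas the paper keeps $\theta$ fixed and changes one arm's feature vector; your variant is if anything slightly cleaner since the learner's known input (the arm set) is identical across the two instances.
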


The proof is given in Appendix \ref{sec:pf_lower}.  While we focus on the simplest case $d = k = 2$, the proof can also be adapted to more general choices.  

{\bf Discussion.} Consider the general goal of attaining a regret upper bound of the form
\begin{equation}
    R_T \le \bar{R}^{(0)}_T + f(C) \log T,
\end{equation}
for some $f(\cdot)$ satisfying $f(0) = 0$.  Here we let the second term contain a $\log T$ factor in accordance with our upper bounds, but the following discussion still applies with only minor modifications when the $\log T$ factor is changed to ${\rm poly}(\log T)$ or similar.

At first glance, it appears that $f(C)$ should ideally be linear in $C$, and $\bar{R}^{(0)}_T$ should ideally be an order-optimal regret bound for the non-corrupted setting. 
However, Theorem \ref{thm:lower_C} shows that we cannot have both terms exhibiting their ``ideal'' behavior simultaneously.  To see this, note that the ideal uncorrupted regret bound behaves as $\bar{R}^{(0)}_T = \tilde{\Theta}(\sqrt{T})$ (for fixed $d$, $k$, and $\delta$) \cite{dani2008stochastic,lattimore2018bandit}.  Then, to be consistent with Theorem \ref{thm:lower_C}, we require $f(C) \log T = \tilde{\Omega}(T)$ for $C = \Theta(\sqrt{T})$, and hence $f(C) = \tilde{\Omega}\big( \frac{C^2}{\log C} \big)$.

On the other hand, it may be possible remove the $C^2$ term from $f(C)$ (i.e., improve robustness), and to attain sublinear regret for certain cases with $C = \Omega(\sqrt{T})$, if one is willing to pay the price of a worse uncorrupted regret bound. This idea is left for future work.

\subsection{Summary of Upper vs.~Lower Bounds}

We conclude this section with a short summary of how the upper and lower bounds compare in various scaling regimes of $C$ and $T$, when the other parameters $(d,k,\delta)$ are held fixed:
\begin{itemize}[leftmargin=5ex,itemsep=0ex,topsep=0.25ex]
    \item When $C$ is known, the optimal regret is between $\Omega(\sqrt{dT} + C\big)$ and $\tilde{O}(\sqrt{dT} + C \log T\big)$ for any $C \le T$;
    \item For $C = O\big( \frac{T^{1/4}}{\log T} \big)$, the optimal regret scales as $\tilde{\Theta}\big(\sqrt{dT}\big)$ for both known and unknown $C$;
    \item For $C = \Omega( \frac{\sqrt{T}}{\log T} )$, we do not provide any sublinear regret bound for when $C$ is unknown, but Theorem \ref{thm:lower_C} shows that, in fact, such a bound cannot be expected for $C = \Omega( \sqrt T )$ unless the uncorrupted regret increases significantly.
    \item For $C$ in between the previous two dot points (e.g., $C = \Theta(T^a)$ with $\frac{1}{4} < a < \frac{1}{2}$), our upper bound for unknown $C$ exhibits strictly higher scaling than the uncorrupted regret (due to the $C^2$ term), and it remains open as to what extent this is unavoidable.
\end{itemize} 


\vspace*{-1ex}
\section{Greedy Algorithm in the Contextual Setting} \label{sec:contextual}
\vspace*{-1ex}

In this section, we consider a $k$-arm linear contextual bandit problem with a single unknown $d$-dimensional parameter vector $\theta \in \RR^d$ (e.g., see \cite{kannan_smoothed_2018}). In each round $t$, contexts $a_{1,t}, \dots, a_{k,t}$ are presented to the learner, each in $\RR^d$ and associated to one action. The learner then chooses an action indexed by $I_t \in \lbrace 1,\dots, k \rbrace$ and observes the corrupted reward  
\begin{equation}
  Y_t = \langle \theta, a_{I_t,t} \rangle + \epsilon_t + c_t(a_{I_t,t}),
\end{equation}
where the same assumptions from Section~\ref{sec:problem_setup} hold for both $(\epsilon_t)_{t=1}^T$ and $c_t(\cdot)$ (with attack budget $C$), and $\| \theta \|_2 \leq 1$.  Similar to \eqref{eq:regret_def}, the cumulative regret is 
%
$R_T = \sum_{t=1}^T \max_{i \in \lbrace 1, \dots, k \rbrace} \langle \theta, a_{i,t} - a_{I_t,t} \rangle.$

In general, the introduction of contexts may significantly complicate the problem, with algorithms such as the one in Section \ref{sec:algorithm_and_regret_bound} being difficult to extend, particularly with unknown $C$.  However, perhaps surprisingly, a line of recent works has demonstrated that simple exploration-free greedy methods can provably work well (in the non-corrupted setting) under mildassumptions on the contexts.  These assumptions amount to kinds of {\em context diversity} \cite{bastani_mostly_nodate,kannan_smoothed_2018,raghavan_externalities_2018} ensuring that the collected samples are sufficiently informative for learning $\theta$ accurately.

Most related to this paper is \cite{kannan_smoothed_2018}, who analyze the greedy algorithm in the case that arbitrary context vectors undergo small random perturbations.   Motivated by these results, we investigate the performance of the greedy algorithm under the same assumption on the contexts, but with the addition of adversarial attacks.  Our main finding is that the context diversity assumption not only removes the need for explicit exploration \cite{kannan_smoothed_2018}, but also automatically inherits near-optimal robustness to adversarial attacks, with no need to know the attack budget $C$.


\textbf{Context generation.} In more detail, the setup of \cite{kannan_smoothed_2018} is introduced as follows:  An arbitrary tuple $\mu_{1,t}, \dots ,\mu_{K,t}$ of mean context vectors is given (possibly selected by an adaptive adversary based on the history of contexts, actions, and rewards), such that $\|\mu_{i,t}\|_2\leq 1$ for all $i,t$. For every available action, the context vector is then generated as $a_{i,t} = \mu_{i,t} + \xi_{i,t}$, where the random perturbation vectors $\xi_{i,t}$ are drawn independently 
from some zero-mean distributions $D_{1,t}, \dots, D_{K,t}$. We consider perturbations that are {\em $(r,\delta)$-bounded} for some $r\leq 1$ according to the following definition \cite{kannan_smoothed_2018}:
\begin{equation}
  \mathbb{P} [ \| \xi_{i,t}\|_{\infty} \leq r \text{ for all arms $i$ and rounds $t$}] \geq 1 - \delta.
\end{equation}

As outlined above, we are interested in the diversity of samples collected by the greedy algorithm (defined below). The main idea is that the observed contexts should cover all directions in order to enable good estimation of the latent vector $\theta$. Consequently, we make use of the notion of diversity from \cite{kannan_smoothed_2018}, which takes into account that the learner observes rewards for contexts that are selected greedily and thus only observes a conditional distribution of contexts. Specifically, following \cite{kannan_smoothed_2018}, a distribution $D$ is called {\em $(r, \lambda_0)$-diverse} with parameters $r > 0$ and $\lambda_0 > 0$ if, for $a = \mu + \xi$ with $\xi \sim D$ and any $\mu \in \RR^d$, it holds for all $\htheta \in \RR^d$ and $\hat{b} \in \RR$ satisfying $\hat{b} \leq r \|\htheta\|_2$ that
\begin{equation}
  \lambda_{\min} \Big( \mathbb{E}_{\xi \sim D}\big[aa^{T} \big|\; \htheta^T \xi \geq \hat{b} \big] \Big) \geq \lambda_0. \label{eq:diverse}
\end{equation}
The overall perturbations are {\em $(r, \lambda_{0})$-diverse} if the distributions $D_{i,t}$ are $(r, \lambda_{0})$-diverse for all $i$ and $t$.

This diversity condition is the main component in \cite{kannan_smoothed_2018} for proving that the minimum eigenvalue of the empirical covariance matrix $\lambda_{\min}(\sum_{\tau=1}^{t} a_{I_{\tau},\tau} a_{I_{\tau},\tau}^{T})$ grows linearly with $t$.  In Lemma~\ref{lemma:estimator_confidence_greedy} (Appendix \ref{sec:pf_context}), we demonstrate that this is the main quantity that has an impact on the accuracy of the estimator of $\theta$, and in turn, on the regret bounds in the corrupted setting.

\textbf{Greedy algorithm.}
In round $t$, the greedy algorithm (see Algorithm~\ref{alg:cg}) receives a set of contexts $\lbrace a_{1,t}, \dots, a_{k,t} \rbrace$, and chooses the best action according to the least squares estimate of $\theta$:
\begin{gather} \label{eq:greedy}
  I_t = \argmax_{i \in \lbrace 1, \dots, K \rbrace} \langle \hat{\theta}_t, a_{i,t} \rangle, 
    \\ \htheta_{t} = \argmin_{\theta'} \sum_{\tau=1}^{t-1} (\langle \theta', a_{I_{\tau},\tau} \rangle - Y_\tau)^2.
\end{gather}





Our regret bound for this setup is stated as follows, and proved in Appendix \ref{sec:pf_context}.

\begin{theorem} \label{thm:greedy_regret_bound_thm}
    Suppose that $\|a_{i,t}\|_2 \leq 1$ for all $i,t$, the random context perturbations are ($r$, $1/T$)-bounded and $(r,\lambda_0)$-diverse with $r \leq 1$, the reward noise is $1$-sub-Gaussian, and the attack budget is $C \geq 0$. Then with probability at least $1 - \delta$, the greedy algorithm has regret bounded by
  \begin{align}
    &R_T = O \bigg(\tfrac{1}{\lambda_0} \Big(\sqrt{dT \log \big(\tfrac{dT}{\delta}\big)} + C \log T + \log\big(\tfrac{dT}{\delta}\big)\Big)  \nonumber \\
    &\hspace*{5cm} + \sqrt{\log(\tfrac{k}{\delta})} \bigg).
  \end{align}
\end{theorem}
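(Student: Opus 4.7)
The plan is to establish a high-probability confidence bound on the least-squares estimator $\hat{\theta}_t$ that explicitly accounts for the corruption budget $C$, and then to use the greedy action rule together with the diversity of contexts to convert the estimation error into a per-round regret bound that is summable.

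First, I would decompose the estimator error via the normal equations. Writing $V_t = \sum_{\tau=1}^{t-1} a_{I_\tau,\tau} a_{I_\tau,\tau}^T$, we have
\[
\hat{\theta}_t - \theta \;=\; V_t^{-1}\Big(\sum_{\tau=1}^{t-1} a_{I_\tau,\tau}\epsilon_\tau + \sum_{\tau=1}^{t-1} a_{I_\tau,\tau} c_\tau(a_{I_\tau,\tau})\Big).
\]
The first (noise) sum is a vector martingale in the $1$-sub-Gaussian noise, and a standard self-normalized / vector Azuma argument yields $\|\sum_\tau a_{I_\tau,\tau}\epsilon_\tau\|_2 = \tilde{O}(\sqrt{d\, t \log(dT/\delta)})$ uniformly in $t$. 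The second (corruption) sum is bounded deterministically by $\sum_\tau |c_\tau|\, \|a_{I_\tau,\tau}\|_2 \le C$ using $\|a_{I_\tau,\tau}\|_2 \le 1$. Dividing through by $\lambda_{\min}(V_t)$ yields, for any $a$ with $\|a\|_2 \le 1$,
\[
|\langle \hat{\theta}_t - \theta,\, a \rangle| \;\le\; \frac{\tilde{O}\big(\sqrt{d\,t\log(dT/\delta)}\big) + C}{\lambda_{\min}(V_t)},
\]
which is essentially the content of the referenced Lemma~\ref{lemma:estimator_confidence_greedy}.

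Second, I would invoke the diversity machinery from \cite{kannan_smoothed_2018} to argue that $\lambda_{\min}(V_t) \ge \lambda_0 t / 2$ for every $t \ge t_0$ with high probability, where $t_0 = O(\log(dT/\delta)/\lambda_0)$ is a short burn-in phase. The subtlety is that the arms chosen by the greedy rule are not i.i.d.: the conditional diversity definition in \eqref{eq:diverse} is tailored precisely for this case. Indeed, the event that arm $i$ is played corresponds to $\hat\theta_t^T a_{i,t} \ge \hat\theta_t^T a_{j,t}$ for every $j \ne i$, which, after conditioning on the mean contexts and on $\hat\theta_t$, becomes exactly a halfspace event of the form $\hat\theta_t^T \xi_{i,t} \ge \hat b_t$. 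Thus the conditional second-moment matrix of $a_{I_\tau,\tau}$ has minimum eigenvalue at least $\lambda_0$, and a matrix Freedman inequality applied to the martingale differences $a_{I_\tau,\tau} a_{I_\tau,\tau}^T - \mathbb{E}[a_{I_\tau,\tau} a_{I_\tau,\tau}^T \mid \mathcal{F}_{\tau-1}]$ upgrades this pointwise bound to a uniform-in-$t$ lower bound on $\lambda_{\min}(V_t)$.

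Finally, the greedy rule gives $\langle \hat\theta_t, a_{I_t,t}\rangle \ge \langle \hat\theta_t, a^*_t\rangle$, where $a^*_t$ denotes the optimal arm at round $t$, so the instantaneous regret satisfies
\[
\langle \theta,\, a^*_t - a_{I_t,t}\rangle \;\le\; \langle \theta - \hat\theta_t,\, a^*_t - a_{I_t,t}\rangle \;\le\; 2 \max_{a:\|a\|_2 \le 1+r}|\langle \hat\theta_t - \theta,\, a\rangle|.
\]
Combining with the first two steps yields, for $t \ge t_0$, an instantaneous regret bound of $\big(\tilde{O}(\sqrt{d \log(dT/\delta)/t}) + O(C/t)\big)/\lambda_0$. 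Summing over $t = t_0,\ldots,T$ produces a noise contribution of $\tilde{O}(\sqrt{dT\log(dT/\delta)}/\lambda_0)$ and a corruption contribution of $O(C\log T/\lambda_0)$; the burn-in rounds contribute an additive $O(t_0) = O(\log(dT/\delta)/\lambda_0)$ using the trivial bound $r_t \le 2$, and the residual $\sqrt{\log(k/\delta)}$ term absorbs a union bound over the $k$ arms needed when converting the estimator bound into a per-step optimality guarantee. The main obstacle is the second step: since the greedy algorithm selects arms adaptively based on the biased, corrupted estimator $\hat\theta_\tau$, the sequence $(a_{I_\tau,\tau} a_{I_\tau,\tau}^T)$ is far from independent, and verifying that the conditional diversity condition \eqref{eq:diverse} produces a clean martingale lower bound on $\lambda_{\min}(V_t)$---uniformly over all $\hat\theta_\tau$ and every admissible corruption sequence---is the technical heart of the argument.
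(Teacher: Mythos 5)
Your proposal is correct and follows essentially the same route as the paper's proof: the same least-squares error decomposition into a sub-Gaussian noise term of order $\sqrt{dt\log(td/\delta)}$ plus a deterministic corruption term bounded by $C$, divided by $\lambda_{\min}(V_t)$; the same appeal to the diversity condition of \cite{kannan_smoothed_2018} to get $\lambda_{\min}(V_t) = \Omega(\lambda_0 t)$ after a burn-in $t_0$ (the paper simply cites their Lemma B.1, noting it is unaffected by corruption since it depends only on the context perturbations); and the same summation yielding the $\sqrt{dT\log(dT/\delta)}/\lambda_0$, $C\log T/\lambda_0$, and burn-in terms. The step you flag as the technical heart is exactly the one the paper outsources to the cited lemma.
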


Under the mild assumptions $\delta = e^{-O(dT)}$ and $\frac{k}{\delta} = e^{O(dT)}$, this bound simplifies to 
\begin{equation} \label{eq:nicer_version}
    R_T = O \bigg(\tfrac{1}{\lambda_0} \Big(\sqrt{dT \log \big(\tfrac{Td}{\delta}\big)} + C \log T\Big) \bigg).
\end{equation}
In addition, when $C = 0$, Theorem \ref{thm:greedy_regret_bound_thm} reduces to the result of \cite{kannan_smoothed_2018}.  The additional $\frac{1}{\lambda _0}C \log T$ term is essentially optimal when $\lambda_0 = \Theta(1)$, since a simple argument from \cite{lykouris2018stochastic} gives an $\Omega(C)$ lower bound (see Appendix \ref{sec:pf_lower}). In Corollary~\ref{corr:greedy} (Appendix \ref{sec:pf_context}), we specialize Theorem \ref{thm:greedy_regret_bound_thm} to the case that the perturbations are Gaussian, i.e., every $\xi_{i,t}$ is drawn independently from $\mathcal{N}(0,\eta^2 I)$, and show that the greedy algorithm has sublinear regret in the low-$\eta$ regime.

Theorem \ref{thm:greedy_regret_bound_thm} indicates that the greedy algorithm can be robust despite being extremely simple, having no explicit built-in mechanism for combating robustness, and having no knowledge $C$.  A caveat to this is the $\frac{1}{\lambda_0}$ dependence, indicating that the regret can increase significantly when the contexts are not sufficiently diverse.

\begin{figure*}
    \centering
    \includegraphics[width=0.245\textwidth]{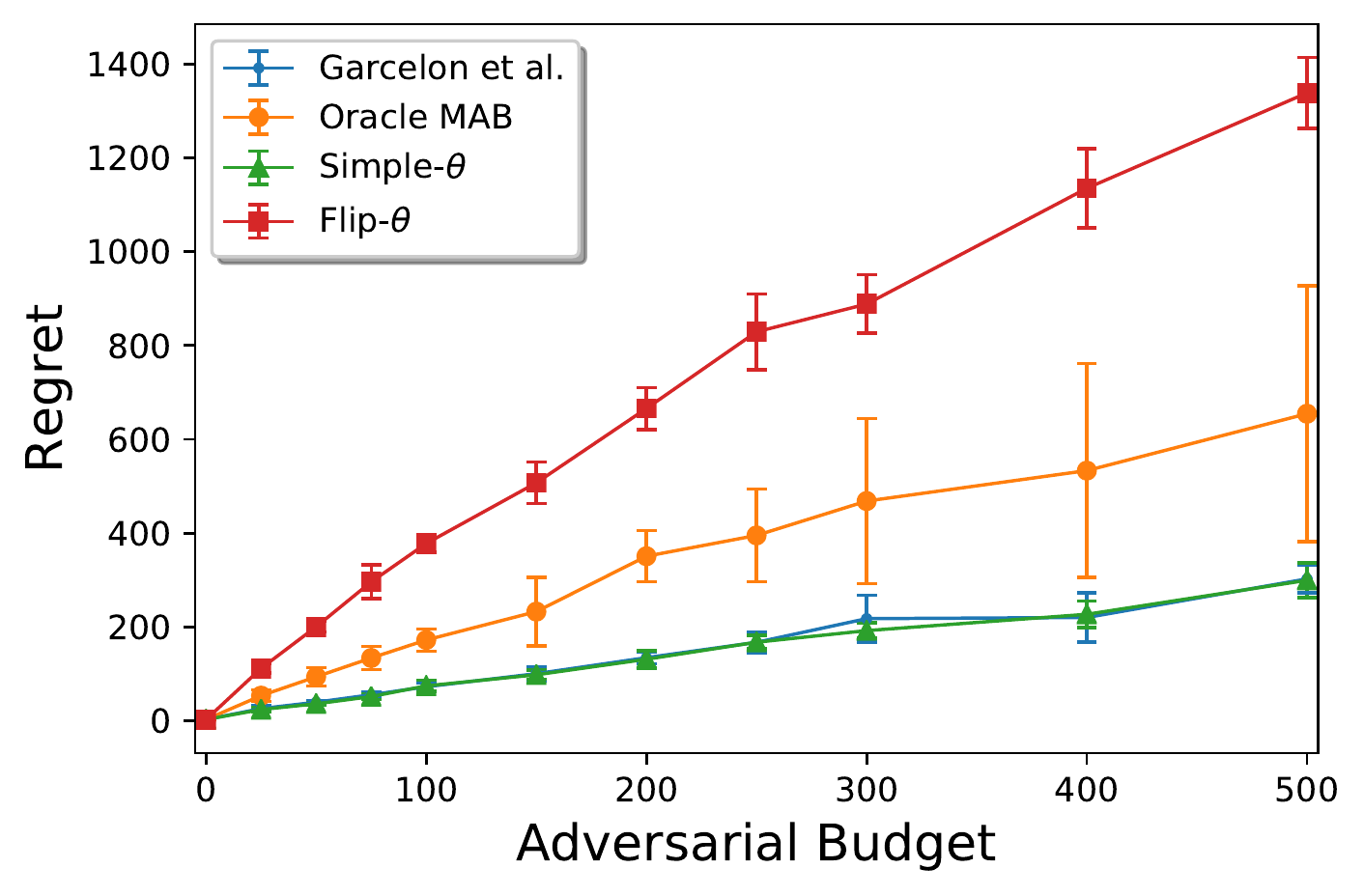}
    \includegraphics[width=0.245\textwidth]{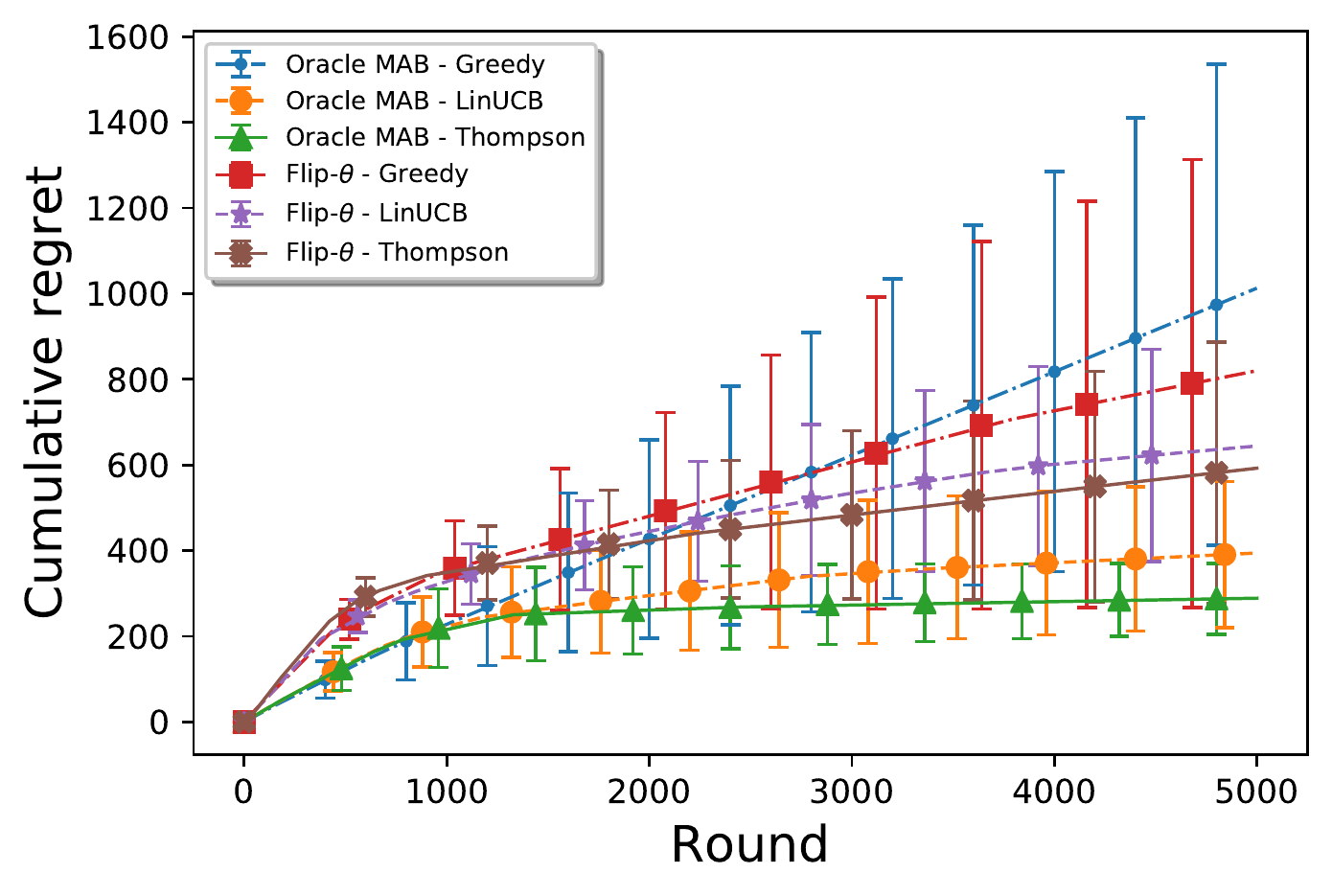}
    \includegraphics[width=0.245\textwidth]{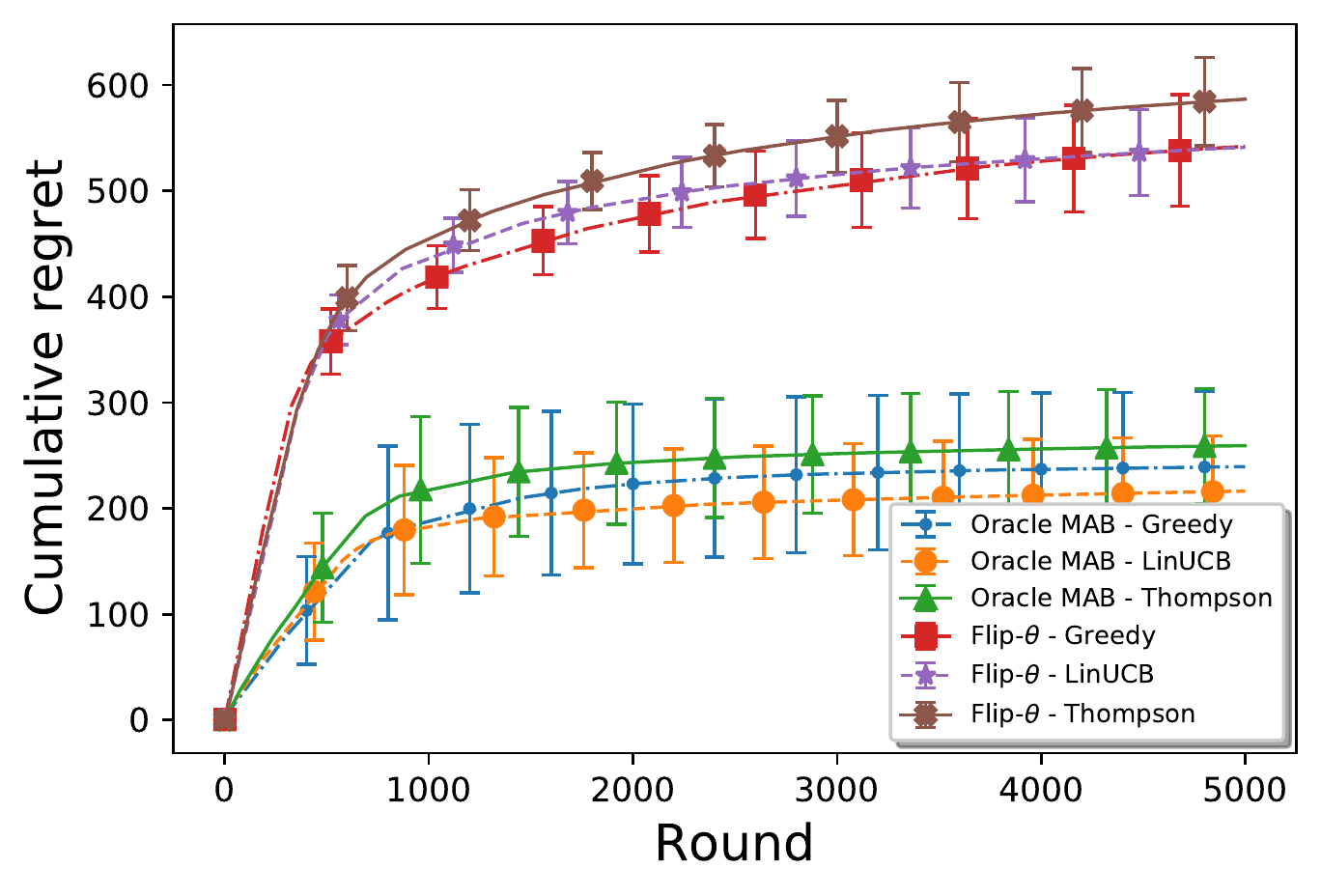}
    \includegraphics[width=0.245\textwidth]{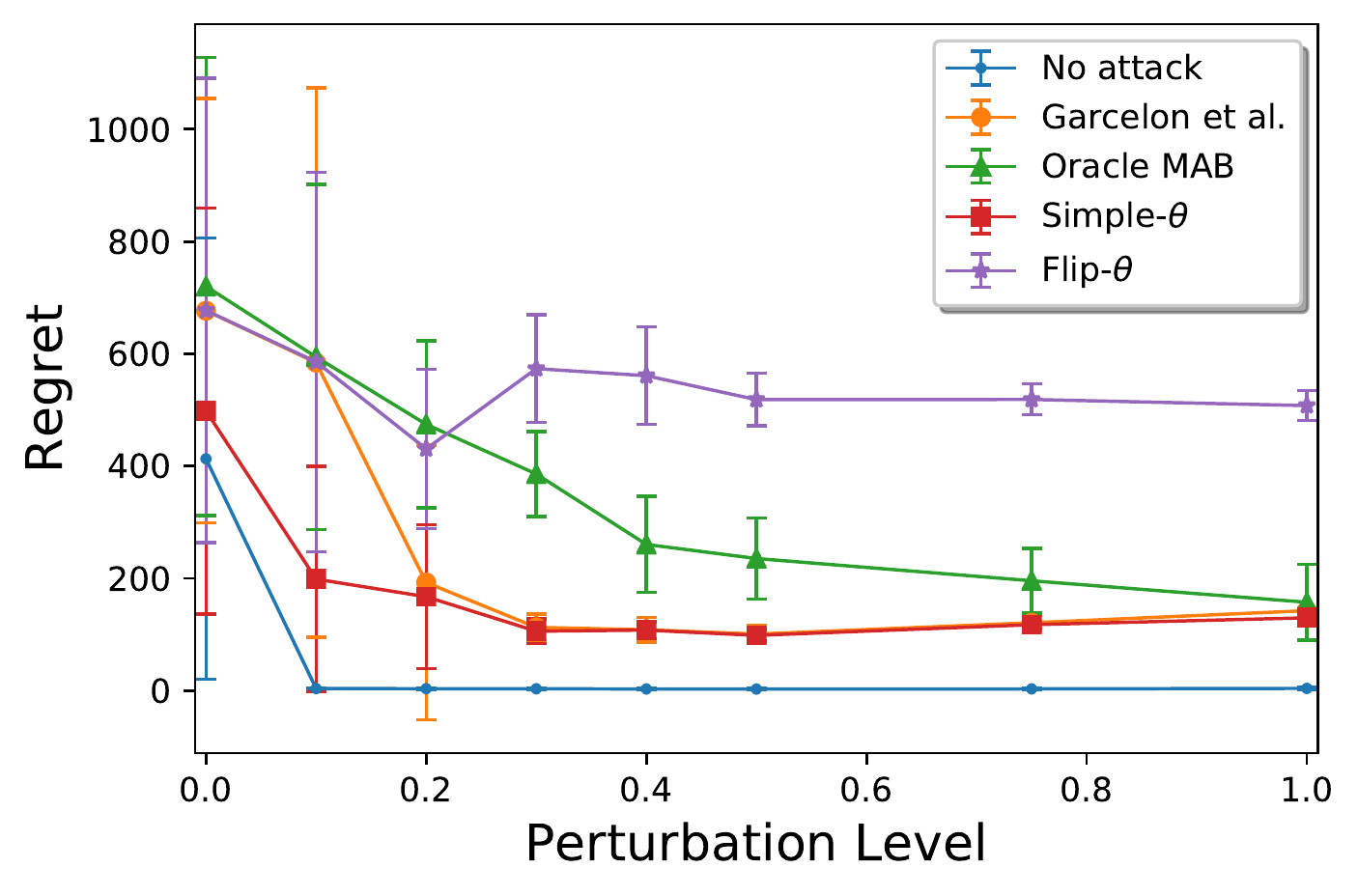}
    \caption{Contextual synthetic experiment: (Left) Regret at time $T = 3500$ as a function of $C$ with $\eta = 0.5$; (Middle Two) Regret as a function of time with $\eta = 0$ and $\eta = 0.5$; (Right) Performance of Greedy at time $T=3500$ with $C = 150$ and varying $\eta$.  }
    \label{fig:attacks}
    \vspace*{-2ex}  
\end{figure*} 

\begin{figure*}
    \centering
    \includegraphics[width=0.245\textwidth]{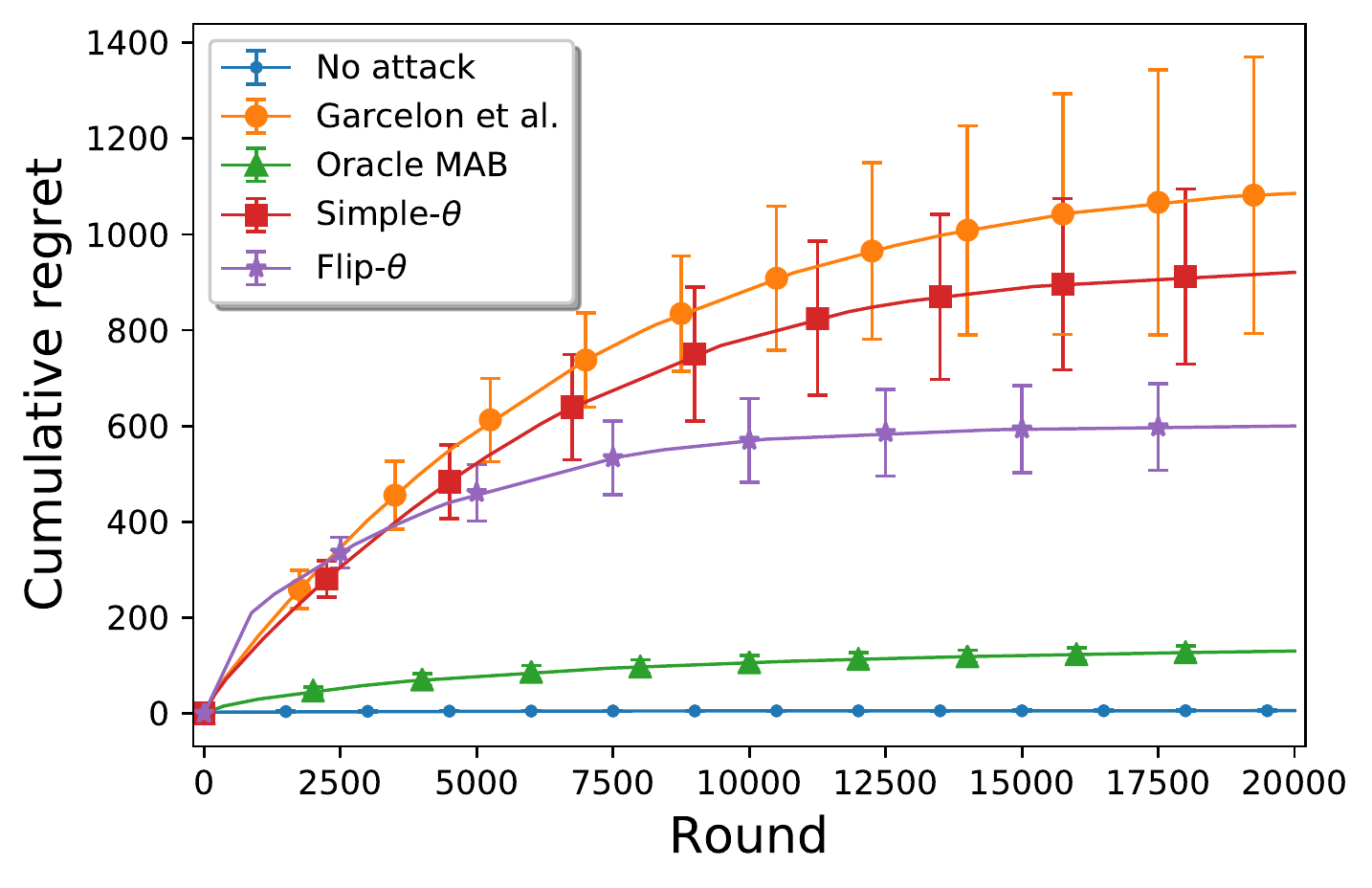}
    \includegraphics[width=0.245\textwidth]{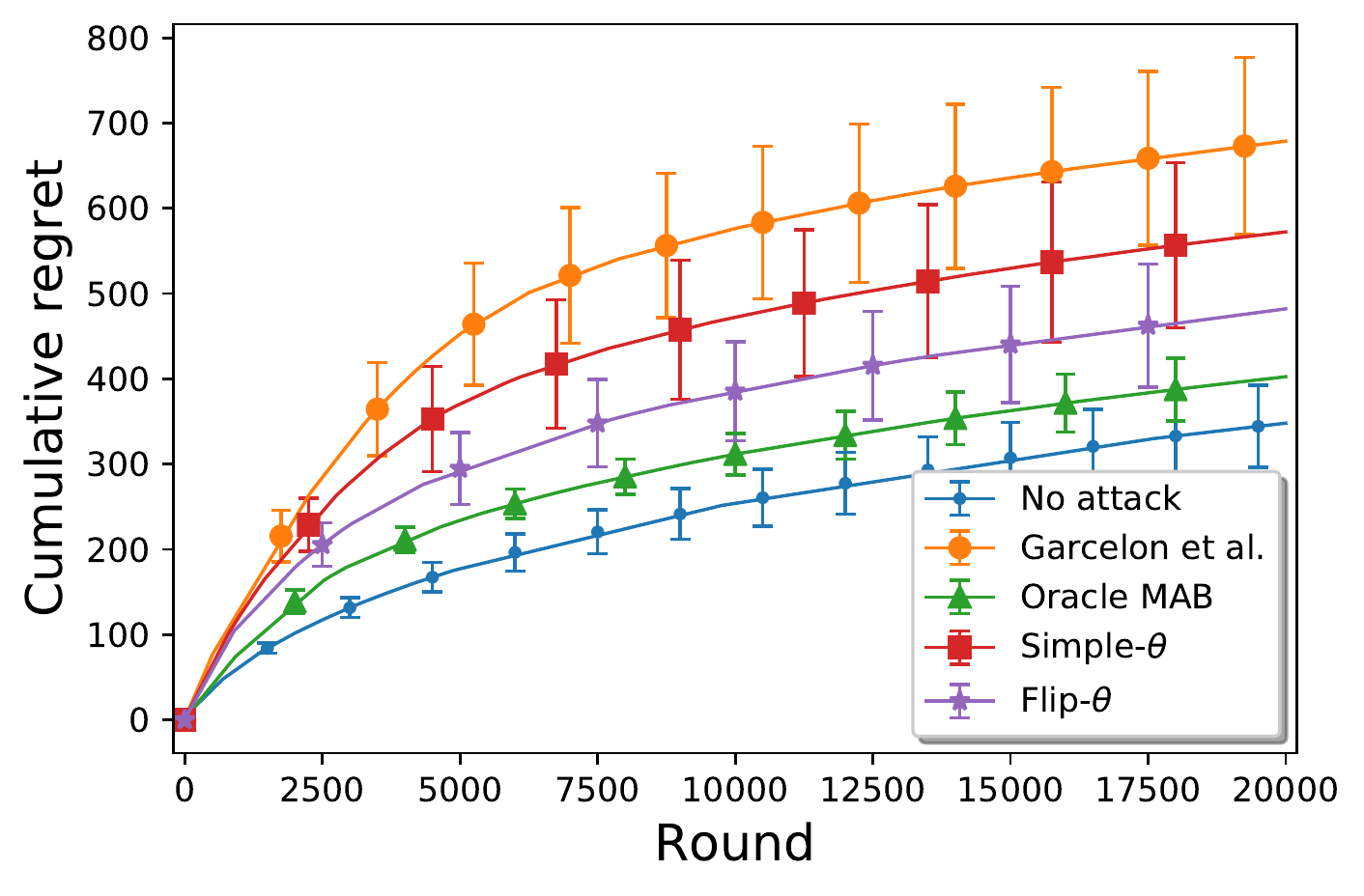}
    \includegraphics[width=0.245\textwidth]{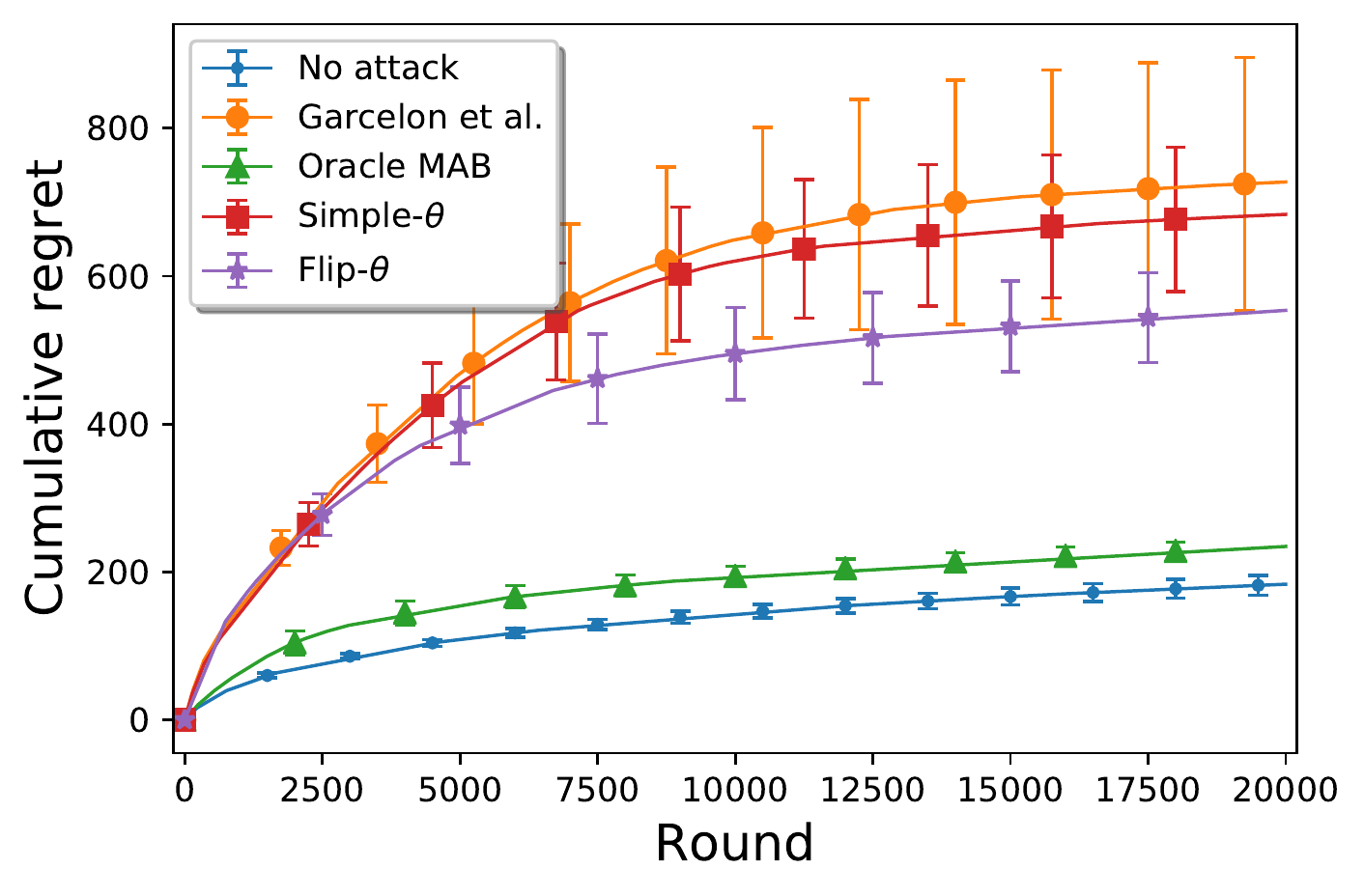}
    \includegraphics[width=0.245\textwidth]{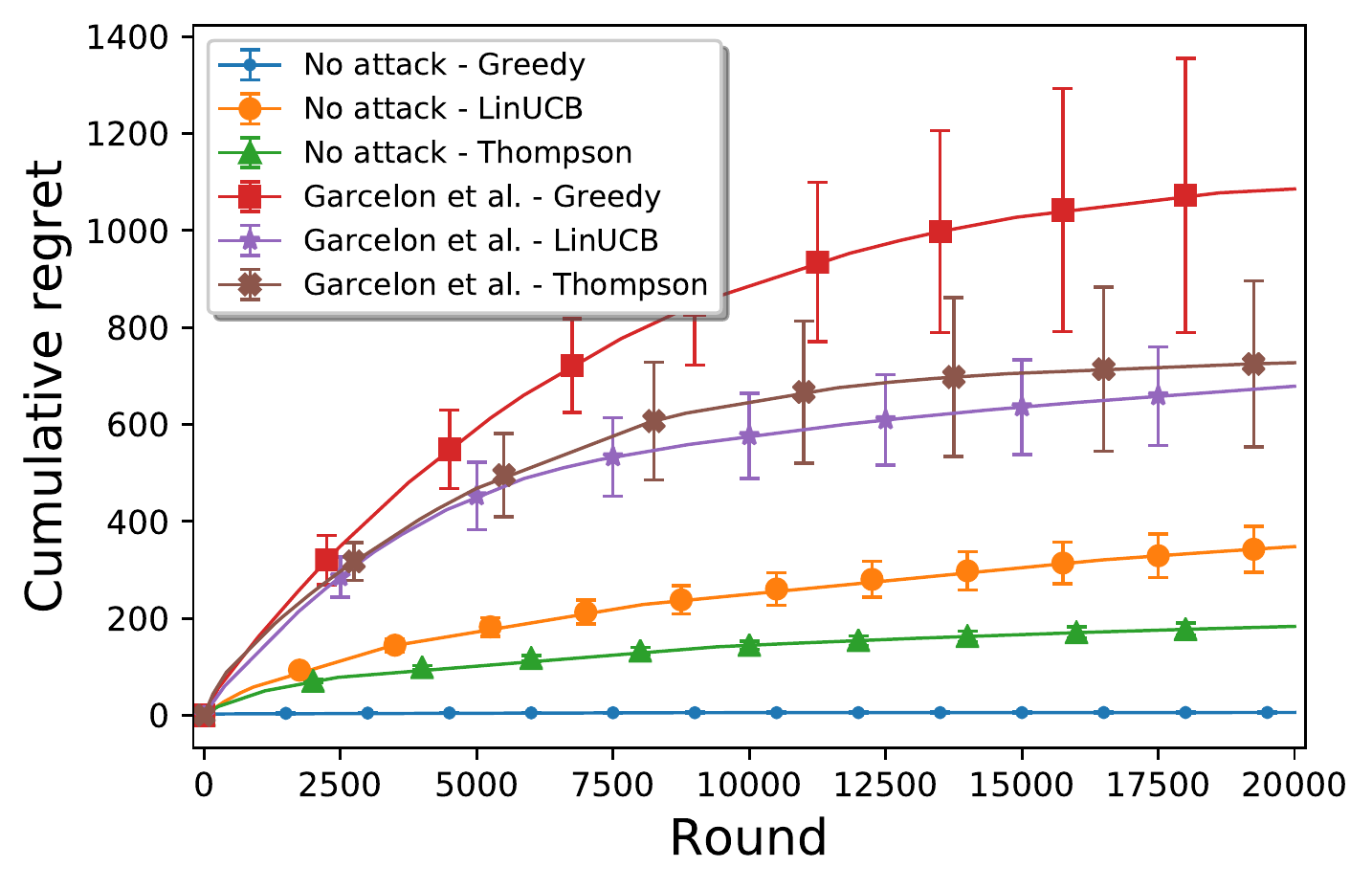}
    \caption{MovieLens experiment: (Left 3) Regret as a function of time with $C = 150$ for Greedy, LinUCB, and Thompson sampling; (Right) Regret of all algorithms under the Garcelon {\em et al.}~attack.}
    \label{fig:regret_time}
    \vspace*{-3ex}  
\end{figure*} 

\vspace*{-1ex}
\section{Experiments}
\vspace*{-1ex}

In this section, we evaluate the performance of the algorithms studied in this paper, along with the baselines LinUCB \cite{li2010contextual,lattimore2018bandit} and Thompson sampling \cite{agrawal2013thompson}.\footnote{We use LinUCB as described in \cite[Sec.~19.2]{lattimore2018bandit} with least-squares regularization parameter $\lambda = 1$ and confidence parameter $\delta = 0.1$, and Thompson sampling \cite{agrawal2013thompson} uses an i.i.d.~Gaussian prior with variance $0.5$.}  We consider both the robust PE algorithm and the contextual greedy algorithm, starting with the latter. 

\vspace*{-1ex}
\subsection{Choices of Attacks} \label{sec:attacks_main}
\vspace*{-1ex}

We consider the following attack algorithms, each depending on a target arm $a_{\rm target}$ and/or a target parameter vector $\theta_{\rm target}$.  These are briefly outlined as follows, with more details in Appendix \ref{sec:attacks}:
\begin{itemize}
    \item {\bf Garcelon {\em et al.}~attack.} This attack is a minor modification of that of \cite{garcelon_adversarial_2020}, leaving pulls from $a_{\rm target}$ uncorrupted, while pushing all other rewards down to the minimum value.
    \item {\bf Oracle MAB attack.} This attack from \cite{jun_adversarial_nodate} pushes the reward of any $a \ne a_{\rm target}$ to some margin $\epsilon_0$ below that of $a_{\rm target}$, or leaves the reward unchanged if such a margin is already met.
    \item {\bf Simple $\theta$-based attack.} This attack acts in the same way as that of Garcelon {\em et al.}, but with $a_{\rm target}$ always chosen as $\argmax_{a}\langle a, \theta_{\rm target}\rangle$.  This is equivalent to that of \cite{garcelon_adversarial_2020} in the non-contextual setting, but otherwise may differ due to $a_{\rm target}$ varying with time.
    \item {\bf Flip-$\theta$ attack.} This attack simply flips the reward from $\langle \theta, a \rangle$ to $\langle -\theta, a \rangle$.
\end{itemize}
Note that the terminology ``oracle'' refers to attacks that use knowledge of $\theta$, which we assume to be permitted in this paper (the Flip-$\theta$ attack also falls in this category).  We set $a_{\rm target}$ to be the first arm, which will have the same effect as choosing any fixed arm (since our arm feature vectors will be generated in a symmetric manner).  In addition, we let $\theta_{\rm target}$ be uniform on the unit sphere in the simple $\theta$-based attack, and set $\epsilon_0 = 0.01$ in the Oracle MAB attack. 

\vspace*{-1ex} 
\subsection{Contextual Setting} \label{sec:exp_context}
\vspace*{-1ex}

\begin{figure*}
    \centering
    \includegraphics[width=0.32\textwidth]{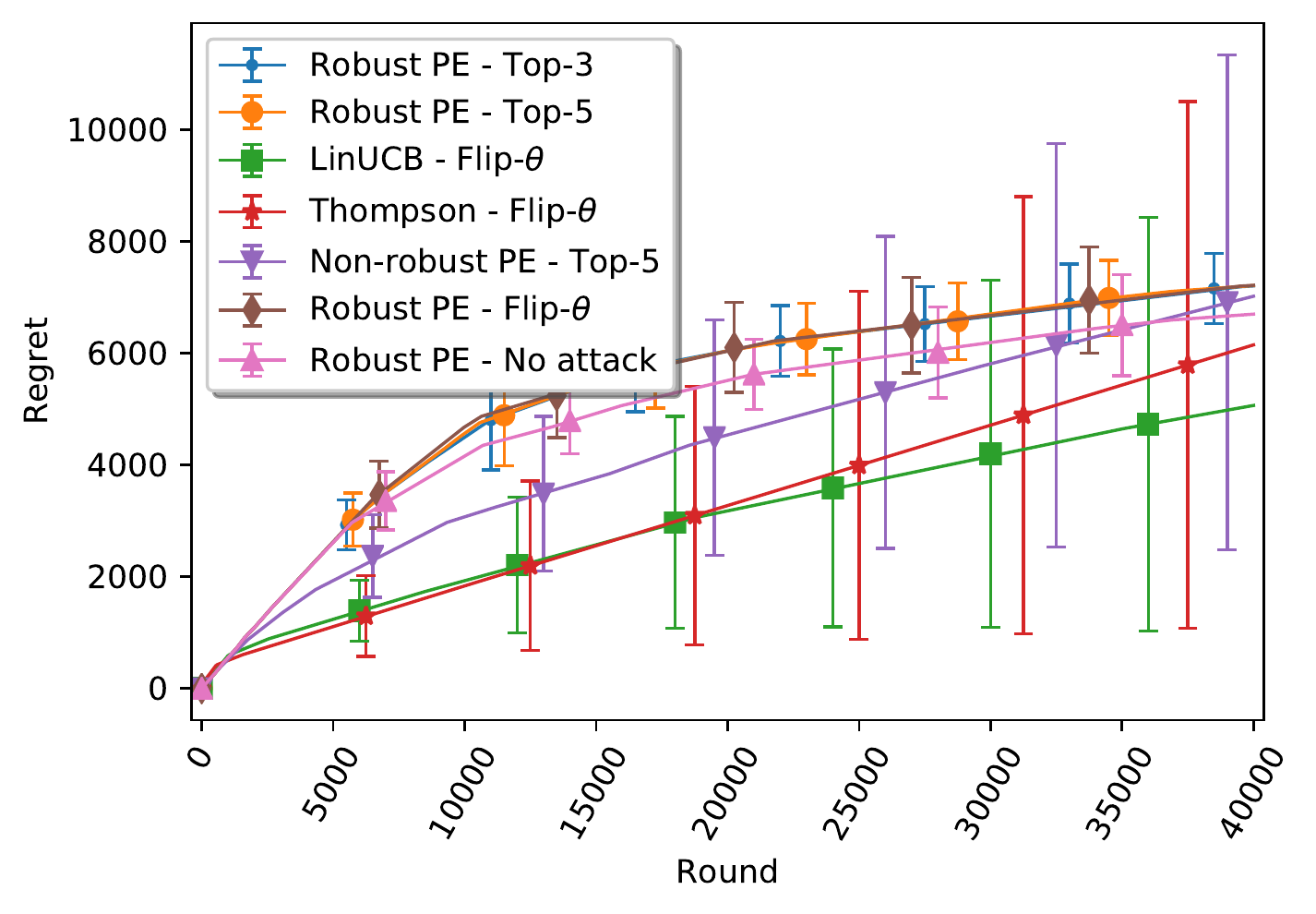}
    \includegraphics[width=0.32\textwidth]{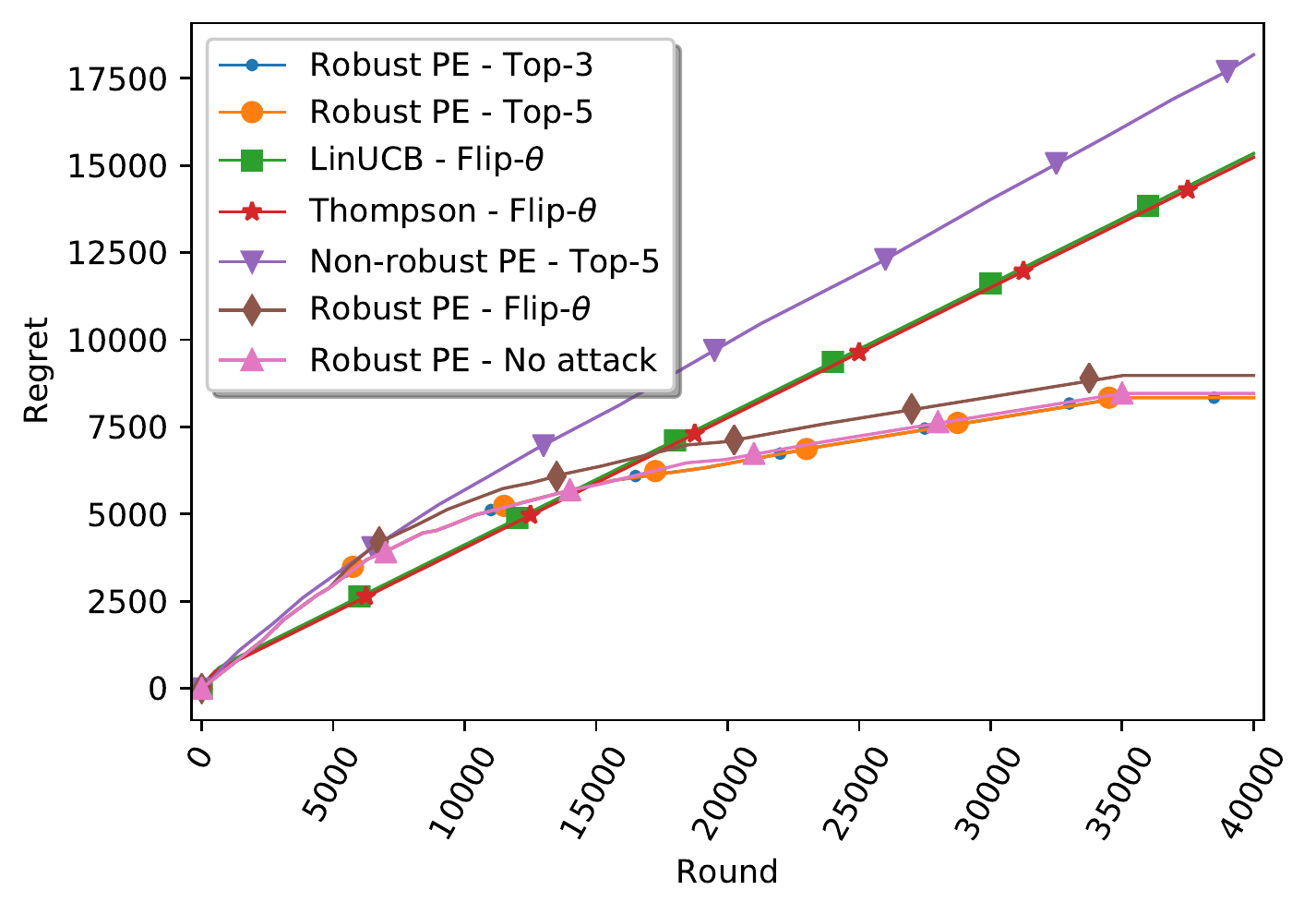}
    \includegraphics[width=0.32\textwidth]{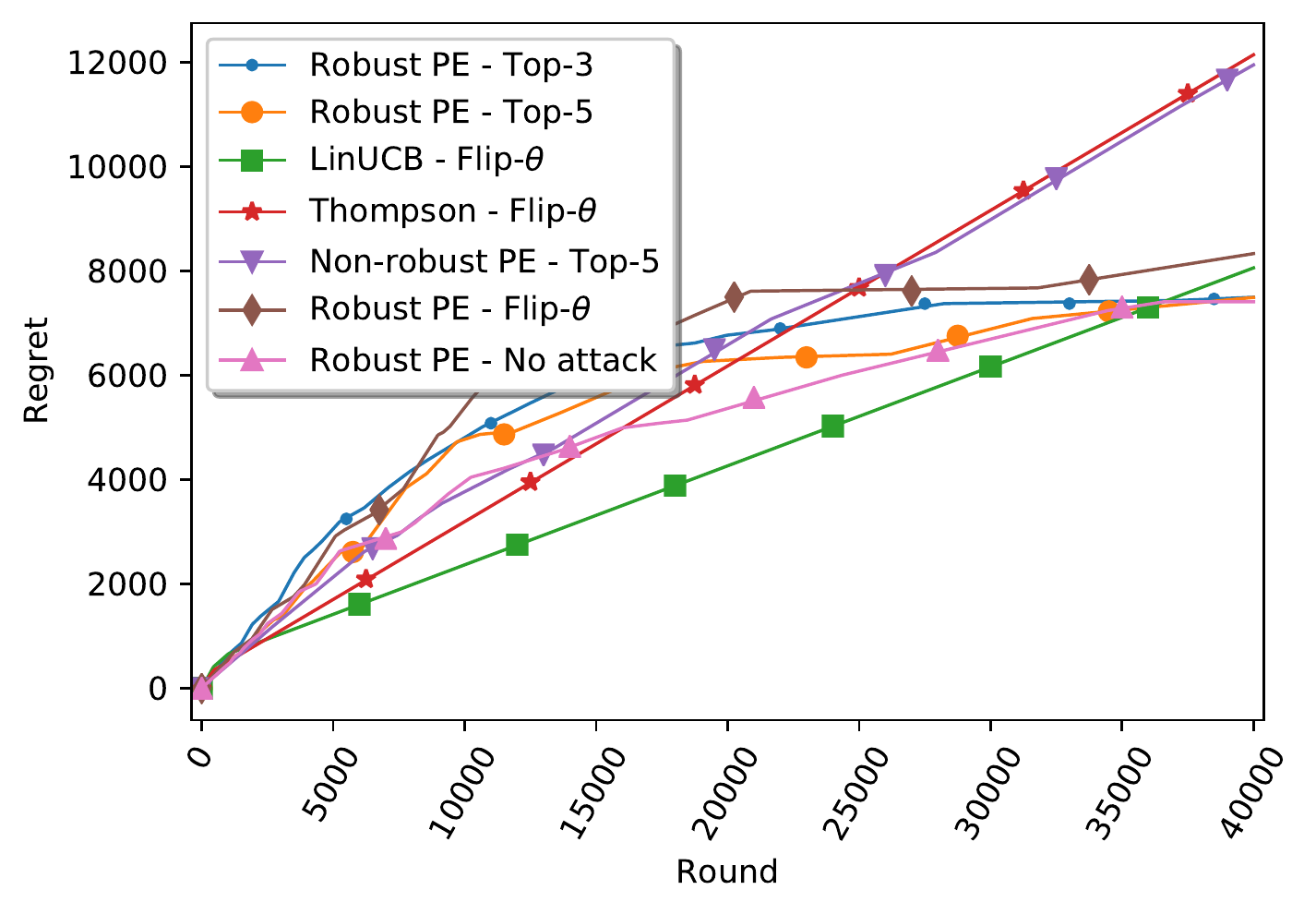}
    \caption{Non-contextual synthetic experiment with 10 trials: (Left) Average regret as a function of time; (Middle) Worst run among 10; (Right) Second-worst run among 10.}
    \label{fig:noncontext}
    \vspace*{-2.5ex}  
\end{figure*} 

{\bf Synthetic Experiment.} In this experiment, we consider the contextual case with contexts having uniform entries and Gaussian perturbations with variance $\eta^2 > 0$; see Appendix \ref{sec:more_exp} for the full details.  We consider $k=25$ arms, $T = 5000$ rounds, and attack budget $C = 50$. At each time instant, we plot the cumulative regret averaged over $10$ trials, and error bars indicate one standard deviation.  In Appendix \ref{sec:more_exp}, we provide analogous plots and discussion when $C=150$.

In Figure \ref{fig:attacks} (Left), we plot the regret of Greedy at $T=3500$ as a function of $C$ with $\eta = 0.5$.  We observe a linear increase, which is in agreement with our theory.  Analogous plots for LinUCB, Thompson sampling, and $\eta \in \{0.2,0.5\}$ can be found in Appendix \ref{sec:more_exp}.  The middle two plots in  Figure \ref{fig:attacks} show the regret as a function of time with the two most effective attacks, with $\eta = 0$ and $\eta = 0.5$.  We see that the regret curves are still increasing linearly under the Flip-$\theta$ attack by time $T = 5000$ when $\eta = 0$, whereas they are nearly flat when $\eta = 0.5$.  While our theory only supports the robustness of Greedy, these experiments suggest that LinUCB and Thompson sampling may also enjoy similar robustness under context diversity.  Finally, Figure \ref{fig:attacks} (Right) plots the regret of Greedy at $T = 3500$ as a function of $\eta$ when $C = 150$.  We observe that once $\eta$ moves past a certain level, the performance remains fairly consistent, with a general (but not definitive) trend of decreasing regret.  The greatest difference is at $\eta = 0$, particularly when the standard deviation is considered.


{\bf MovieLens Experiment.} We use the MovieLens-100K dataset in a similar manner to \cite{bogunovic2018adv}; see Appendix \ref{sec:more_exp} for details.  In each trial, we select a uniformly random user and treat the 1682 movies as possible contexts.  At each time instant, $k = 30$ of these movies are chosen uniformly at random and presented as the contexts.  Hence, a subset of the movie vectors form the contexts, and a fixed user vector forms $\theta$.  We set $T = 20000$ and $C = 150$, and we plot the regret averaged over 10 trials (each corresponding to a different user).

In Figure \ref{fig:regret_time}, we plot the regret as a function of time, for Greedy, LinUCB, and Thompson sampling.  Despite the lack of explicit context perturbation in this experiment, we see that the algorithms are again able to recover from the attacks, suggesting that the various movies in the data set are sufficiently diverse.  On the other hand, we do not claim the attacks here to be optimal, and it is possible that stronger attacks may incur linear regret.  In Figure \ref{fig:regret_time} (Right), we plot all three algorithms under the strongest attack and under no attack.  We see that Greedy has very low regret when there is no attack, but has slightly higher regret when attacked.

\vspace*{-1ex}
\subsection{Non-Contextual Setting} \label{sec:exp_noncontext}
\vspace*{-1ex}

We now turn to experiments for the robust PE algorithm (Algorithm \ref{alg:cpe}), with some minor practical changes detailed in Appendix \ref{sec:more_exp}.  We use the above synthetic experimental setup with the context perturbations removed (i.e., $\eta = 0$), and with $d = 5$, $k=50$, $T = 40000$, and $C = 150$.  For comparison, we also include non-robust PE, which removes the second term in \eqref{eq:retain_arms_condition}.

For LinUCB, Thompson sampling, and non-robust PE, we continue to attack right from the start.  However, for robust PE, this is a poor attack strategy, since the algorithm initially uses a very stringent condition for elimination.  Instead, following insight from the proof of Theorem \ref{thm:regret_bound2}, we start the attack at the first epoch for which $\hat{C}_h < C$.
We consider the Flip-$\theta$ attack of Section \ref{sec:attacks_main}, as well as an additional {\em Top-$N$ attack} targeted at eliminating good arms: Whenever any of the top $N$ remaining arms are pulled, push the reward to $-1$.  We consider both $N=3$ and $N=5$.  We focus on the case of unknown $C$ here, and present similar plots for known $C$ in Appendix \ref{sec:more_exp}.

In Figure \ref{fig:noncontext} (Left), we see that the average regret of all algorithms is similar by the end of the time horizon; however, an inspection of the error bars reveals that this is not the full story.  In particular, the regret of LinUCB and Thompson sampling vary considerably depending on whether the attack was successful or not, whereas robust PE exhibits much lower variation.  To highlight this, we plot the regret from the worst and second-worst runs out of 10 (as measured at time $T$) in Figure \ref{fig:noncontext} (Middle) and Figure \ref{fig:noncontext} (Right).   In Appendix \ref{sec:more_exp}, we provide analogous plots in the case of 40 trials, showing the worst 4-out-of-40 runs and observing similar behavior to Figure \ref{fig:noncontext}.

We see that LinUCB and Thompson sampling visibly have linear regret, whereas the regret of robust PE flattens out by the end of the time horizon even for these worst-2-of-10 curves, indicating better {\em high-probability} behavior.  In contrast, these results suggest the possibility of algorithms with improved {\em finite-time} performance guarantees, which was not the focus of our~work.

\vspace*{-1ex}
\section{Conclusion}
\vspace*{-1ex}

We have considered the linear stochastic problem in the presence of adversarial attacks/corruptions. We provided novel algorithms in both the standard and contextual settings that are provably robust against such attacks.  We demonstrated near-optimal regret bounds in all cases, and to our knowledge, we are the first to do so in each case.
A possible direction for future work is to consider a setting in which both rewards and contexts can be altered by the adversary subject to a limited attack budget.

\vspace*{-1ex}
\section*{Acknowledgments}
\vspace*{-1ex}
This project has received funding from the European Research Council (ERC) under the European Unions Horizon 2020 research and innovation programme grant agreement No 815943 and ETH Z\"urich Postdoctoral Fellowship 19-2 FEL-47. J. Scarlett was supported by the Singapore National Research Foundation (NRF) under grant number R-252-000-A74-281.
\bibliographystyle{myIEEEtran}
\bibliography{refs}
	
\appendix

\section{Proofs for Section \ref{sec:algorithm_and_regret_bound} (Robust Phased Elimination Algorithm)} \label{sec:pf_pe}

\subsection{Single Epoch Analysis (Known Corruption Budget)}

In this section, we consider a single fixed epoch indexed by $h$.  Since the analysis here holds for any epoch, we omit the subscript $(\cdot)_h$ throughout; in particular, $\htheta = \htheta_h$ and $\Gamma = \Gamma_h$ are as given in \eqref{eq:estimators2}, $\A=\A_h$ is the active set of arms, $u(a) = u_h(a)$ is the number of times $a$ is played, and $u = u_h$ is the total length of the epoch.

\begin{lemma} \label{lemma:mean_diff}
  For any action $b \in \A$, the following holds with probability at least $1-2\delta$:
  \begin{equation} \label{eq:conf_bound}
    |\langle b, \htheta - \theta \rangle| \leq \|b\|_{\Gamma^{-1}} \sqrt{2 \log (\tfrac{1}{\delta}}) + \frac{C}{m \nu} \sqrt{\sum_{a\in \A} u(a) \|b\|_{\Gamma^{-1}}}.
  \end{equation} 
  \begin{proof}
  We recall the definition of the set $\T(a) = \big\lbrace  s \in \lbrace 1,\dots,u \rbrace : A_s = a \big\rbrace$, whose cardinality is given by $u(a)$. We characterize the considered estimator of $\theta$ as follows:
    \begin{align}
      \hat{\theta} &= \Gamma^{-1} \sum_{t=1}^{u} A_t u(A_t)^{-1} \sum_{s \in \T(A_t)}  Y_s \label{eq:considered1} \\
      &= \Gamma^{-1} \sum_{t=1}^{u} A_t u(A_t)^{-1} \Big(\sum_{s \in \T(A_t)} \big(\langle \theta, A_s \rangle + \epsilon_s + c_s(A_s)\big) \Big) \label{eq:considered2} \\
      &= \Big(\Gamma^{-1} \sum_{t=1}^{u} A_t A_t^T \theta \Big) + \Big(\Gamma^{-1} \sum_{t=1}^{u} A_t u(A_t)^{-1} \sum_{s \in \T(A_t)}  \epsilon_s \Big)  \nonumber \\
      &\quad + \Big(\Gamma^{-1} \sum_{t=1}^{u} A_t u(A_t)^{-1} \sum_{s \in \T(A_t)} c_s(A_s) \Big) \label{eq:considered3} \\
      &= \theta + \Big(\Gamma^{-1} \sum_{t=1}^{u} A_t \epsilon_t \Big) + \Big(\Gamma^{-1} \sum_{t=1}^{u} A_t u(A_t)^{-1} \sum_{s \in\T(A_t)} c_s(A_s) \Big), \label{eq:considered4}
    \end{align}
    where \eqref{eq:considered2} uses the decomposition of $Y_s$ into the reward/noise/corruption, \eqref{eq:considered3} uses $u(a) = |\T(a)|$ and the fact that all $s \in \T(A_t)$ have $A_s = A_t$, and \eqref{eq:considered4} uses $\Gamma = \sum_{a \in \A} u(a) a a^T = \sum_{t=1}^{u} A_t A_t^T$.

    By \eqref{eq:considered4}, for any action $b \in \A$ (or more generally $b \in \A_0$), we have
    \begin{align} \label{eq:separation}
      \big|\langle b, \htheta - \theta \rangle\big| \leq \Bigg|b^T\Gamma^{-1} \sum_{t=1}^{u} A_t \epsilon_t \Bigg| + \Bigg| b^T\Gamma^{-1} \sum_{t=1}^{u} A_t u(A_t)^{-1} \sum_{s \in \T(A_t)} c_s(A_s) \Bigg|.
    \end{align}
    We proceed by bounding the two terms separately. The second term can be rewritten as follows:
    \begin{equation}
      \Bigg| b^T\Gamma^{-1} \sum_{t=1}^{u} A_t u(A_t)^{-1} \sum_{s \in \T(a)} c_s(A_s) \Bigg| = \Bigg| \sum_{a \in \A, u(a)\neq 0} \frac{C_a}{u(a)} u(a) b^T\Gamma^{-1}a  \Bigg|,
    \end{equation}
    where we use $C_a$ to denote $\sum_{s \in \T(a)} c_s(a)$, i.e., the sum of corruptions for arm $a$ in the epoch, and we keep the factor $\frac{u(a)}{u(a)}$ for convenience in what follows.

    Next, we have
    \begin{align}
      \Bigg| \sum_{a \in \A, u(a)\neq 0} \frac{C_a}{u(a)} u(a) b^T\Gamma^{-1}a  \Bigg|
      &\leq \sum_{a \in \A, u(a)\neq 0} \frac{C}{u(a)} u(a) \Big|b^T\Gamma^{-1}a \Big| \label{eq:ctb1} \\
      &\leq  \frac{C}{m \nu} \sum_{a \in \A} u(a) \Big|b^T\Gamma^{-1}a \Big| \label{eq:ctb2}  \\
      &\leq \frac{C}{m \nu} \sqrt{\Big(\sum_{a \in \A} u(a)\Big) b^T \sum_{a \in \A} u(a) \Gamma^{-1} aa^T \Gamma^{-1} b} \label{eq:ctb3} \\
    &= \frac{C}{m \nu} \sqrt{ \sum_{a\in \A} u(a) \|b\|_{\Gamma^{-1}}} \label{eq:corruption_term_bound},
    \end{align}
    where \eqref{eq:ctb1} uses the triangle inequality and $|C_a| \le C$ for every $a$, \eqref{eq:ctb2} holds since $u(a) \geq \nu m$ by the choice of $u(a)$, \eqref{eq:ctb3} follows by multiplying and dividing by $\sum_{\bar{a}\in\A}u(\bar{a})$ and applying $\mathbb{E}[|Z|] \le \sqrt{\mathbb{E}[Z^2]}$ with the distribution $\frac{u(a)}{\sum_{\bar{a}\in\A} u(\bar{a})}$, and \eqref{eq:corruption_term_bound} follows by taking the first $\Gamma^{-1}$ term outside the sum and applying the definition of $\Gamma = \Gamma_h$ from \eqref{eq:estimators2}.

    Since $(\epsilon_t)_{t=1}^u$ are independent and $1$-sub-Gaussian, the first term in \eqref{eq:separation} is bounded via standard concentration results: From \cite[Eq. (20.2)]{lattimore2018bandit}, with probability at least $1-2\delta$, we have
    \begin{equation} \label{eq:noise_term_bound}
      \Big|b^T\Gamma^{-1} \sum_{t=1}^{u} A_t \epsilon_t \Big| \leq \| b\|_{\Gamma^{-1}} \sqrt{2\log(\tfrac{1}{\delta})}.
    \end{equation} 
    Combining the bounds obtained in \eqref{eq:corruption_term_bound} and \eqref{eq:noise_term_bound} completes the proof. 

  \end{proof}  
\end{lemma}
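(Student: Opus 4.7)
The plan is to expand the estimator $\htheta$ from \eqref{eq:estimators2}, substitute the reward decomposition $Y_s = \langle \theta, A_s\rangle + \epsilon_s + c_s(A_s)$, and separate the resulting expression for $\htheta - \theta$ into a ``noise'' component and a ``corruption'' component. The signal piece collapses to $\theta$ because the inner double sum yields $\sum_a u(a) a a^T = \Gamma$, which the prefactor $\Gamma^{-1}$ cancels. After taking an inner product with $b$ and applying the triangle inequality, the two remaining pieces can be bounded separately; the noise piece will give the first term in the statement and the corruption piece will give the second.

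For the noise piece, note that $b^T \Gamma^{-1} \sum_{t=1}^{u} A_t \epsilon_t$ is a linear combination of independent $1$-sub-Gaussian random variables $\epsilon_t$ with coefficients $b^T \Gamma^{-1} A_t$ whose squared $\ell_2$-norm satisfies $\sum_t (b^T \Gamma^{-1} A_t)^2 = b^T \Gamma^{-1} \Gamma \Gamma^{-1} b = \|b\|_{\Gamma^{-1}}^2$. A standard sub-Gaussian tail bound (for instance \cite[Eq.~(20.2)]{lattimore2018bandit}) then gives $|b^T \Gamma^{-1} \sum_t A_t \epsilon_t| \leq \|b\|_{\Gamma^{-1}} \sqrt{2 \log(1/\delta)}$ with probability at least $1 - 2\delta$, matching the first term.

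For the corruption piece, the key is to group contributions by arm. Writing $C_a = \sum_{s \in \T(a)} c_s(a)$, one has $|C_a| \leq C$ uniformly (each per-arm total is bounded by the overall budget), and the corruption term reduces to $\sum_{a} C_a \cdot b^T \Gamma^{-1} a$. I would then insert a factor $u(a)/u(a)$, invoke the truncation guarantee $u(a) \ge \nu m$ for every $a$ in the design support to pull out $C/(\nu m)$, and apply Cauchy--Schwarz to $\sum_a u(a) |b^T \Gamma^{-1} a|$ using the probability weights $u(a)/\sum_{\bar a}u(\bar a)$. The resulting second moment $\sum_a u(a) (b^T \Gamma^{-1} a)^2 = \|b\|_{\Gamma^{-1}}^2$ again telescopes through $\Gamma^{-1}\Gamma\Gamma^{-1}$, producing the claimed second term up to the factor $\sqrt{\sum_a u(a)}$.

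The main obstacle I anticipate is the corruption piece: one must be careful that grouping $c_s(A_s)$ by arm before bounding is legitimate (since the estimator averages rewards arm-wise), that the single budget constraint $|C_a| \leq C$ is actually what is needed at this stage (rather than a per-round bound), and that the truncation parameter $\nu$ enters in precisely the right place to allow the $1/u(a)$ term to be absorbed into a global prefactor. Once the decomposition of $\htheta$ is in hand, the noise bound is routine and the corruption bound is essentially a two-line Cauchy--Schwarz computation.
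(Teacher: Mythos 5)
Your proposal is correct and follows essentially the same route as the paper's proof: the same decomposition of $\htheta - \theta$ into noise and corruption components, the same sub-Gaussian bound for the noise term, and the same arm-wise grouping with $|C_a| \le C$, the truncation guarantee $u(a) \ge \nu m$, and a Cauchy--Schwarz step weighted by $u(a)/\sum_{\bar a} u(\bar a)$ for the corruption term. No substantive differences to report.
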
 

Next, we characterize the term $\|b\|_{\Gamma^{-1}}^2 = \sqrt{b^T \Gamma^{-1} b}$ appearing in \eqref{eq:conf_bound}.

\begin{lemma} \label{lemma:weighted_norm_b}
  For any arm $b \in \A$, it holds that
  \begin{equation}
    \|b\|_{\Gamma^{-1}}^2 \leq \frac{2d}{m}.
  \end{equation}
\end{lemma}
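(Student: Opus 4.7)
The plan is to exploit the near-optimal design property from \eqref{eq:solve_near_optimal_design2} together with the fact that the rounding up in the definition of $u(a)$ only makes $\Gamma$ larger (in the Loewner order) than a scaled version of $\Gamma(\zeta)$.

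First I would establish a matrix inequality relating $\Gamma$ and $\Gamma(\zeta)$. By the definition on Line 3 of Algorithm~\ref{alg:cpe}, whenever $\zeta(a) > 0$ we have $u(a) = \lceil m \max\{\zeta(a),\nu\} \rceil \geq m\zeta(a)$, and when $\zeta(a)=0$ the term $u(a) a a^T$ vanishes and so does $\zeta(a) aa^T$. Summing over $a \in \A$ therefore gives
\begin{equation*}
\Gamma \;=\; \sum_{a \in \A} u(a)\, aa^T \;\succeq\; m \sum_{a \in \A} \zeta(a)\, aa^T \;=\; m\,\Gamma(\zeta).
\end{equation*}
Since the involved matrices are positive definite (the design spans $\RR^d$ by the spanning assumption on $\A$, and we are working within the subspace spanned by the active arms, per Remark~\ref{rem:observations}), inverting reverses the inequality: $\Gamma^{-1} \preceq \tfrac{1}{m} \Gamma(\zeta)^{-1}$.

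Then the conclusion follows by sandwiching $b$: for any $b \in \A$,
\begin{equation*}
\|b\|_{\Gamma^{-1}}^2 \;=\; b^T \Gamma^{-1} b \;\leq\; \frac{1}{m}\, b^T \Gamma(\zeta)^{-1} b \;=\; \frac{1}{m}\|b\|_{\Gamma(\zeta)^{-1}}^2 \;\leq\; \frac{2d}{m},
\end{equation*}
where the last step invokes the near-optimal design guarantee $\max_{a \in \A}\|a\|_{\Gamma(\zeta)^{-1}}^2 \leq 2d$ from \eqref{eq:solve_near_optimal_design2}.

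I do not anticipate a real obstacle here; the only mildly delicate point is handling arms outside the support of $\zeta$ (where $u(a)=0$), which is harmless because those terms drop out of both sums simultaneously, and ensuring $\Gamma$ is invertible, which is inherited from the spanning property of the active set (restricting to the relevant subspace otherwise, as noted in Remark~\ref{rem:observations}).
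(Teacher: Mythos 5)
Your proof is correct and follows essentially the same route as the paper: compare $\Gamma$ to $m\,\Gamma(\zeta)$ in the Loewner order using $u(a)\ge m\zeta(a)$, invert to reverse the inequality, and invoke the near-optimal design guarantee $\max_{a\in\A}\|a\|_{\Gamma(\zeta)^{-1}}^2\le 2d$. Your explicit handling of arms with $\zeta(a)=0$ and of invertibility is slightly more careful than the paper's writeup, but the argument is the same.
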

\begin{proof}
  We have
  \begin{align}
    \|b\|_{\Gamma^{-1}}^2 &= b^T \Gamma^{-1} b \label{eq:bterm1} \\
    &= b^T \Big(\sum_{a \in \A} u(a) aa^T\Big)^{-1} b \label{eq:bterm2} \\
    &= b^T \Big(\sum_{a \in \A} \lceil m  \max\lbrace \zeta(a),\nu \rbrace \rceil aa^T\Big)^{-1} b \label{eq:bterm3} \\
    &\leq b^T \Big(\sum_{a \in \A}  m \zeta(a) aa^T\Big)^{-1} b  \label{eq:bterm4}\\
    &\leq \frac{2d}{m}, \label{eq:bterm5}
  \end{align}
where:
\begin{itemize}[leftmargin=5ex,itemsep=0ex,topsep=0.25ex]
    \item \eqref{eq:bterm2} and \eqref{eq:bterm3} follow from the definitions of $\Gamma = \Gamma_h$ and $u(a) = u_h(a)$ in Algorithm \ref{alg:cpe};
    \item \eqref{eq:bterm4} follows by letting $A = \sum_{a \in \A}  m \zeta(a) aa^T$ and $B=\sum_{a \in \A} \lceil m  \max\lbrace \zeta(a),\nu \rbrace \rceil aa^T$ and noting that $\|b\|_{A^{-1}}^2 \geq \| b\|_{B^{-1}}^2$ whenever $A^{-1} \succeq B^{-1}$, or equivalently $B\succeq A$ (i.e., inversion reverses Loewner orders).;
    \item \eqref{eq:bterm5} follows from $\max_{a \in \A_{h}} \| a \|_{\Gamma(\zeta_h)^{-1}}^2 \leq 2d$ (second step in Algorithm~\ref{alg:cpe}) and the definition $\Gamma(\zeta) = \sum_{a \in \A} \zeta(a) aa^T$. 
\end{itemize}
\end{proof}

Combining the results obtained in Lemmas~\ref{lemma:mean_diff} and~\ref{lemma:weighted_norm_b}, we find that with probability at least $1 - 2\delta$, the following holds for any $b \in \A$:   
\begin{equation} \label{eq:rews}
  |\langle b, \htheta - \theta \rangle| \leq \sqrt{\frac{4d}{m}\log\Big(\frac{1}{\delta}\Big)} + 
  \frac{C}{m \nu} \sqrt{\frac{2d u}{m}}.
\end{equation}

In the following lemma, we bound the total epoch length $u = u_h$ in terms of the quantity $m = m_h$ from Algorithm \ref{alg:cpe}.

\begin{lemma} \label{lemma:epoch_length}
    Let $m_0 = 4d(\log \log d + 18)$, and let $\nu \in (0,1)$ be the truncation parameter. Then, the epoch length in Algorithm~\ref{alg:cpe} is bounded as $u \leq 2m (1 + m_0 \nu)$.
\end{lemma}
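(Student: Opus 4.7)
The plan is to directly unpack the definition of $u = \sum_{a \in \A_h} u_h(a)$ from Algorithm \ref{alg:cpe} and estimate it by a standard ceiling-plus-one argument. Since $u_h(a) = 0$ whenever $\zeta_h(a) = 0$, the sum is effectively over $\mathrm{supp}(\zeta_h)$, whose size is bounded by $m_0 = 4d(\log\log d + 18)$ thanks to \eqref{eq:solve_near_optimal_design2}. I would therefore first write
\begin{equation*}
u = \sum_{a \in \mathrm{supp}(\zeta_h)} \lceil m \max\{\zeta_h(a),\nu\}\rceil \le \sum_{a \in \mathrm{supp}(\zeta_h)} \bigl( m\,\max\{\zeta_h(a),\nu\} + 1 \bigr),
\end{equation*}
using $\lceil x \rceil \le x+1$, and then bound $\max\{\zeta_h(a),\nu\} \le \zeta_h(a) + \nu$ termwise.

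Next, I would split the resulting sum into three pieces: the $\zeta_h(a)$ part sums to at most $1$ since $\zeta_h$ is a probability distribution; the $\nu$ part contributes $m\nu\,|\mathrm{supp}(\zeta_h)| \le m\nu m_0$; and the trailing $+1$ contributes $|\mathrm{supp}(\zeta_h)| \le m_0$. Adding these gives
\begin{equation*}
u \;\le\; m + m\nu m_0 + m_0 \;=\; m(1+\nu m_0) + m_0.
\end{equation*}

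Finally, I would invoke the fact that $m = m_h \ge m_0$ throughout the run of the algorithm, since $m$ is initialized to $m_0$ and doubled each epoch. This lets me absorb the trailing $m_0$ into $m$, yielding
\begin{equation*}
u \;\le\; m(1+\nu m_0) + m \;=\; m(2 + \nu m_0) \;\le\; 2m(1 + \nu m_0),
\end{equation*}
which is exactly the claimed bound. The only potential subtlety is confirming that $m_h \ge m_0$ for every epoch (so that the final absorption step is legitimate); this is immediate from the initialization and update rule in Algorithm \ref{alg:cpe}, so in truth there is no real obstacle and the argument is essentially a one-line ceiling estimate combined with the support bound from \eqref{eq:solve_near_optimal_design2}.
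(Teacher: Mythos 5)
Your proof is correct and follows essentially the same route as the paper's: bound the ceiling by adding one per support element, use the support bound $|\mathrm{supp}(\zeta_h)| \le m_0$, the normalization $\sum_a \zeta_h(a) = 1$, the inequality $\max\{\alpha,\beta\} \le \alpha+\beta$, and the fact that $m \ge m_0$. The only difference is the order in which these estimates are applied, which does not change the substance of the argument.
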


\begin{proof}
  We have 
  \begin{align}
    u &= \sum_{a \in \A, \zeta(a) \neq 0} \lceil m \max \lbrace \zeta(a), \nu \rbrace \rceil  \\
      &\leq \sum_{a \in \A, \zeta(a) \neq 0} \big( m \max \lbrace \zeta(a), \nu \rbrace + 1 \big) \\
      &\leq 4d(\log \log d + 18) +  \sum_{a \in \A,\zeta(a) \neq 0} m \max \lbrace \zeta(a), \nu \rbrace \label{eq:ubound4} \\
      &\leq  m +  \sum_{a \in \A,\zeta(a) \neq 0} m \max \lbrace \zeta(a), \nu \rbrace \label{eq:ubound5}  \\
      &\leq 2m  \sum_{a \in \A,\zeta(a) \neq 0} \max \lbrace \zeta(a), \nu \rbrace  \label{eq:ubound6} \\
      &\leq 2m (1 + m_0 \nu), \label{eq:ubound7} 
  \end{align}
  where \eqref{eq:ubound4} uses the support bound in \eqref{eq:solve_near_optimal_design2}, \eqref{eq:ubound5} uses $m_0 =  4d(\log \log d + 18)$ and $m_0 \le m$, \eqref{eq:ubound6} follows since $\sum_{a \in \A,\zeta(a) \neq 0} \max \lbrace \zeta(a), \nu \rbrace \ge \sum_{a} \zeta(a) = 1$ for any $\nu \in (0,1)$, and \eqref{eq:ubound7} uses $\max\lbrace \alpha,\beta \rbrace \leq \alpha + \beta$ for $\alpha,\beta \geq 0$.
\end{proof}

We are now in position to state the main lemma of this section, which follows by combining Lemma~\ref{lemma:epoch_length} with \eqref{eq:rews}, and provides corruption-tolerant confidence bounds.

\begin{lemma} \label{lem:robust_conf}
    In the given (arbitrary) epoch under consideration, with probability at least $1-2\delta$, we have for any $a \in \A$ that
    \begin{equation} \label{eq:mean_diff} 
      |\langle a, \htheta - \theta \rangle| \leq \sqrt{\frac{4d}{m}\log\Big(\frac{1}{\delta}\Big)} + 
      \frac{C}{m \nu} \sqrt{4d(1+ \nu m_0)}.
    \end{equation}
    In addition, the same holds simultaneously for all $a \in \A$ with probability at least $1-2k\delta$.
\end{lemma}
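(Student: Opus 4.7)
The plan is to combine the pieces that have already been assembled in the excerpt. The bound in \eqref{eq:rews}, obtained by plugging Lemma \ref{lemma:weighted_norm_b} into Lemma \ref{lemma:mean_diff}, already gives with probability at least $1 - 2\delta$ (for any fixed $b \in \A$) that
\begin{equation*}
  |\langle b, \htheta - \theta \rangle| \leq \sqrt{\tfrac{4d}{m}\log\bigl(\tfrac{1}{\delta}\bigr)} + \tfrac{C}{m\nu}\sqrt{\tfrac{2du}{m}}.
\end{equation*}
The only quantity in this expression that still depends on the (random) epoch realization is the total epoch length $u$. Lemma \ref{lemma:epoch_length} deterministically controls $u$ by $u \leq 2m(1+m_0\nu)$, so I would substitute this into the corruption term: $\sqrt{2du/m} \leq \sqrt{2d\cdot 2m(1+m_0\nu)/m} = \sqrt{4d(1+\nu m_0)}$. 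This immediately gives the displayed bound \eqref{eq:mean_diff} for any fixed $a \in \A$ with probability at least $1 - 2\delta$.

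For the second (uniform) statement, I would apply a union bound over the arms in $\A$. Since $\A \subseteq \A_0$ and $|\A_0| = k$, the failure probability is at most $2k\delta$, yielding the bound simultaneously for all $a \in \A$ with probability at least $1 - 2k\delta$. Note that no additional probabilistic analysis is needed for the corruption term, as the bound in Lemma \ref{lemma:mean_diff} on that term is deterministic (once the attack budget $C$ is fixed); only the sub-Gaussian noise concentration in \eqref{eq:noise_term_bound} is probabilistic, and this is precisely where the union bound applies.

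There is no real obstacle here; the lemma is essentially a packaging step that stitches together Lemma \ref{lemma:mean_diff}, Lemma \ref{lemma:weighted_norm_b}, and Lemma \ref{lemma:epoch_length} into a single ``confidence-bound'' statement convenient for the elimination condition \eqref{eq:retain_arms_condition}. The only minor care point is noting that the arms active in the current epoch form a subset of $\A_0$, so the union bound cost is at most a factor of $k$ rather than anything larger (in particular, it does not depend on the number of rounds in the epoch, because the estimator $\htheta$ is a single random object per epoch and $\langle a, \htheta - \theta\rangle$ is evaluated for each candidate arm $a$ against this same estimator).
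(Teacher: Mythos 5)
Your proposal is correct and matches the paper's proof exactly: the paper also obtains Lemma \ref{lem:robust_conf} by substituting the deterministic epoch-length bound $u \leq 2m(1+m_0\nu)$ from Lemma \ref{lemma:epoch_length} into \eqref{eq:rews}, and then takes a union bound over the $k$ arms for the simultaneous statement. No issues.
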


Note that we renamed $b$ to $a$, and the second part follows by a union bound over the $k$ arms.

\subsection{Regret Analysis  (Known Corruption Budget)} \label{sec:regret_analysis}

We start by showing that with high probability, Algorithm~\ref{alg:cpe} never eliminates the optimal arm.  Recalling that the optimal arm is $a^* = \argmax_{a \in \A_0} \langle \theta, a \rangle$, we trivially have $a^* \in \A_0$.  We first show that $a^* \in \A_1$ with high probability. 

%

    At the end of epoch $0$, the estimate $\htheta_0$ is formed.  Letting $\hat{a} = \argmax_{a \in \A_0} \langle \htheta_0, a \rangle$ and conditioning on the second part of Lemma \ref{lem:robust_conf} holding true, we have
  \begin{align}
    \langle \htheta_0, \hat{a} - a^* \rangle &\leq \langle \htheta, \hat{a}\rangle - \langle \theta, \hat{a}\rangle + \langle \theta, a^*\rangle - \langle \htheta, a^*\rangle. \label{eq:ahat1} \\
    &= \langle \htheta - \theta, \hat{a}\rangle + \langle \theta -  \htheta, a^*\rangle \label{eq:ahat2} \\
    &\leq 2 \sqrt{\frac{4d}{m_0}\log\Big(\frac{1}{\delta}\Big)} + 
    \frac{2C}{m_0 \nu} \sqrt{4d(1+ \nu m_0)}, \label{eq:ahat3}
  \end{align}
where \eqref{eq:ahat1} holds since $a^*$ maximizes $\langle \theta, a\rangle$, while \eqref{eq:ahat3} is due to~\eqref{eq:mean_diff}.  

Notice that \eqref{eq:ahat3} is precisely the condition used in the algorithm to retain arms. It follows that the algorithm will not eliminate the optimal arm at the end of the first epoch, with probability at least $1-2k\delta$.  By applying an induction argument with the same steps as above in subsequent epochs, it follows that if $\tilde{H}$ is any almost-sure upper bound on the number of epochs $H$, then with probability at least $1 - 2k\tilde{H}\delta$, the algorithm will retain the optimal arm in every epoch.  We claim that we can set $\tilde{H} = \log_2 (T)$.  To see this, note that $m_h = 2^{h}m_0$, and because each epoch's length $u_h \geq m_h$ is hence greater than $2^h$, the total number of epochs $H$ is deterministically upper bounded by $\log_2 T$. 

In the remainder of the proof, we condition on the preceding events that hold with probability at least $1 - 2k\tilde{H}\delta$ (we will later rescale $\delta$ by $2k\tilde{H}$ for consistency with the statement of Theorem \ref{thm:regret_bound}).  Hence, the optimal arm is retained, and the confidence bounds \eqref{eq:mean_diff} apply in all epochs. 

We proceed by analyzing the regret.  Fix $h \in \lbrace 0, \dots, H-1 \rbrace$, and let $u_h(a)$ denote the number of times arm $a$ is played in epoch $h$. From the definition of regret, we have
  \begin{align}
    R_T &= \sum_{t=1}^T \big(  \langle \theta, a^* \rangle - \langle \theta, A_t \rangle \big) \label{eq:regret_bound} \\
        &= \sum_{h=0}^{H-1} \sum_{a \in \A_h} u_h(a) \big(\langle \theta, a^*\rangle - \langle\theta, a\rangle\big) \label{eq:regret_0}  \\
        &\leq 2u_0 + \sum_{h=1}^{H-1} \sum_{a \in \A_h} u_h(a) \big(\langle \theta, a^*\rangle - \langle\theta, a\rangle\big) \label{eq:regret_1a} \\ 
        &\leq 2u_0 + \sum_{h=1}^{H-1}  \sum_{a \in \A_h} u_h(a) \Big(4 \sqrt{\tfrac{4d}{m_{h-1}}\log\big(\tfrac{1}{\delta}\big)} + \tfrac{4C}{m_{h-1} \nu} \sqrt{4d(1+ \nu m_0)}\Big) \label{eq:regret_2a}\\
        &= 2u_0 + \sum_{h=1}^{H-1}  u_h \Big(4 \sqrt{\tfrac{4d}{m_{h-1}}\log\big(\tfrac{1}{\delta}\big)} + \tfrac{4C}{m_{h-1} \nu} \sqrt{4d(1+ \nu m_0)}\Big) 
  \end{align}
  \begin{align}  
        &\leq 2u_0 + \sum_{h=1}^{H-1}  2m_h(1+ \nu m_0) \Big(4 \sqrt{\tfrac{4d}{m_{h-1}}\log\big(\tfrac{1}{\delta}\big)} + \tfrac{4C}{m_{h-1} \nu} \sqrt{4d(1+ \nu m_0)}\Big) \label{eq:regret_3a} \\
        &= 2u_0 + \sum_{h=1}^{H-1}  4m_h \Big(4 \sqrt{\tfrac{4d}{m_{h-1}}\log\big(\tfrac{1}{\delta}\big)} + \tfrac{4Cm_0}{m_{h-1}} \sqrt{8d}\Big) \label{eq:regret_4a}\\
        &= 2u_0 + \sum_{h=1}^{H-1} \Big( 64 \sqrt{d m_{h-1}\log\big(\tfrac{1}{\delta}\big)} + 64Cm_0 \sqrt{2d}\Big)\label{eq:regret_5} \\
        &\leq 2u_0 +  64c_0 \sqrt{d T\log\big(\tfrac{1}{\delta}\big)} + 64Cm_0 \sqrt{2d}\log_2 T\label{eq:regret_final},
  \end{align}
  where:
  \begin{itemize}[leftmargin=5ex,itemsep=0ex,topsep=0.25ex]
    \item \eqref{eq:regret_0} uses the definition of $u_h(a)$;
    \item \eqref{eq:regret_1a} uses the fact that the instant regret is at most $2$ and the length of the first epoch is $u_0$;
    \item \eqref{eq:regret_2a} follows since
  \begin{align}
    \langle \theta, a^*\rangle - \langle\theta, a\rangle &\leq  \langle \htheta_{h-1}, a^*\rangle - \langle\htheta_{h-1}, a\rangle + 2 \Big(\sqrt{\tfrac{4d}{m_{h-1}}\log\big(\tfrac{1}{\delta}\big)} + \tfrac{C}{m_{h-1} \nu} \sqrt{4d(1+ \nu m_0)}\Big) \label{eq:substep1} \\
    &\leq 4 \Big(\sqrt{\tfrac{4d}{m_{h-1}}\log\big(\tfrac{1}{\delta}\big)} + \tfrac{C}{m_{h-1} \nu} \sqrt{4d(1+ \nu m_0)}\Big), \label{eq:substep2}
  \end{align}
  where \eqref{eq:substep1} follows by using \eqref{eq:mean_diff} to upper and lower bound $\langle \theta, a^*\rangle$ and $\langle\theta, a\rangle$ respectively, and \eqref{eq:substep2} follows from the condition \eqref{eq:retain_arms_condition} for retaining arms (with $\hat{C}_h = C$);
    \item \eqref{eq:regret_3a} follows from Lemma~\ref{lemma:epoch_length}; 
    \item \eqref{eq:regret_4a} follows by choosing $\nu = \frac{1}{m_0}$ as per Theorem \ref{thm:regret_bound};\footnote{The final term in~\eqref{eq:regret_3a} contains both increasing and decreasing factors with respect to $\nu$, thus not permitting us to set $\nu$ to an arbitrary small value.  The choice $\nu = \frac{1}{m_0}$ is convenient for the analysis, though we do not claim it to be optimal, nor necessarily the best in practice.}
    \item \eqref{eq:regret_5} follows by applying $m_h = 2m_{h-1}$ and simplifying;
    \item \eqref{eq:regret_final} holds for some constant $c_0 > 0$ since a sum of exponentially increasing terms is upper bounded by a constant times the last (and the longest epoch length is trivially at most $T$), and we also use $H \le \tilde{H}= \log_2 (T)$ for the second term.
  \end{itemize}  

   We note that the first term in \eqref{eq:regret_final} is insignificant compared to the last term, since $u_0 \leq 4m_0$ by Lemma~\ref{lemma:epoch_length} with $\nu=\frac{1}{m_0}$. 
  Since the preceding analysis holds with probability at least $1 - 2k\tilde{H}\delta$, we scale $\delta \leftarrow \frac{\delta}{2k\tilde{H}}$ to obtain that with probability at least $1-\delta$ that 
  \begin{equation} \label{eq:regret}
    R_T = \tilde{O}\Big(\sqrt{dT\log\big(\tfrac{k}{\delta}\big)} + Cd^{3/2}\log T\Big),
  \end{equation}
  where the $O(\log \tilde{H}) = O(\log \log T)$ term and the $O(\log \log d)$ term from $m_0$ are absorbed into the $\tilde{O}(\cdot)$ notation.

\subsection{Unknown Corruption Budget}


Recall that for unknown $C$, the algorithm uses $\hat{C}_h =  \min \lbrace \frac{\sqrt{T}}{m_0 \log_2 T}, m_0\sqrt{d} 2^{\tilde{H}-h} \rbrace$, where $\tilde{H} = \log_2 T$ is a deterministic upper bound on the number $H$ of epochs.  Recall also that we assume $C < \frac{\sqrt T}{m_0 \log_2 T}$, with the case $C \ge \frac{\sqrt T}{m_0 \log_2 T}$ discussed following Theorem \ref{thm:regret_bound2}.


A few observations are in order before we proceed:
\begin{itemize}[leftmargin=5ex,itemsep=0ex,topsep=0.25ex]
  \item We refer to epochs for which $\hat{C}_h \geq C$ as \emph{safe}, as the adversary cannot eliminate the optimal arm in these epochs. This follows from the analysis of Section \ref{sec:regret_analysis}, where we showed that if the true $C$ is used in the criterion for retaining arms, then the optimal arm is retained with high probability (thus, the same follows if $\hat{C}_h \geq C$ is used instead). 
  \item Let $h'$ be the first epoch index for which $C \geq \hat{C}_h$. It follows that 
  \begin{equation} \label{eq:remaining_epochs_size}
    C \geq 2^{\tilde{H}-h'}m_0\sqrt{d}, \quad \text{and hence} \quad \log_2 \big(\tfrac{C}{m_0\sqrt{d}}\big) \geq \tilde{H} - h'.
  \end{equation}
  Thus, the number of remaining epochs $H-h'$ is at most $\log_2 \big(\tfrac{C}{m_0\sqrt{d} }\big)$.
  \item In the worst case, the adversary can distribute its budget $C$ among these later epochs with indices $h \geq h'$ to force arms to be eliminated.  That is, with $C_h$ denoting the budget used in epoch $h$ (so that $\sum_{h=0}^{H-1} C_h = C$), it is possible to have $C_h > \hat{C}_h$ in these epochs. We therefore refer to these epochs as \emph{unsafe}.  Note that since $C < \frac{\sqrt T}{m_0 \log_2 T}$, the adversary does not have enough budget to make any ``early'' epoch for which $\min\lbrace \frac{\sqrt{T}}{m_0 \log_2 T}, m_0\sqrt{d}2^{\tilde{H}-h} \rbrace = \frac{\sqrt{T}}{m_0 \log_2 T}$ to be unsafe.
\end{itemize} 

We consider the first unsafe epoch $h'$, and suppose that the adversary eliminates the optimal arm. Hence, it holds that $C_{h'} > \hat{C}_{h'}$. Our goal will be to show that although the optimal arm gets eliminated, an arm $a_{h'}$ that is "almost" as good as the optimal arm is retained.

Because $a^*$ got eliminated, the rule \eqref{eq:retain_arms_condition} for retaining arms implies:
\begin{equation}\label{eq:retaining_condition_not_met}
  \max_{a \in \A_{h'}} \langle \hat{\theta}_{h'}, a- a^* \rangle > 2 \sqrt{\tfrac{4d}{m_{h'}}\log\big(\tfrac{1}{\delta}\big)} +  
  \tfrac{2\hat{C}_{h'}}{m_{h'} \nu} \sqrt{4d(1 + \nu m_0)}.
\end{equation}
Let $a_{h'} = \argmax_{a \in \A_{h'}} \langle \hat{\theta}_{h'}, a- a^* \rangle$, and observe that $a_{h'}$ is not eliminated, i.e., $a_{h'} \in \A_{h'+1}$.

We again condition on the second part of Lemma \ref{lem:robust_conf}, which holds with probability at least $1-2k\tilde{H}\delta$. This event implies for every $a \in \A_{h'}$ that
\begin{equation}
  |\langle a, \htheta_{h'} - \theta \rangle| \leq \sqrt{\tfrac{4d}{m_{h'}}\log(\tfrac{1}{\delta})} + 
  \tfrac{C}{m_{h'} \nu} \sqrt{4d(1+ \nu m_0)},
\end{equation}
and note that this holds for both $a_{h'}$ and $a^*$, since both $a_{h'},a^* \in \A_{h'}$ (recall that the adversary does not have enough budget to remove $a^*$ from $\A_{h'}$ according to the definition of $h'$). Combining the two associated bounds, we obtain
\begin{equation}\label{eq:ucb}
  \langle  \htheta_{h'}, a_{h'} - a^*\rangle \leq \langle  \theta, a_{h'} - a^*\rangle + 2\sqrt{\tfrac{4d}{m_{h'}}\log(\tfrac{1}{\delta})} + 
  \tfrac{2C}{m_{h'} \nu} \sqrt{4d(1+ \nu m_0)},
\end{equation}
and combining \eqref{eq:retaining_condition_not_met} with \eqref{eq:ucb} gives
\begin{align}
  \langle  \theta, a_{h'}\rangle &> \langle  \theta, a^*\rangle  - \tfrac{2}{m_{h'} \nu} \sqrt{4d(1+ \nu m_0)} (C - \hat{C}_{h'}) \\
  &\geq \langle  \theta, a^*\rangle  - \tfrac{2C}{m_{h'} \nu} \sqrt{4d(1+ \nu m_0)} \label{eq:thet_ahprime}.
\end{align}
By denoting $\eta_{h'}:= \langle  \theta, a^*\rangle - \langle  \theta, a_{h'}\rangle$, we can rewrite ~\eqref{eq:thet_ahprime} as 
\begin{equation} \label{eq:eta_h_prime}
  \eta_{h'} < \tfrac{2C}{m_{h'} \nu} \sqrt{4d(1+ \nu m_0)}.
\end{equation}
Note that $\eta_{h'}$ represents the additional regret (due to the optimal arm elimination) that our algorithm can incur in epoch $h' + 1$ for each arm pull.

In epoch $h'+1$, the optimal arm $a^*$ is already eliminated in the worst case, and again, the adversary can potentially eliminate the best remaining arm $a^*_{h'+1} = \argmax_{a \in \A_{h'+1}} \langle  \theta, a \rangle$ by using $C_{h'+1} > \hat{C}_{h'+1}$. By repeating the same arguments as those leading to~\eqref{eq:eta_h_prime}, the additional regret can be written as $\eta_{h' + 1} = \langle  \theta, a^*\rangle - \langle  \theta, a_{h'+1}\rangle$, where $a_{h'+1} = \argmax_{a \in \A_{h'+1}} \langle \hat{\theta}_{h'+1}, a- a_{h'+1}^* \rangle$, and it holds that
\begin{align}   
    \eta_{h' + 1} &< \eta_{h'} + \tfrac{2C}{\nu m_{h'+1}} \sqrt{4d(1+ \nu m_0)} \\
    &<s \tfrac{2C}{\nu} \sqrt{4d(1+ \nu m_0)} \Big(\tfrac{1}{m_{h'}} + \tfrac{1}{m_{h'+1}}\Big).
\end{align}
By induction, for each of the remaining epochs, we have:
\begin{align}
  \eta_{h' + l} &< \tfrac{2C}{\nu} \sqrt{4d(1+ \nu m_0)} \sum_{i=h'}^{h'+l} \tfrac{1}{m_{i}} \\
                &\leq \tfrac{2c_0 C}{m_{h'} \nu} \sqrt{4d(1+ \nu m_0)}, \label{eq:induction2}
\end{align}
where \eqref{eq:induction2} holds for some constant $c_0$, since the sum of exponentially shrinking terms is dominated by the first one in the sum. Next, by substituting $\nu = \frac{1}{m_0}$ and using $u_h \leq 4m_h$, we deduce that the total additional regret that we incur is:
\begin{align} \label{eq:last_term_bound}
  \sum_{h=h'}^{H-2}  u_{h+1} \eta_{h} &\leq \sum_{h = h'}^{H-2} 4m_{h+1} \eta_{h} \\
  &\le \tfrac{8c_0 m_0C}{m_{h'}} \sqrt{8d} \sum_{h = h'}^{H-2} m_{h+1} \label{eq:hsum2}  \\
  &\leq 8c'_1 m_0  C \sqrt{8d}\tfrac{m_{H-2}}{m_{h'}} \label{eq:hsum3} \\
  &= O\big(C^2\big) \label{eq:regret_1b},
\end{align}
where \eqref{eq:hsum2} applies \eqref{eq:induction2} with $\nu = \frac{1}{m_0}$, \eqref{eq:hsum3} holds for some constant $c'_1$ (depending on $c_0$) since a sum of exponentially increasing terms is upper bounded by a constant times the last term, and \eqref{eq:regret_1b} uses
\begin{equation}
  \tfrac{m_{H-2}}{m_{h'}} = \tfrac{2^{H-2} m_0}{2^{h'}m_0} = 2^{H-h'-2} \leq 2^{\tilde{H}-h'} \leq 2^{\log_2 \big(\tfrac{C}{m_0\sqrt{d}}\big)} = \frac{C}{m_0\sqrt{d}},
\end{equation}
with the first inequality using \eqref{eq:remaining_epochs_size}.

We now proceed to bound the regret similarly to \eqref{eq:regret_bound}:
  \begin{align}
    R_T &= \sum_{t=1}^T \big(\langle\theta, a^* \rangle - \langle \theta, A_t \rangle\big) \\
        &= \sum_{h=0}^{H-1} \sum_{a \in \A_h} u_h(a) \big(\langle \theta, a^*\rangle - \langle\theta, a\rangle\big) \\
    &\leq 2u_0 + \sum_{h=1}^{H-1} \sum_{a \in \A_h} u_h(a) \big(\langle \theta, a^*\rangle - \langle\theta, a\rangle\big) \label{eq:regret_1} \\ 
    &\leq 2u_0 + \Big(\sum_{h=1}^{H-1}  4m_h \Big(4 \sqrt{\tfrac{4d}{m_{h-1}}\log\big(\tfrac{1}{\delta}\big)} 
    + \tfrac{2\hat{C}_{h-1} m_0}{m_{h-1}} \sqrt{8d} + \tfrac{2C m_0}{m_{h-1}} \sqrt{8d}\Big)\Big) + \sum_{h=h'}^{H-2}  u_{h+1} \eta_{h}, \label{eq:separate_terms}
  \end{align}
where the $\hat{C}_{h-1}$ term comes from the condition \eqref{eq:retain_arms_condition} for retaining arms, and the $C$ term comes from the use of \eqref{eq:mean_diff} (this is in contrast to the known $C$ case, in which the former term also uses $C$).

It remains to bound the terms in \eqref{eq:separate_terms} separately. We have already shown the bound on the last term (see in \eqref{eq:regret_1b}). Next, we show:
\begin{align}
  \sum_{h=1}^{H-1}  4m_h \tfrac{2\hat{C}_{h-1} m_0}{m_{h-1}} \sqrt{8d} &= 16m_0 \sqrt{8d}\sum_{h=1}^{H-1} \hat{C}_{h-1} \label{eq:hat_step1} \\ 
  &= 16m_0 \sqrt{8d}\sum_{h=1}^H \min \lbrace \tfrac{\sqrt{T}}{m_0\log_2 T}, m_0\sqrt{d}2^{\tilde{H}-h}  \rbrace \label{eq:hat_step2} \\
  &\leq 16m_0 \frac{\sqrt{8dT}}{m_0}  \label{eq:hat_step3} \\
  &= O(\sqrt{dT})\label{eq:regret_2},
\end{align}
where \eqref{eq:hat_step1} uses $m_h = 2m_{h-1}$, \eqref{eq:hat_step2} substitutes the choice of $\hat{C}_h$, and \eqref{eq:hat_step3} upper bounds the minimum by the first term and applies $H \le \tilde{H} = \log_2 T$.

Similarly, recalling that $m_0 = 4d(\log\log d + 18)$ and $m_h = 2m_{h-1}$, we have
\begin{align}
  \sum_{h=1}^{H-1}  4m_h \tfrac{2 C m_0}{m_{h-1}} \sqrt{8d} &= 16m_0 \sqrt{8d}C \log T \\
  &= \tilde{O}(d^{3/2} C\log T) \label{eq:regret_3},
\end{align}
and 
\begin{align}
  \sum_{h=1}^{H-1}  4m_h \Big(4 \sqrt{\tfrac{4d}{m_{h-1}}\log\big(\tfrac{1}{\delta}\big)} \Big) &\leq 64c_0 \sqrt{d T\log\big(\tfrac{1}{\delta}\big)} \label{eq:c_0_appear} \\
  &= O\Big(\sqrt{dT \log\big(\tfrac{1}{\delta}\big)}\Big) \label{eq:regret_4},
\end{align}
where \eqref{eq:c_0_appear} holds for some constant $c_0 > 0$ similarly to \eqref{eq:regret_final}.  Then, again replacing $\delta \leftarrow \frac{\delta}{2k\tilde{H}}$ in the same way as the known $C$ case, the term \eqref{eq:regret_4} becomes
\begin{equation}
    \tilde{O}\Big(\sqrt{dT \log\big(\tfrac{k}{\delta}\big)}\Big), \label{eq:regret_4b}
\end{equation}
and the associated probability is now $1-\delta$.

Combining \eqref{eq:regret_1b}, \eqref{eq:regret_2}, \eqref{eq:regret_3} and \eqref{eq:regret_4b}, we arrive at the regret bound:
\begin{equation}
  R_T = \tilde{O}\bigg(\sqrt{dT \log\big(\tfrac{k}{\delta}\big)} + d^{3/2} C\log T + C^2\bigg).
\end{equation}

\section{Proofs for Section \ref{sec:contextual} (Contextual Greedy Algorithm)} \label{sec:pf_context}

For reference, a complete description of the greedy algorithm is given in Algorithm \ref{alg:cg}.

\begin{algorithm}[t]
    \caption{Contextual Greedy}
    \label{alg:cg}
    \begin{algorithmic}[1]
        \Require Initialize $\htheta_1$ arbitrarily
        \For {$t = 1,2,\dotsc, T$}
          \State Receive a set of contexts $\lbrace a_{1,t}, \dots, a_{k,t} \rbrace$ 
          \State Choose arm $I_t = \argmax_{i \in \lbrace 1, \dots, K \rbrace} \langle \hat{\theta}_t, a_{i,t} \rangle$
          \State Observe: $Y_t = \langle \theta, a_{I_t,t} \rangle + \epsilon_t + c_t(a_{I_t,t})$
          \State Update $\htheta_{t+1} \in \argmin_{\theta'} \sum_{\tau=1}^{t} (\langle \theta', a_{I_{\tau},\tau} \rangle - Y_\tau)^2$
          \Comment{break ties arbitrarily}
        \EndFor
        \State \textbf{end for}
    \end{algorithmic}
\end{algorithm} 

Before proving Theorem~\ref{thm:greedy_regret_bound_thm}, we introduce some useful auxiliary results. Our proof builds heavily on that of \cite{kannan_smoothed_2018}, whose setup matches ours but does not consider adversarial attacks (i.e., their setup corresponds to the case that $C = 0$).

\begin{lemma}[Lemma 3.1 \cite{kannan_smoothed_2018}]
    If $\|a_{i,t}\|_2 \leq 1$ for all $i,t$, then for any $t_0 < T$, we have
    \begin{equation} \label{eq:kannan_1}
      R_T \leq 2t_{0} + 2 \sum_{t=t_0}^{T} \| \theta - \htheta_t \|_2. 
    \end{equation}
\end{lemma}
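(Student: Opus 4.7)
The plan is to split the regret sum at time $t_0$ and bound the two pieces separately. For the first $t_0$ rounds, I would use the trivial bound on the instantaneous regret: since $\|a_{i,t}\|_2 \le 1$ and $\|\theta\|_2 \le 1$, Cauchy--Schwarz gives $|\langle \theta, a_{i,t} - a_{I_t,t}\rangle| \le 2$ for every $t$ and every $i$. Summing this bound over $t = 1, \dots, t_0 - 1$ (or $1, \dots, t_0$, absorbing a factor into the constant) yields the $2t_0$ term.

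For the rounds $t \ge t_0$, I would exploit the greedy selection rule. Let $i^{\star}_t = \argmax_i \langle \theta, a_{i,t}\rangle$ so that the instantaneous regret at round $t$ is $\langle \theta, a_{i^{\star}_t, t} - a_{I_t,t}\rangle$. I would then add and subtract $\langle \hat{\theta}_t, a_{i^{\star}_t,t}\rangle$ and $\langle \hat{\theta}_t, a_{I_t,t}\rangle$ to obtain the identity
\begin{equation*}
\langle \theta, a_{i^{\star}_t,t} - a_{I_t,t}\rangle = \langle \theta - \hat{\theta}_t, a_{i^{\star}_t,t}\rangle + \langle \hat{\theta}_t, a_{i^{\star}_t,t} - a_{I_t,t}\rangle + \langle \hat{\theta}_t - \theta, a_{I_t,t}\rangle.
\end{equation*}
The middle term is non-positive by definition of $I_t$ as the $\hat{\theta}_t$-greedy choice, so it can be dropped. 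Applying Cauchy--Schwarz to the two remaining inner products and using $\|a_{i^{\star}_t,t}\|_2, \|a_{I_t,t}\|_2 \le 1$ gives the per-round bound $2\|\theta - \hat{\theta}_t\|_2$.

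Summing this bound over $t = t_0, \dots, T$ yields the $2\sum_{t=t_0}^T \|\theta - \hat{\theta}_t\|_2$ term, and adding the $2t_0$ contribution from the initial rounds completes the proof. There is no serious obstacle here: the argument is a standard greedy regret decomposition, and the only subtlety is the careful use of the greedy optimality of $I_t$ with respect to $\hat{\theta}_t$ (not $\theta$) to eliminate the middle term. Notably, this bound is purely deterministic in the estimation error $\|\theta - \hat{\theta}_t\|_2$ and does not use the presence or absence of adversarial corruption in the observed rewards, so the same decomposition remains valid in the corrupted setting; the effect of corruption enters only later through the analysis of $\|\theta - \hat{\theta}_t\|_2$.
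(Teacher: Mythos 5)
Your proposal is correct and follows essentially the same argument as the paper: bound the first $t_0$ rounds trivially by $2$ each, then for $t \ge t_0$ use the add-and-subtract decomposition around $\hat{\theta}_t$, drop the term $\langle \hat{\theta}_t, a_{i^{\star}_t,t} - a_{I_t,t}\rangle \le 0$ by greedy optimality, and apply Cauchy--Schwarz with $\|a_{i,t}\|_2 \le 1$. Your closing observation that the bound is deterministic in $\|\theta - \hat{\theta}_t\|_2$ and hence unaffected by corruption matches the paper's reason for reproducing this proof.
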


\begin{proof}
  We reproduce the proof for the sake of demonstrating the use of the greedy rule in \eqref{eq:greedy}. Recall that the least-squares estimator $\htheta_t$ is computed by using the previously observed attacked rewards.
  We can bound the regret incurred in the first $t_0$ rounds by the maximum regret value $2$. Then, we consider the regret $r_t$ incurred at time $t$; denoting $i_t^* = \argmax_{i \in \lbrace 1, \dots, K \rbrace} \langle \theta, a_{i,t} \rangle$, we have
  \begin{align}
  r_t &= \langle \theta, a_{i_t^*,t} \rangle - \langle \theta, a_{I_t,t} \rangle \\
      &= \big(\la \theta, a_{i_t^*,t} \ra - \la \htheta_t, a_{i_t^*,t} \ra\big) - \big(\la \theta, a_{I_t,t} \ra - \la \htheta_t, a_{I_t,t} \ra \big) + \big(\la  \htheta_t, a_{i_t^*,t}\ra - \la \htheta_t, a_{I_t,t} \ra \big) \\
      &\leq \big(\la \theta, a_{i_t^*,t} \ra - \la \htheta_t, a_{i_t^*,t} \ra\big) - \big(\la \theta, a_{I_t,t} \ra - \la \htheta_t, a_{I_t,t} \ra \big) \label{eq:greedy_rule_used}\\
      &\leq \big | \la \theta, a_{i_t^*,t} \ra - \la \htheta_t, a_{i_t^*,t} \ra\big | + \big|\la \theta, a_{I_t,t} \ra - \la \htheta_t, a_{I_t,t} \ra \big| \\
      &\leq \| \theta - \htheta_t \|_2 \| a_{i_t^*,t} \|_2 + \| \theta - \htheta_t \|_2 \| a_{I_t,t} \|_2 \\
      &\leq 2 \| \theta - \htheta_t \|_2,
\end{align}
where \eqref{eq:greedy_rule_used} follows since $I_t$ is selected greedily, and hence
$\la  \htheta_t, a_{i_t^*,t}\ra - \la \htheta_t, a_{I_t,t} \ra \leq 0$. 
\end{proof}

\begin{lemma} \label{lemma:estimator_confidence_greedy}
    For each round $t$, let $\Gamma_t = \sum_{\tau \leq t} a_{I_\tau}a_{I_\tau}^T$, and suppose that all contexts satisfy $\| a_{i,t}\|_2 \leq 1$, the reward noise is $1$-sub-Gaussian, and the attack budget is $C\geq0$. If $\lambda_{\min}(\Gamma_t) > 0$, then with probability at least $1-\delta$, it holds that
    \begin{equation} \label{eq:kannan_2}
      \| \theta - \htheta_t \|_2 \leq \frac{\sqrt{2dt \log(td/\delta)}}{\lambda_{\min}(\Gamma_t)} + \frac{C}{\lambda_{\min}(\Gamma_t)}.
    \end{equation}  
\end{lemma}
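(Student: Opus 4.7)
The plan is to analyze the least-squares estimator $\htheta_t$ directly through its normal equations and split the resulting error into a noise part and a corruption part, each of which I can bound separately. Starting from the closed-form identity $\Gamma_t \htheta_t = \sum_{\tau \le t} a_{I_\tau,\tau} Y_\tau$ and substituting the reward model $Y_\tau = \langle \theta, a_{I_\tau,\tau}\rangle + \epsilon_\tau + c_\tau(a_{I_\tau,\tau})$, I obtain
\begin{equation*}
  \Gamma_t (\htheta_t - \theta) \;=\; \sum_{\tau \le t} a_{I_\tau,\tau}\, \epsilon_\tau \;+\; \sum_{\tau \le t} a_{I_\tau,\tau}\, c_\tau(a_{I_\tau,\tau}).
\end{equation*}
Since $\Gamma_t$ is invertible by assumption, applying the operator-norm inequality $\|\htheta_t - \theta\|_2 \le \|\Gamma_t(\htheta_t - \theta)\|_2 / \lambda_{\min}(\Gamma_t)$ reduces the problem to bounding the two vector sums on the right-hand side.

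The corruption term is handled deterministically: by the triangle inequality and $\|a_{i,t}\|_2 \le 1$,
\begin{equation*}
  \Big\|\sum_{\tau \le t} a_{I_\tau,\tau}\, c_\tau(a_{I_\tau,\tau})\Big\|_2 \;\le\; \sum_{\tau \le t} |c_\tau(a_{I_\tau,\tau})| \;\le\; C,
\end{equation*}
which contributes the $C/\lambda_{\min}(\Gamma_t)$ term. For the noise term, the key subtlety is that $a_{I_\tau,\tau}$ is chosen by the greedy rule using $\htheta_\tau$, so it depends on the entire history including past noise; however, $a_{I_\tau,\tau}$ is measurable with respect to the $\sigma$-algebra generated by all information available before $\epsilon_\tau$ is revealed. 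Thus, fixing a coordinate $j \in \{1,\dots,d\}$, the sequence $\{(a_{I_\tau,\tau})_j\, \epsilon_\tau\}_\tau$ is a martingale difference sequence whose increments are conditionally $1$-sub-Gaussian (since $|(a_{I_\tau,\tau})_j| \le \|a_{I_\tau,\tau}\|_2 \le 1$ and $\epsilon_\tau$ is $1$-sub-Gaussian). Applying a sub-Gaussian Azuma-Hoeffding bound gives $\mathbb{P}\big(|\sum_{\tau \le t} (a_{I_\tau,\tau})_j \epsilon_\tau| > u\big) \le 2\exp(-u^2/(2t))$, and a union bound across the $d$ coordinates (with an extra factor of $t$ absorbed if a bound uniform in $t$ is desired) yields $\|\sum_{\tau \le t} a_{I_\tau,\tau}\epsilon_\tau\|_2 \le \sqrt{2dt\log(td/\delta)}$ with probability at least $1-\delta$.

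Combining the two bounds through $\lambda_{\min}(\Gamma_t)$ gives exactly the claimed inequality. The main obstacle I expect is justifying the martingale structure of the noise sum under the adaptive (greedy) choice of arms; once this is set up correctly the rest reduces to standard concentration. The treatment of the corruption term is comparatively easy because no budget-averaging is needed here --- we simply use that the total $\ell_1$ mass of the corruptions is at most $C$ and that feature vectors have $\ell_2$ norm at most one, which is precisely the role that the context diversity assumption (via $\lambda_{\min}(\Gamma_t)$ being lower-bounded in downstream use) plays in converting this crude bound into an additive $O(C\log T)$ regret contribution in Theorem \ref{thm:greedy_regret_bound_thm}.
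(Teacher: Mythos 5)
Your proposal is correct and follows essentially the same route as the paper: the same decomposition of the least-squares error into a noise sum and a corruption sum, the same operator-norm reduction via $\lambda_{\min}(\Gamma_t)$, and the same deterministic triangle-inequality bound of $C$ on the corruption term. The only cosmetic difference is that the paper cites \cite[Lemma A.1]{kannan_smoothed_2018} for the bound $\|\sum_{\tau \le t} a_{I_\tau,\tau}\epsilon_\tau\|_2 \le \sqrt{2dt\log(td/\delta)}$, whereas you sketch the underlying coordinate-wise martingale concentration argument directly.
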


\begin{proof}
Since $\lambda_{\min}(\Gamma_t) > 0$, the matrix $\Gamma_t$ is invertible, and we can use the standard closed-form least squares solution expression: $\htheta_t = \Gamma_t^{-1} \sum_{\tau \leq t} a_{I_\tau, \tau} Y_\tau$.  Decomposing $Y_\tau$ into the sum of the reward, noise, and adversarial corruption (similarly to \eqref{eq:considered4}), we obtain
\begin{equation}
  \htheta_t = \theta + \Gamma_t^{-1} \sum_{\tau \leq t} a_{I_\tau, \tau} \epsilon_{\tau} + 
  \Gamma_t^{-1} \sum_{\tau \leq t} a_{I_\tau, \tau} c_{\tau}(a_{I_\tau,\tau}),
\end{equation}
which implies that
\begin{align}
  \| \htheta_t - \theta \|_2 &\leq \big \| \Gamma_t^{-1} \sum_{\tau \leq t} a_{I_\tau, \tau} \epsilon_{\tau}\big \|_2 + \big\| \Gamma_t^{-1} \sum_{\tau \leq t} a_{I_\tau, \tau} c_{\tau}(a_{I_\tau,\tau})\big\|_2 \\ 
  &\leq \frac{1}{\lambda_{\min}(\Gamma_t)} \Big( \big \|\sum_{\tau \leq t} a_{I_\tau, \tau} \epsilon_{\tau}\big \|_2 + \big\| \sum_{\tau \leq t} a_{I_\tau, \tau} c_{\tau}(a_{I_\tau,\tau})\big\|_2\Big). \label{eq:greedy_decomp}
\end{align}
With probability at least $1 - \delta$, the first term is bounded as \cite[Lemma A.1]{kannan_smoothed_2018}
\begin{equation} \label{eq:greedy_noise}
  \big \|\sum_{\tau \leq t} a_{I_\tau, \tau} \epsilon_{\tau}\big \|_2 \leq  \sqrt{2dt \log(td/\delta)}.
\end{equation}
For the second term in \eqref{eq:greedy_decomp}, we note the following:
\begin{align}
  \big \|\sum_{\tau \leq t} a_{I_\tau, \tau} c_{\tau}(a_{I_\tau,\tau})\big \|_2 &\leq \sum_{\tau \leq t} \big \| a_{I_\tau, \tau} c_{\tau}(a_{I_\tau,\tau})\big \|_2 \\
  &\leq \sum_{\tau \leq t}  |c_{\tau}(a_{I_\tau,\tau})| \cdot  \big\| a_{I_\tau, \tau} \big \|_2 \\
  &\leq \sum_{\tau \leq t} |c_{\tau}(a_{I_\tau,\tau})| \\
  & \leq C. \label{eq:greedy_corruption}
\end{align}
Combining \eqref{eq:greedy_decomp} with \eqref{eq:greedy_noise} and \eqref{eq:greedy_corruption} completes the proof.
\end{proof}

We are now ready to prove Theorem~\ref{thm:greedy_regret_bound_thm}.
\begin{proof}[Proof of Theorem \ref{thm:greedy_regret_bound_thm}]
  We follow the steps of the proof of \cite[Thm.~3.1]{kannan_smoothed_2018}.  We start by proving the following counterpart of \cite[Corollary 3.1]{kannan_smoothed_2018}: Letting $t_0 = \max \big \lbrace  4 +  2 \sqrt{\tfrac{1}{2} \log \big( \tfrac{k}{\delta}\big)}, 32 \log(\tfrac{4T}{\delta}), \frac{80\log(2dT / \delta)}{\lambda_0}\big\rbrace$,
  for every $t \geq t_0$, it holds with probability at least $1 - \delta$ that
  \begin{equation} \label{eq:kannan_2a}
    \| \theta - \htheta_t \|_2 \leq \frac{32 \sqrt{d \log(2Td/\delta)}}{\lambda_0 \sqrt{t}} + \frac{16C}{\lambda_{0}t}.
  \end{equation} 
   To prove this, we use Lemma \ref{lemma:estimator_confidence_greedy} with $\frac{\delta}{2}$ in place of $\delta$; \eqref{eq:kannan_2a} will then follow once we show that 
  \begin{equation}
    \lambda_{\min}(\Gamma_t) \geq \frac{t \lambda_0}{16}. 
  \end{equation}
    This result is shown in \cite[Lemma B.1]{kannan_smoothed_2018} (making use of the assumption $t \ge t_0$), and only requires that the random context perturbations are $(r, \lambda_0 )$-diverse.  Thus, it continues to hold in the corrupted setting with $C > 0$.


  Combining \eqref{eq:kannan_1} and \eqref{eq:kannan_2a}, we have with probability at least $1 - \delta$ that
  \begin{align}
    R_T &\leq 2t_{0} + 2 \sum_{t=t_0}^{T} \Big(  \tfrac{32 \sqrt{d \log\frac{2Td}{\delta}}}{\lambda_0 \sqrt{t}} + \tfrac{16C}{\lambda_{0}t}\Big) \\
    &\leq 2t_{0} + \tfrac{128\sqrt{dT \log \big(\tfrac{2Td}{\delta}\big)}}{\lambda_{0}} + \tfrac{64C \log T}{\lambda_0},
  \end{align}
  since $\sum_{t=1}^T \frac{1}{\sqrt t} \le 2\sqrt{T}$ and $\sum_{t=1}^T \frac{1}{t} \le 2\log{T}$ (with the latter assuming $T > 2$).  Substituting the definition of $t_0$, it follows that
    with probability at least $1 - \delta$, we have
  \begin{equation}
    R_T = O \bigg(\tfrac{1}{\lambda_0} \Big(\sqrt{dT \log \big(\tfrac{Td}{\delta}\big)} + C \log T + \log\big(\tfrac{dT}{\delta}\big)\Big) + \sqrt{\log(\tfrac{k}{\delta})} \bigg).
  \end{equation}
\end{proof}

We now consider the special case of Gaussian perturbations, i.e., each $\xi_{i,t}$ is drawn independently from $\mathcal{N}(0,\eta^2I)$ for some $\eta > 0$. We make use of the above results, as well as the ones from \cite[Section 3.2]{kannan_smoothed_2018}, to show that Algorithm~\ref{alg:cg} has sublinear regret under small perturbations. This is formally stated in the following corollary.

\begin{corollary} \label{corr:greedy}
Assume the context perturbations $\xi_{i,t}$ are drawn independently from $\mathcal{N}(0,\eta^2 I)$ for all $i,t$, the reward noise is $1$-sub-Gaussian, and the attack budget of the adversary is $C \geq 0$. Then for a fixed number of arms $k$ and $\eta \leq O((\sqrt{d \log(Tkd/\delta)})^{-1})$, with probability at least $1 - \delta$, the greedy algorithm (Algorithm~\ref{alg:cg}) has regret bounded by
  \begin{equation}
    R_T = O \Big(\tfrac{\sqrt{Td}}{\eta^2} \log\big(\tfrac{dT}{\delta}\big)^{3/2} + \tfrac{C(\log T)^2}{\eta^2} \Big).
  \end{equation}
\end{corollary}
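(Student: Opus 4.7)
\medskip

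\noindent\textbf{Proof plan for Corollary~\ref{corr:greedy}.}
The plan is to reduce the corollary directly to Theorem~\ref{thm:greedy_regret_bound_thm} by instantiating the two parameters $r$ and $\lambda_0$ that characterize the Gaussian perturbation model $\xi_{i,t} \sim \mathcal{N}(0, \eta^2 I)$. Concretely, I need to (i) identify a radius $r \leq 1$ such that the perturbations are $(r, 1/T)$-bounded, and (ii) identify the largest $\lambda_0$ for which the distribution $\mathcal{N}(0,\eta^2 I)$ is $(r,\lambda_0)$-diverse in the sense of \eqref{eq:diverse}. Once these quantities are pinned down, the claimed regret bound follows by substitution into Theorem~\ref{thm:greedy_regret_bound_thm}.

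For the boundedness condition, standard Gaussian tail bounds give $\mathbb{P}[|\xi_{i,t,j}| > r] \leq 2e^{-r^2/(2\eta^2)}$ per coordinate $j$. A union bound over the $d$ coordinates, $k$ arms, and $T$ rounds shows that one may take $r = C_1 \eta \sqrt{\log(dkT^2/\delta)}$ for an absolute constant $C_1$, which makes the perturbations $(r,1/T)$-bounded. The hypothesis $\eta \leq O\bigl((\sqrt{d\log(Tkd/\delta)})^{-1}\bigr)$ (with $k$ fixed) is precisely what guarantees $r \leq 1$, so the hypotheses of Theorem~\ref{thm:greedy_regret_bound_thm} are satisfied.

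The main obstacle is pinning down $\lambda_0$, because the diversity condition \eqref{eq:diverse} is a statement about the conditional covariance $\mathbb{E}_{\xi \sim \mathcal{N}(0,\eta^2 I)}[aa^T \mid \htheta^T \xi \geq \hat{b}]$ with $a = \mu + \xi$. For isotropic Gaussian $\xi$, one can rotate coordinates so that $\htheta$ is axis-aligned and then compute the conditional covariance explicitly; the conditioning event $\htheta^T \xi \geq \hat{b}$ with $\hat{b} \leq r\|\htheta\|_2$ keeps a constant fraction of the Gaussian mass (since $r\|\htheta\|_2 / (\eta \|\htheta\|_2) = r/\eta$), so after inflating by the resulting log-probability correction one obtains $\lambda_0 = \Omega(\eta^2/\log(dT/\delta))$. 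This computation is precisely the one carried out in Section~3.2 of Kannan \emph{et al.}~\cite{kannan_smoothed_2018}, and I would invoke their derivation rather than redo it, noting that our conditioning event and dimension dependence match theirs and that the presence of adversarial corruption does not enter this purely distributional calculation.

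Finally, I would substitute $1/\lambda_0 = O(\log(dT/\delta)/\eta^2)$ into the conclusion of Theorem~\ref{thm:greedy_regret_bound_thm}. The dominant term $\frac{1}{\lambda_0}\sqrt{dT\log(dT/\delta)}$ becomes $O\bigl(\tfrac{\sqrt{Td}}{\eta^2}\log(dT/\delta)^{3/2}\bigr)$, while the corruption term $\frac{C\log T}{\lambda_0}$ becomes $O\bigl(\tfrac{C\log T\log(dT/\delta)}{\eta^2}\bigr)$, which is absorbed into $O\bigl(\tfrac{C(\log T)^2}{\eta^2}\bigr)$ in the claimed scaling (or alternatively, absorbed into the log factor by slightly weakening the statement). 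The additive $\sqrt{\log(k/\delta)}$ and $\frac{1}{\lambda_0}\log(dT/\delta)$ terms from Theorem~\ref{thm:greedy_regret_bound_thm} are of strictly lower order and are dominated by the first two, so they disappear inside the big-$O$ and the corollary is proved.
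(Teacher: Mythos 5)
Your overall strategy matches the paper's: reduce to Theorem~\ref{thm:greedy_regret_bound_thm} by exhibiting $(r,1/T)$-boundedness and $(r,\lambda_0)$-diversity for the Gaussian perturbations, citing the computations of \cite{kannan_smoothed_2018}, and then substituting $\lambda_0 = \Omega(\eta^2/\mathrm{polylog})$. However, there is a genuine gap in how you verify the hypotheses of Theorem~\ref{thm:greedy_regret_bound_thm}. That theorem assumes $\|a_{i,t}\|_2 \le 1$, and with $\|\mu_{i,t}\|_2 \le 1$ plus an unbounded Gaussian perturbation this simply fails; even after conditioning on the high-probability event that each coordinate of $\xi_{i,t}$ is at most $\hat{R} = \Theta(\eta\sqrt{\log(kdT/\delta)})$, one only gets $\|a_{i,t}\|_2 \le 1 + \sqrt{d}\,\hat{R} =: R'$. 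The paper handles this by invoking a variant of the theorem valid for $\|a_{i,t}\|_2 \le R'$ with $R' = O(1)$ (at the cost of an $(R')^{3/2}$ constant), and the hypothesis $\eta \le O\bigl((\sqrt{d\log(Tkd/\delta)})^{-1}\bigr)$ is exactly what makes $\sqrt{d}\,\hat{R} = O(1)$. You instead attribute that hypothesis to ensuring $r \le 1$, which does not account for the extra $\sqrt{d}$ factor (controlling $r$ alone would only need $\eta \lesssim 1/\sqrt{\log(kdT/\delta)}$) and leaves the $\|a_{i,t}\|_2$ hypothesis unaddressed. Relatedly, the diversity computation should be applied to the \emph{truncated} Gaussian obtained by this conditioning, not to the raw $\mathcal{N}(0,\eta^2 I)$.

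A second, smaller issue: your heuristic that the conditioning event $\htheta^T\xi \ge \hat{b}$ with $\hat{b} \le r\|\htheta\|_2$ ``keeps a constant fraction of the Gaussian mass'' is incorrect when $r = \Theta(\eta\sqrt{\log T})$, since then $\mathbb{P}[\htheta^T\xi \ge r\|\htheta\|_2]$ can be as small as $\Theta(1/T)$. The correct statement from \cite{kannan_smoothed_2018}, which the paper uses, is $\lambda_0 = \Omega(\eta^4/r^2) = \Omega(\eta^2/\log T)$; this dependence on $r$ is precisely the price of the conditioning not retaining constant mass. Since you defer to their Section~3.2 anyway, the final scaling you obtain is essentially right (your $\log(dT/\delta)$ versus the paper's $\log T$ in $\lambda_0$ only perturbs log factors, and substituting $\lambda_0 = \Omega(\eta^2/\log T)$ yields the stated $C(\log T)^2/\eta^2$ term exactly), but the justification as written would not survive scrutiny. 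Fix the $R'$ issue and route the diversity claim through the truncated distribution, and the proof is complete.
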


\begin{proof}
  The proof strategy is to invoke Theorem~\ref{thm:greedy_regret_bound_thm}, and in particular, since $k$ is assumed to be fixed, its variant stated in \eqref{eq:nicer_version}. 
  
    In Theorem~\ref{thm:greedy_regret_bound_thm}, it is assumed that $\|a_{i,t} \|_2 \leq 1$. However, the right-hand side of one here was only chosen for convenience, and (as noted in \cite{kannan_smoothed_2018}), the same result holds when $\|a_{i,t} \|_2 \leq R'$ for any given $R' \ge 1$ satisfying $R' = O(1)$.  This change only affects the constants, in particular introducing an $(R')^{3/2}$ term \cite{kannan_smoothed_2018}; this still behaves as $O(1)$ since we focus on the case that $R' = O(1)$.

    To invoke such a variant of Theorem~\ref{thm:greedy_regret_bound_thm}, we need to condition on an event that ensures $\|a_{i,t} \|_2 \leq R'$ for some constant $R'>0$ for every $i,t$ 
    and the context perturbations need to be ($r$, $1/T$)--bounded and $(r,\lambda_0)$--diverse for some $\lambda_0 > 0$ and $r \leq R'$. Next, we show these conditions hold for the case of Gaussian context perturbations.

  Towards that end, we start by outlining some results from \cite{kannan_smoothed_2018}. First \cite[Lemma 3.5]{kannan_smoothed_2018} states that when $\hat{R} \geq \eta \sqrt{ 2 \log(2kdT/\delta)}$, we have that:
  \begin{equation} \label{eq:leading_to_truncated_g}
    \mathbb{P}\big[|\xi_{i,t}[j]| \leq \hat{R}, \; 
    \forall i \in \lbrace 1, \dots, K \rbrace, t \leq T, j \in \lbrace 1, \dots, d \rbrace \big] \geq 1 - \delta/2.
  \end{equation}
  
  In what follows, we set $\hat{R}=2\eta \sqrt{ 2 \log(2kdT/\delta)}$, and
  we condition on the event in \eqref{eq:bounded_g} holding true.
  Then, \cite[Lemma 3.6]{kannan_smoothed_2018} states that when $\|\mu_{i,t} \|_2 \leq 1$, we have 
  \begin{equation} \label{eq:bounded_g}
    \|a_{i,t} \|_2 \leq 1 + \sqrt{d}\hat{R}:= R'\quad \text{ for all $i,t$},
  \end{equation}
  and the perturbations are $(r,1/T)-\text{bounded for } r \geq \eta \sqrt{2 \log T}$.  

  The perturbation distribution conditioned on the event in~\eqref{eq:leading_to_truncated_g} holding true is a truncated Gaussian supported on $[-\hat{R}, \hat{R}]$, and satisfies $(r, \lambda_0)$-diversity when \cite{kannan_smoothed_2018}
  \begin{equation} \label{eq:diverse_g}
    \lambda_0 = \Omega\Big(\frac{\eta^4}{r^2}\Big) = \Omega\Big(\frac{\eta^2}{\log T}\Big).
  \end{equation} 
  
   Finally, from the definitions of $R'$ in \eqref{eq:bounded_g} and $\hat{R}=2\eta \sqrt{ 2 \log(2kdT/\delta)}$, we have 
  \begin{equation}
    R' = 1 + \sqrt{d}\hat{R} \leq  2 \max \lbrace 1, \sqrt{d}\hat{R} \rbrace
    = 2 \max\lbrace 1, 2 \eta\sqrt{2d\log(dkT/ \delta)} \rbrace,
  \end{equation}
  and we see that to have $R' = O(1)$, it suffices to have $\eta \leq O((\sqrt{d \log(Tkd/\delta)})^{-1})$. 
  
  Finally, we apply the above mentioned variant of Theorem~\ref{thm:greedy_regret_bound_thm} with parameter $R' = O(1)$.     
  By the union bound, the events in Theorem~\ref{thm:greedy_regret_bound_thm} and event~\eqref{eq:bounded_g} simultaneously hold with probability at least $1- \delta$. By substituting the bound on $\lambda_0$ from \eqref{eq:diverse_g} into \eqref{eq:nicer_version}, we arrive at
  \begin{equation}
    R_T = O \Big(\tfrac{\sqrt{Td}}{\eta^2} \log\big(\tfrac{dT}{\delta}\big)^{3/2} + \tfrac{C(\log T)^2}{\eta^2} \Big).
  \end{equation}
\end{proof}

\section{Proofs of Lower Bounds} \label{sec:pf_lower}

We prove Theorems \ref{thm:lower_d} and \ref{thm:lower_C} in Sections \ref{sec:conv_knownC_2} and \ref{sec:conv_unknownC} respectively.  In Sections \ref{sec:conv_knownC_1} and \ref{sec:conv_diverse}, we prove two $\Omega(C)$ lower bounds using standard arguments, e.g., see \cite{lykouris2018stochastic}.

%

\subsection{Lower Bound for $d = k = 2$ (Unknown $C$)} \label{sec:conv_unknownC}

Here we show that for $d=2$ dimensions and $k=2$ arms, for any algorithm that guarantees $R_T \le \bar{R}^{(0)}_T$ (say, with probability $1 - \delta$) for some uncorrupted regret bound $\bar{R}^{(0)}_T \le \frac{T}{16}$ when $C = 0$, there exists an instance in which $R_T = \Omega(T)$ (again with probability $1-\delta$) when the attack budget is $C = 2 \bar{R}^{(0)}_T$.  We show that this is true even with no noise, i.e., $\epsilon_t = 0$ for all $t$.

To prove this, consider an instance with feature vectors $a_1 = \big[\frac{1}{2},0\big]^T$ and $a_2 = \big[0,\frac{1}{4}\big]^T$, and parameter vector $\theta = \big[\frac{1}{2},\frac{1}{2}\big]^T$.  In this case, pulling $a_1$ incurs zero regret, and pulling $a_2$ incurs regret $\frac{1}{8}$.  Hence, by the assumption $R_T \le \bar{R}^{(0)}_T$, we see that $a_2$ is pulled at most $8 \bar{R}^{(0)}_T$ times when $C = 0$.

Now consider a different instance, with feature vectors $a_1 = \big[\frac{1}{2},0\big]^T$ and $a_2 = \big[0,\frac{3}{4}\big]^T$, and again $\theta = \big[\frac{1}{2},\frac{1}{2}\big]^T$.  In this case, $a_2$ incurs zero regret, and $a_1$ incurs regret $\frac{1}{8}$.  Suppose that $C = 2\bar{R}^{(0)}_T$, and consider an adversary that pushes the reward of $a_2$ from $\frac{3}{8}$ down to $\frac{1}{8}$ whenever it is pulled, at a cost of $c_t(A_t) = \frac{1}{4}$.  Since $C = 2\bar{R}^{(0)}_T$, the adversary can afford to do this $8\bar{R}^{(0)}_T$ times.

However, as long as the adversary is corrupting, the observed rewards are exactly the same as in the first instance above, in which we established that $a_2$ is pulled at most $8 \bar{R}^{(0)}_T$ times.  Since we assume that $\bar{R}^{(0)}_T \le \frac{T}{16}$, it follows that $a_1$ is pulled at least $T - 8 \bar{R}^{(0)}_T \ge \frac{T}{2}$ times, leading to $\Omega(T)$ regret.

\subsection{Lower Bound for $d=1$ and $k=2$ (Known $C$)} \label{sec:conv_knownC_1}

In the case that $C$ is known, we can obtain an $\Omega(C)$ lower bound using a simple argument from \cite[Sec.~5]{lykouris2018stochastic}, which we reproduce here for completeness.  Consider an instance with feature scalars $a_1 = 1$ and $a_2 = -1$.  Clearly, arm 1 is better when $\theta = 1$, but arm 2 is better when $\theta = -1$, and in both cases, the worse arm incurs regret $2$.
Consider the case that there is no random noise, and suppose that the adversary shifts every reward to zero until its budget is depleted, i.e., for $\lfloor C \rfloor$ rounds.  During these rounds, the learner must pull some arm at least $\lfloor C \rfloor / 2$ times, and for one of the two values of $\theta \in \{-1,1\}$, a cumulative regret of at least $\lfloor C \rfloor$ is incurred.  Since the two $\theta$ values are indistinguishable during these rounds, we conclude that $\Omega(C)$ regret is unavoidable.

\subsection{Lower Bound for $d = k > 2$ (Known $C$)} \label{sec:conv_knownC_2}

Here we generalize the argument of the previous subsection to deduce a stronger lower bound with a joint dependence on $d$ and $C$.  We consider the case that $d = k$, with $a_i$ being the $i$-th standard basis vector.  Hence, for any $\theta \in \RR^d$, we have $\langle a_i, \theta\rangle = \theta_i$.  We again consider the noiseless setting.

Consider $d$ different bandit instances, the $i$-th of which has $\theta = a_i$.  Hence, in the $i$-th instance, arm $i$ has reward $1$, and the rest have reward zero.  In addition, consider an adversary that pushes the reward of the $i$-th arm down to zero whenever it is pulled; this can again be done $\lfloor C \rfloor$ times.  Roughly speaking, the learner can do no better than pull each arm in succession, incurring $\Omega(Cd)$ regret.

To make this more precise, note that in the $i$-th instance, the adversary only runs out of its budget after the $i$-th arm is pulled $\lfloor C \rfloor$ times.  However, after $\big\lfloor \frac{\lfloor C\rfloor d}{2} \big\rfloor $ rounds, there must remain at least $\frac{d}{2}$ arms that have not been pulled $\lfloor C \rfloor$ times.  When $\theta = a_i$ for any $i$ corresponding to one of these $\frac{d}{2}$ arms, the regret incurred is $\Omega(Cd)$.


\subsection{Lower Bound for Diverse Contexts (Known $C$)} \label{sec:conv_diverse}

Finally, we argue that the approach of Section \ref{sec:conv_knownC_1} gives an $\Omega(C)$ lower bound on $R_T$ even under the assumption of diverse contexts.  Recall that in \eqref{eq:diverse} we consider fixed center points $\mu_1,\dotsc,\mu_k$ and assume that these are perturbed by $\xi \sim D$.  The following argument holds under any such setup satisfying the mild assumption that, in each round, a constant positive fraction (e.g., $0.01$) of the arms have regret lower bounded by some positive constant (e.g., $0.01$).

We again consider an adversary that pushes the reward to zero (while leaving the random reward noise unchanged) until the budget is exhausted.  Hence, for the first $\lfloor C \rfloor$ rounds, the learner learns nothing about $\theta$. The preceding assumption rules out pathological cases such as all arms being identical, and we conclude that constant regret is incurred per round (with constant probability), for $\Omega(C)$ regret total.\looseness=-1

\section{Attack Methods} \label{sec:attacks}

Recall that we consider the model $Y_t =  \langle \theta, A_t \rangle + \epsilon_t + c_t(A_t)$, where $\epsilon_t$ is random noise and $c_t(A_t)$ is the adversarial corruption.  In our experiments, we consider four attacks, summarized as follows

{\bf Garcelon {\em et al.}~attack.} In an attack proposed in \cite{garcelon_adversarial_2020}, the attacker selects a {\em target arm} $a_{\rm target} \in \A$ that it wants to trick the learner into thinking is optimal, and operates as follows: (i) If $a_{\rm target}$ is pulled, leave the reward unchanged; (ii) If any other arm is pulled, change the reward so that $Y_t = \tilde{\epsilon}_t$, with $\tilde{\epsilon}_t$ being artificial random noise generated by the adversary.

Since our adversary is assumed to know $\langle \theta, A_t \rangle$ and $\epsilon_t$ individually, we can alter this attack to remove the need for artificial noise; instead, in case (ii) above, the adversary shifts $\langle \theta, A_t \rangle$ down to zero, while leaving $\epsilon_t$ unchanged.  However, if $a_{\rm target}$ has a negative reward, then this attack will not make it appear optimal, so we also allow the adversary to shift down to a more generic value $v_{\rm target}$. 
In our experiments, we set $v_{\rm target} = -1$, which is the smallest possible reward.

Of course, the adversary will eventually run out of budget eventually, at which point the attack stops.  The same applies to all of the alternative attacks below.

{\bf Oracle MAB attack.} Since the adversary has full knowledge of the instance, we can consider the oracle attack proposed in \cite{jun_adversarial_nodate}, in which for some target arm $a_{\rm target}$, the following is performed: (i) If $a_{\rm target}$ is pulled, leave the reward unchanged; (ii) If any other arm $a$ is pulled, shift the reward down by $\max\{0, \langle \theta, a \rangle - \langle \theta, a_{\rm target} + \epsilon_0 \rangle\}$ for some $\epsilon_0 > 0$.  This means that every other arm looks $\epsilon_0$-suboptimal compared to $a_{\rm target}$.

{\bf Simple $\theta$-based attack.} In the contextual setting, fixing a target arm $a_{\rm target}$ by index (e.g., the first) may not be the most suitable choice, since the contexts are changing every round.  In the setup of Section \ref{sec:contextual}, we are primarily interested in the case that the perturbations are small, which mitigates this issue.  Nevertheless, when the perturbation variance $\eta$ becomes large enough, it is likely more effective for the attack to use a different strategy to choose $a_{\rm target}$.

Thus, we propose an attack that tries to make the arm most aligned with some vector $\theta_{\rm target}$ to appear best.  To do this, we simply the above variation of the Garcelon {\em et al.}~attack, but instead of letting $a_{\rm target}$ correspond to a fixed arm index (e.g., the first arm), the attacker updates $a_{\rm target}$ every round, choosing $a_{\rm target} = \argmax_{a \in \A_t} \langle \theta_{\rm target}, a \rangle$.

{\bf Flip-$\theta$ attack.} This attack simply flips the reward from $\langle \theta, a \rangle$ to $\langle -\theta, a \rangle$.  
This attack can be considered as highly aggressive, potentially using the budget quickly to the rewards appear to be the complete opposite of what they really are.

\section{Additional Experimental Details and Results} \label{sec:more_exp}

\subsection{Additional Details}

{\bf Details of contextual experiment.} The synthetic experimental setup of Section \ref{sec:exp_context} is detailed as follows.  We generate $k = 25$ ``center points'' $\mu_1,\dotsc,\mu_k$ (one per arm), each having entries drawn i.i.d.~from the uniform distribution on $\big[-\frac{1}{\sqrt d}, \frac{1}{\sqrt d}\big]$. The contexts $\{a_{i,t}\}_{i=1}^k$ at each time $t$ are then created by letting $a_{i,t} = \mu_i + \xi_{i,t}$, where $\xi_{i,t}$ are i.i.d.~$\mathcal{N}\big( 0,\frac{\eta^2}{d} I_d \big)$ for some variance $\eta^2 > 0$.  We fix the true parameter vector as $\theta = \big(\frac{1}{\sqrt d},\dotsc,\frac{1}{\sqrt d}\big)$,  
and we assume that observations are subject to $\mathcal{N}\big( 0, \frac{\sigma^2}{d} I_d \big)$ noise with $\sigma^2 = 0.05$.

{\bf Details of MovieLens experiment.} The MovieLens experimental setup of Section \ref{sec:exp_context} follows \cite{bogunovic2018adv}, and is detailed as follows.  The data for 1682 movies and 943 users takes the form of an incomplete matrix $\R$ of ratings, where $R_{i,j}$ is the rating of movie $i$ given by the user $j$. To impute the missing rating values, we apply non-negative matrix factorization with $d=15$ latent factors. This produces a feature vector for each movie $\m_i \in \mathbb{R}^{d}$ and user $\vu_j \in \mathbb{R}^{d}$. We use $10\%$ of the user data for training, in which we fit a Gaussian distribution $\mathcal{N}(\vu| \mu, \Sigma)$.  The reward for movie $i$ is given by $\langle \m_i, \vu_j \rangle$ for some fixed $j$.

{\bf Changes to the robust PE algorithm.} The parameters in Algorithm \ref{alg:cpe} (e.g., $\hat{C}_h$ and $m_0$) were chosen for convenience in the theoretical analysis that ignores constants, but we found that alternative choices are preferable in practice.  Accordingly, we run the algorithm with the following modifications: (i) $m_0 = d$; (ii) $\hat{C}_h = \min\{\sqrt{T}, 2^{\log_2 T - h}\}$; (iii) the right-hand side of \eqref{eq:retain_arms_condition} is replaced by $2\sqrt{\tfrac{4d}{m_{h}}\log\big(\tfrac{1}{\delta}\big)} + \tfrac{2\hat{C}_h}{m_h} \sqrt{4d}$; (iv) We fix $\delta = 0.1$ and $\nu = 0.05$.  Thus, the key changes are removing the $m_0$ terms from $\hat{C}_h$, and removing the division by $\nu$ in the elimination condition. 

\subsection{Additional Results}

{\bf Contextual setting: Budget vs.~Regret.} In Figure \ref{fig:attacks2}, we provide analogous plots to Figure \ref{fig:attacks} (Left) for all three algorithms (Greedy, LinUCB, and Thompson sampling) and two choices of $\eta$ ($0.2$ and $0.5$).  In all cases, we see a similar linear trend to that of Figure \ref{fig:attacks2}.

{\bf Non-contextual setting with 40 trials and unknown $C$.} In Figure \ref{fig:noncontext2}, we provide analogous plots to Figure \ref{fig:noncontext}, but with 40 trials instead of 10, and showing the worst 4 out of 40 curves instead of the worst 2 out of 10.  We observe similar findings to those discussed in Section \ref{sec:exp_noncontext}

{\bf Non-contextual setting with known $C$.} In Figure \ref{fig:noncontext_kc}, we provide analogous plots to Figure \ref{fig:noncontext} when Algorithm \ref{alg:cpe} is used with known $C$, with the following modifications similar to the unknown $C$ case: (i) $m_0 = d$; (ii) the right-hand side of \eqref{eq:retain_arms_condition} is replaced by $2\sqrt{\tfrac{4d}{m_{h}}\log\big(\tfrac{1}{\delta}\big)} + \tfrac{C}{m_h} \sqrt{d}$; (iii) We fix $\delta = 0.1$ and $\nu = 0.05$.  The attack is chosen to start during the same epoch as the unknown $C$ case.  From the regret plots, we observe broadly similar behavior to the unknown $C$ case, with the exception that the regret is considerably lower when there is no attack (i.e., $C = 0$).  This is to be expected, since in the known $C$ case, knowing that $C = 0$ means that one can confidently eliminate arms much faster.

%


\begin{figure}[t]
    \centering
    \includegraphics[width=0.325\textwidth]{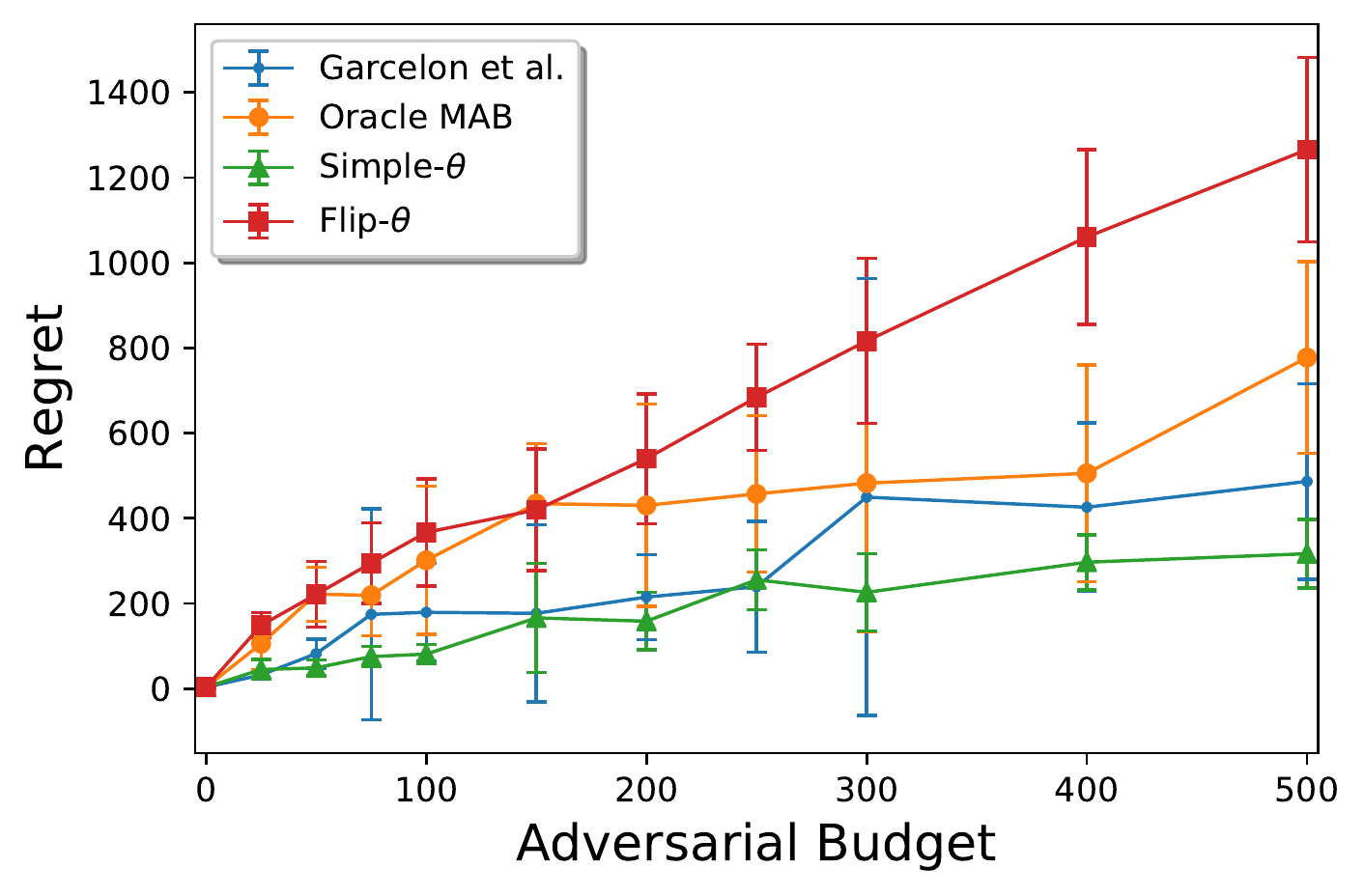}
    \includegraphics[width=0.325\textwidth]{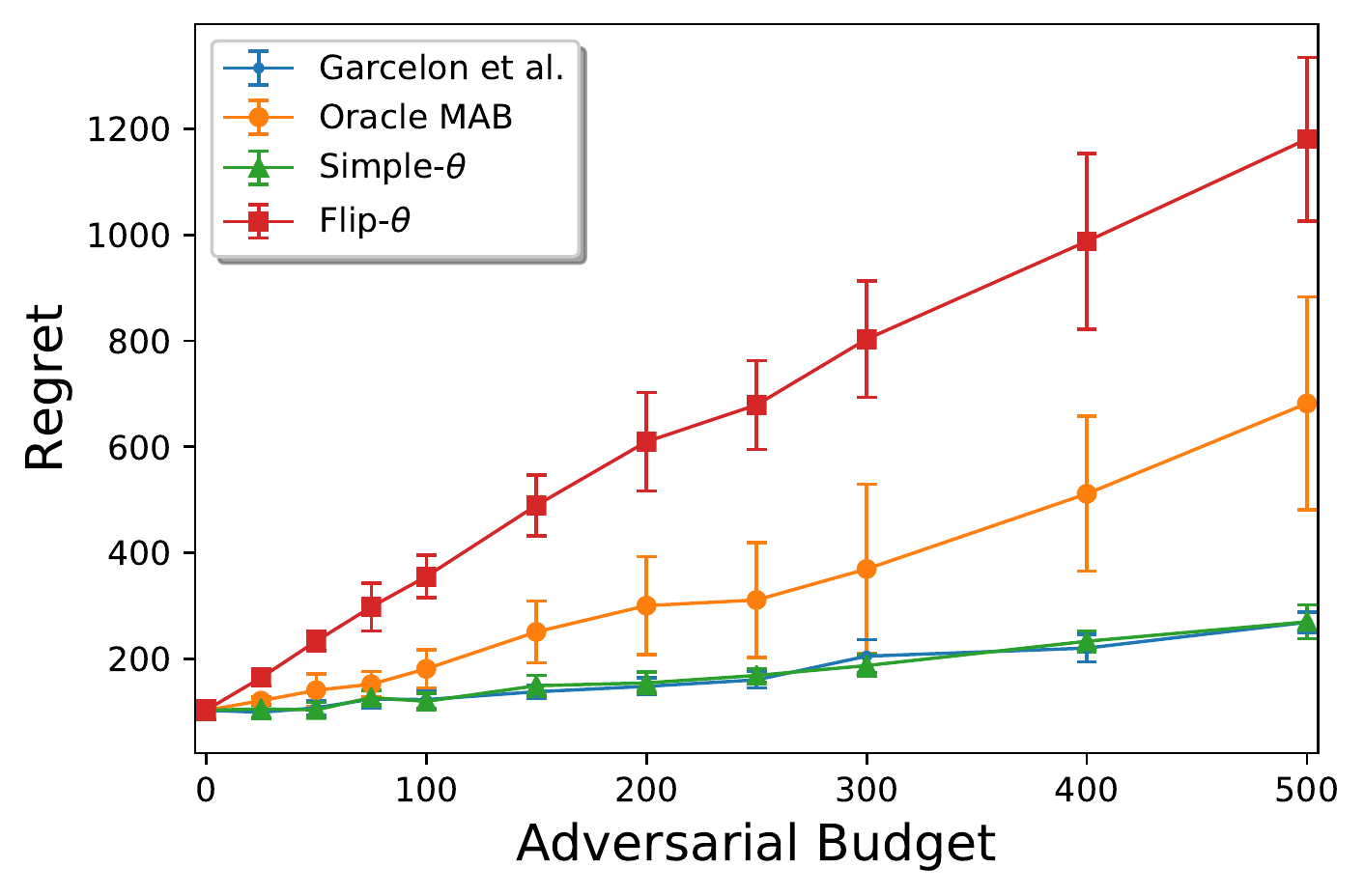}
    \includegraphics[width=0.325\textwidth]{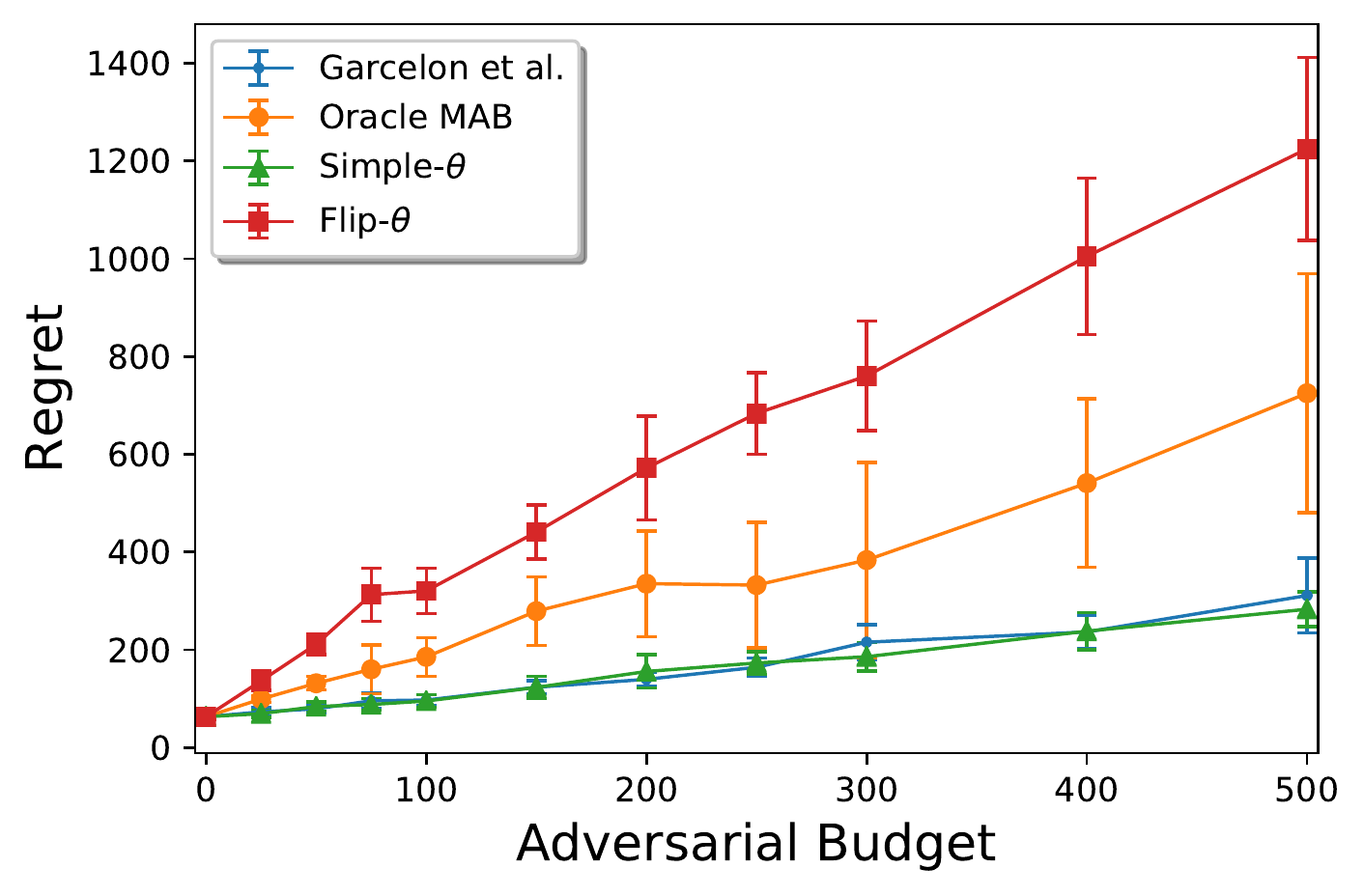}
    \includegraphics[width=0.325\textwidth]{figs/greedy_at_3500_sigma_05.pdf}
    \includegraphics[width=0.325\textwidth]{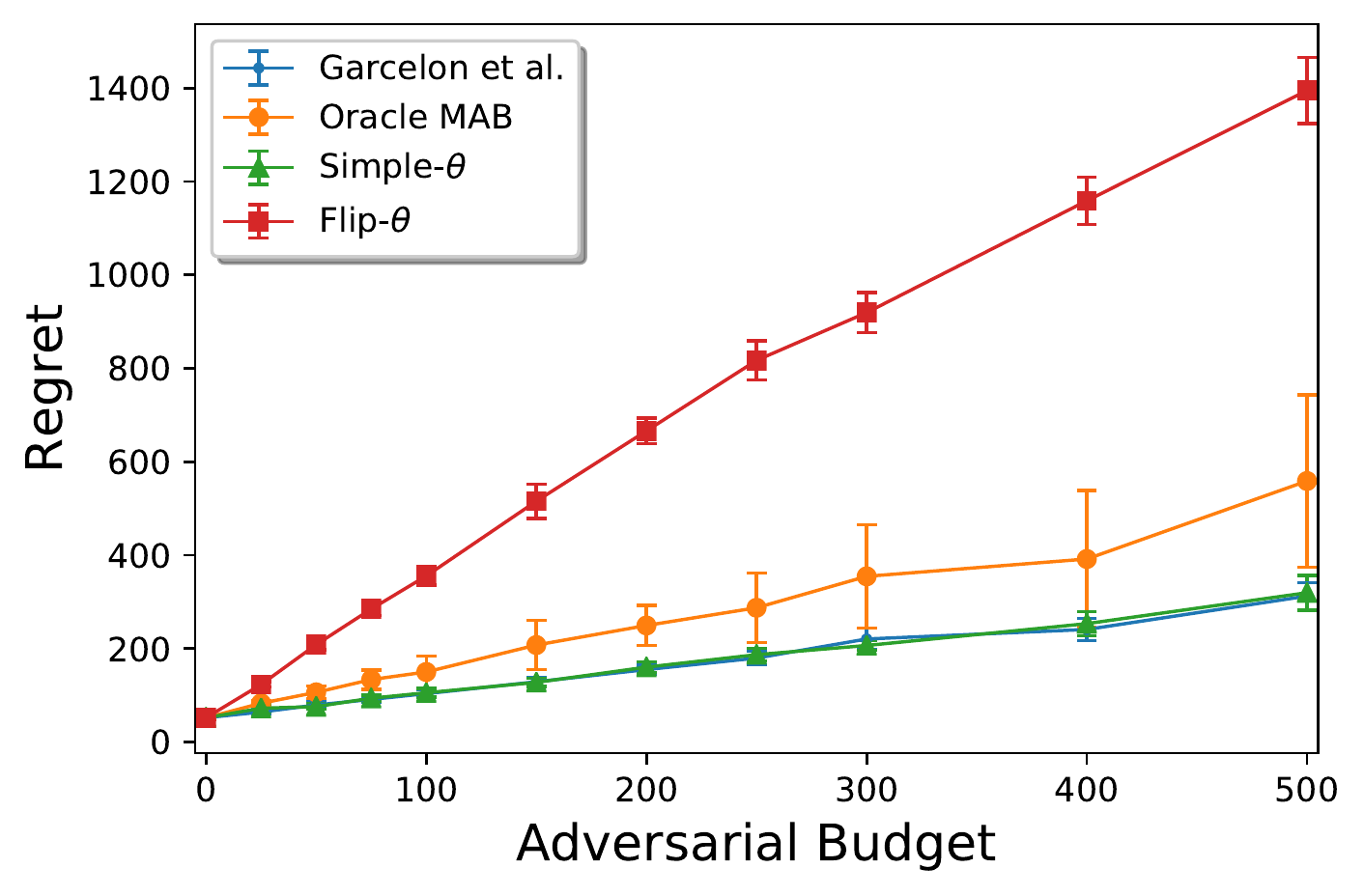}
    \includegraphics[width=0.325\textwidth]{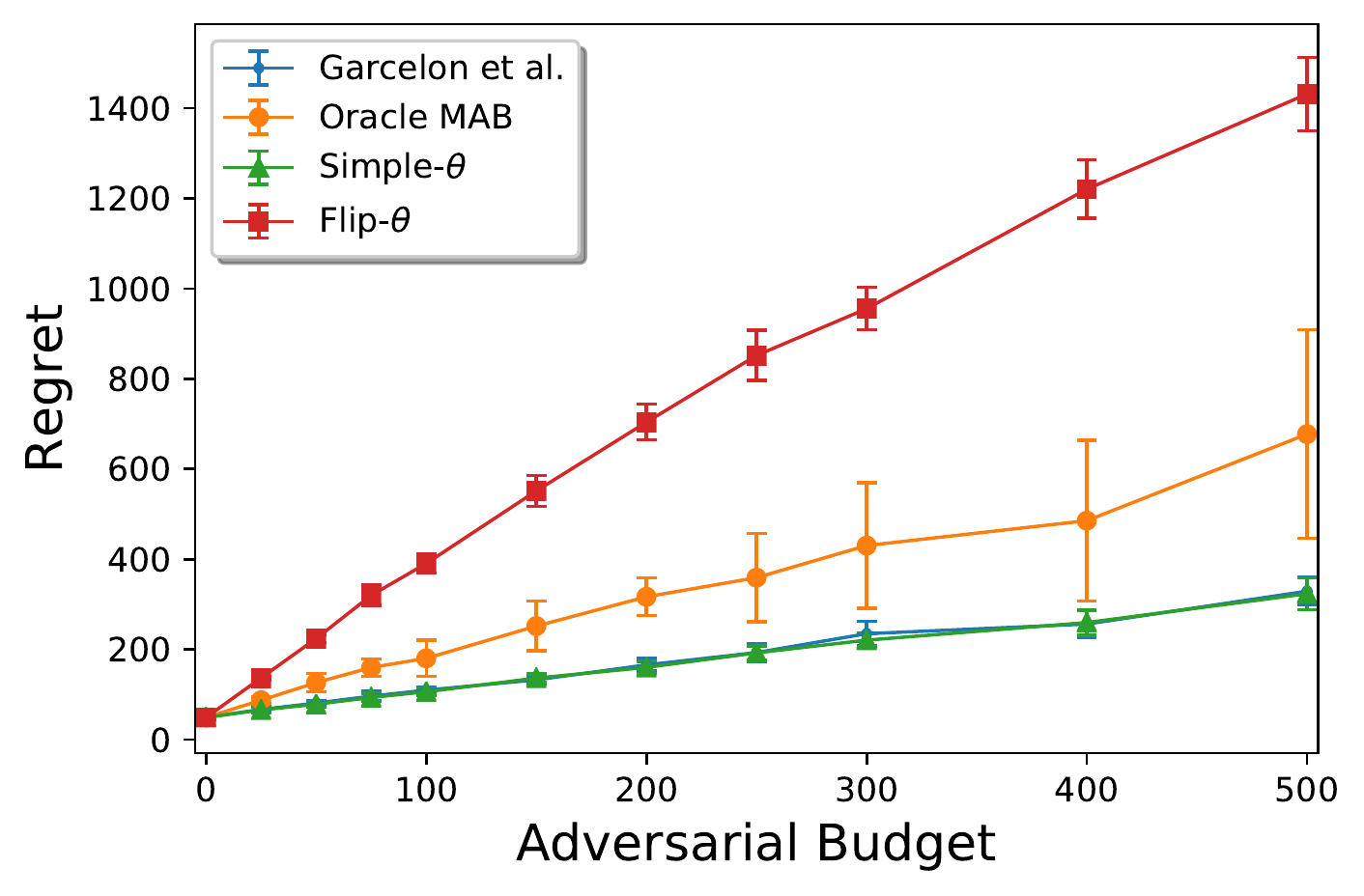}
    \caption{Contextual synthetic experiments: Regret as a function of $C$ with Greedy (Left), LinUCB (Middle), and Thompson Sampling (Right), under the perturbation levels $\eta = 0.2$ (Top) and $\eta = 0.5$ (Bottom).}
    \label{fig:attacks2}
    \vspace*{-1ex}  
\end{figure} 

\begin{figure}[t]
    \centering
    \includegraphics[width=0.325\textwidth]{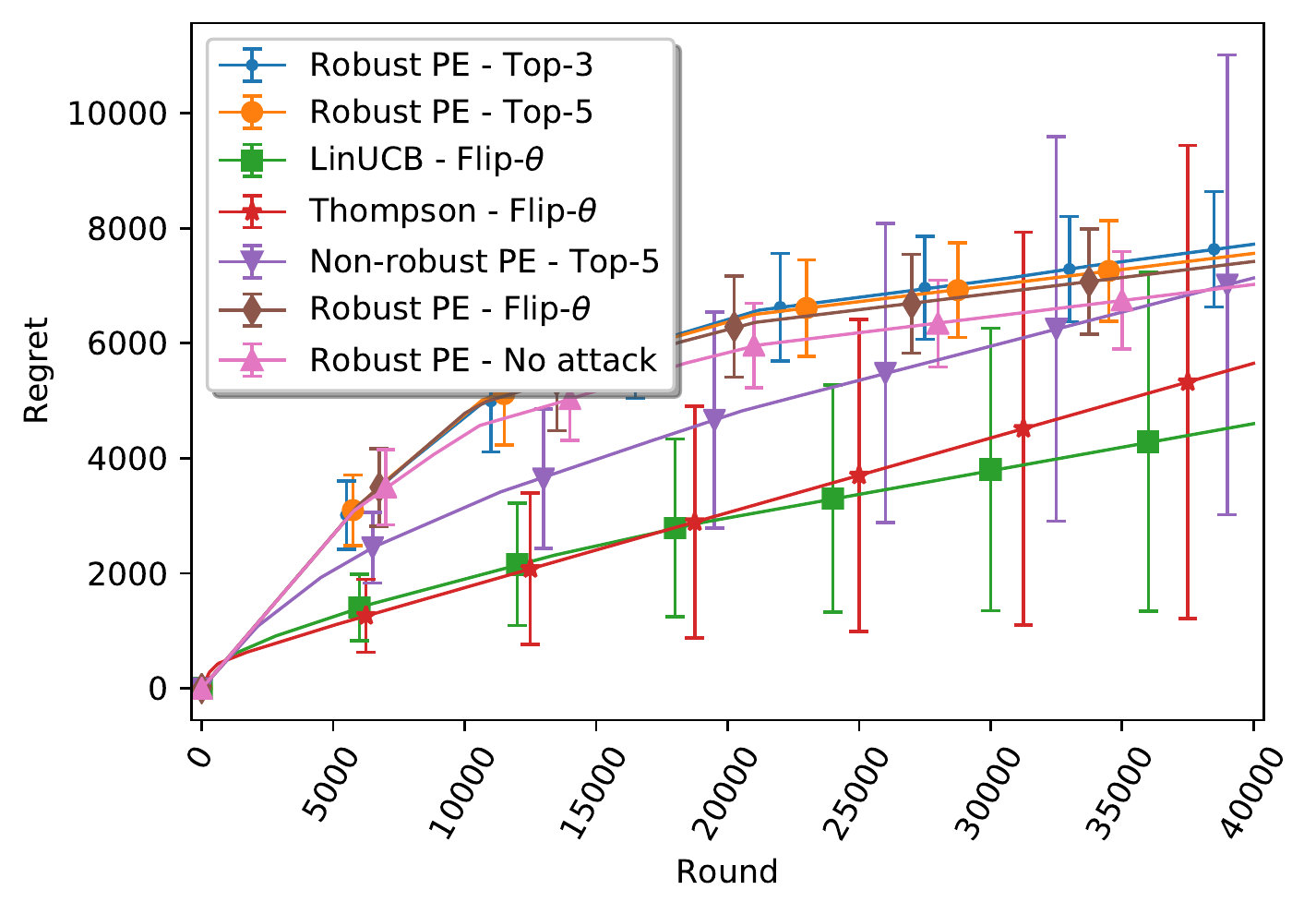}
    \includegraphics[width=0.325\textwidth]{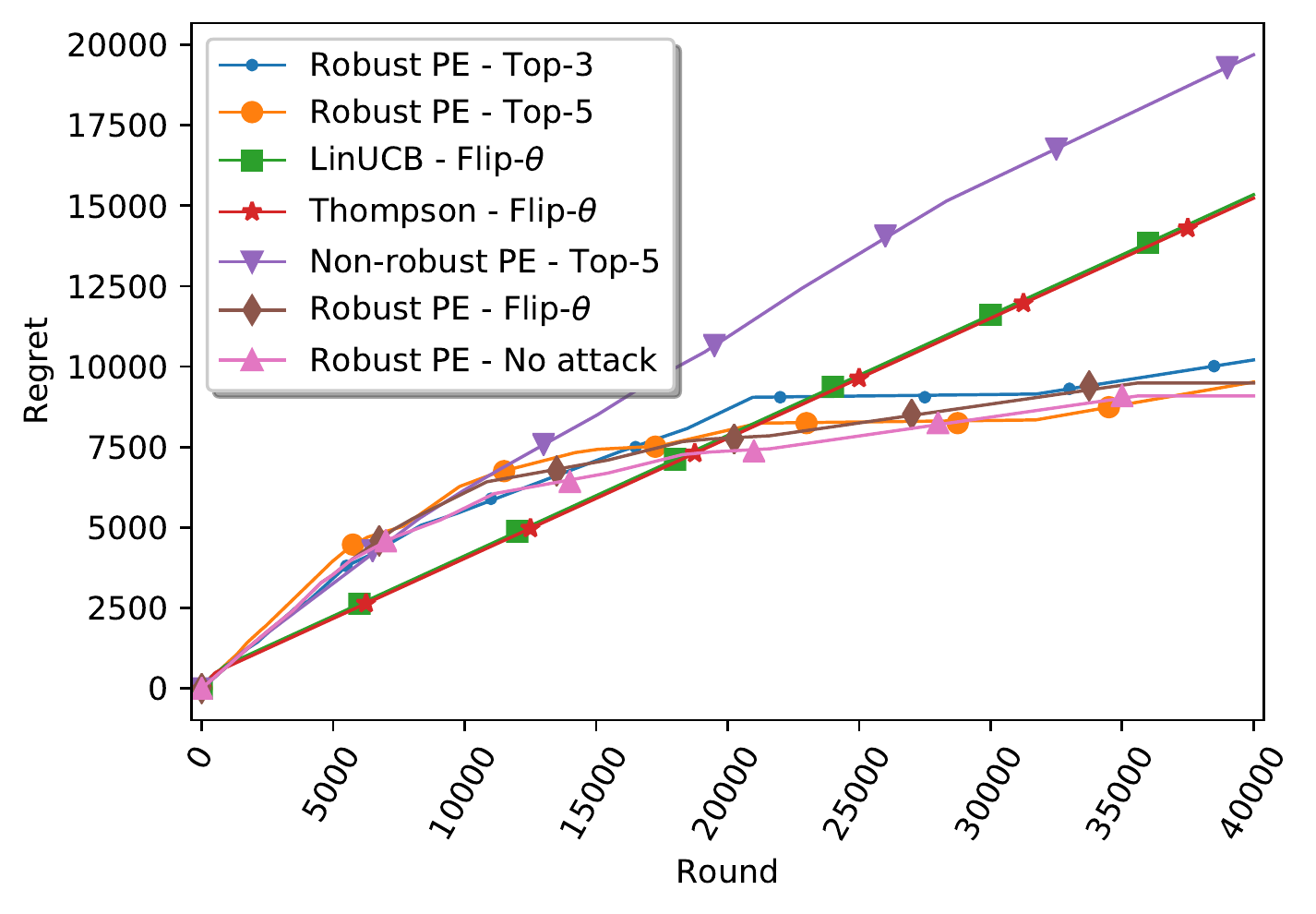}
    \includegraphics[width=0.325\textwidth]{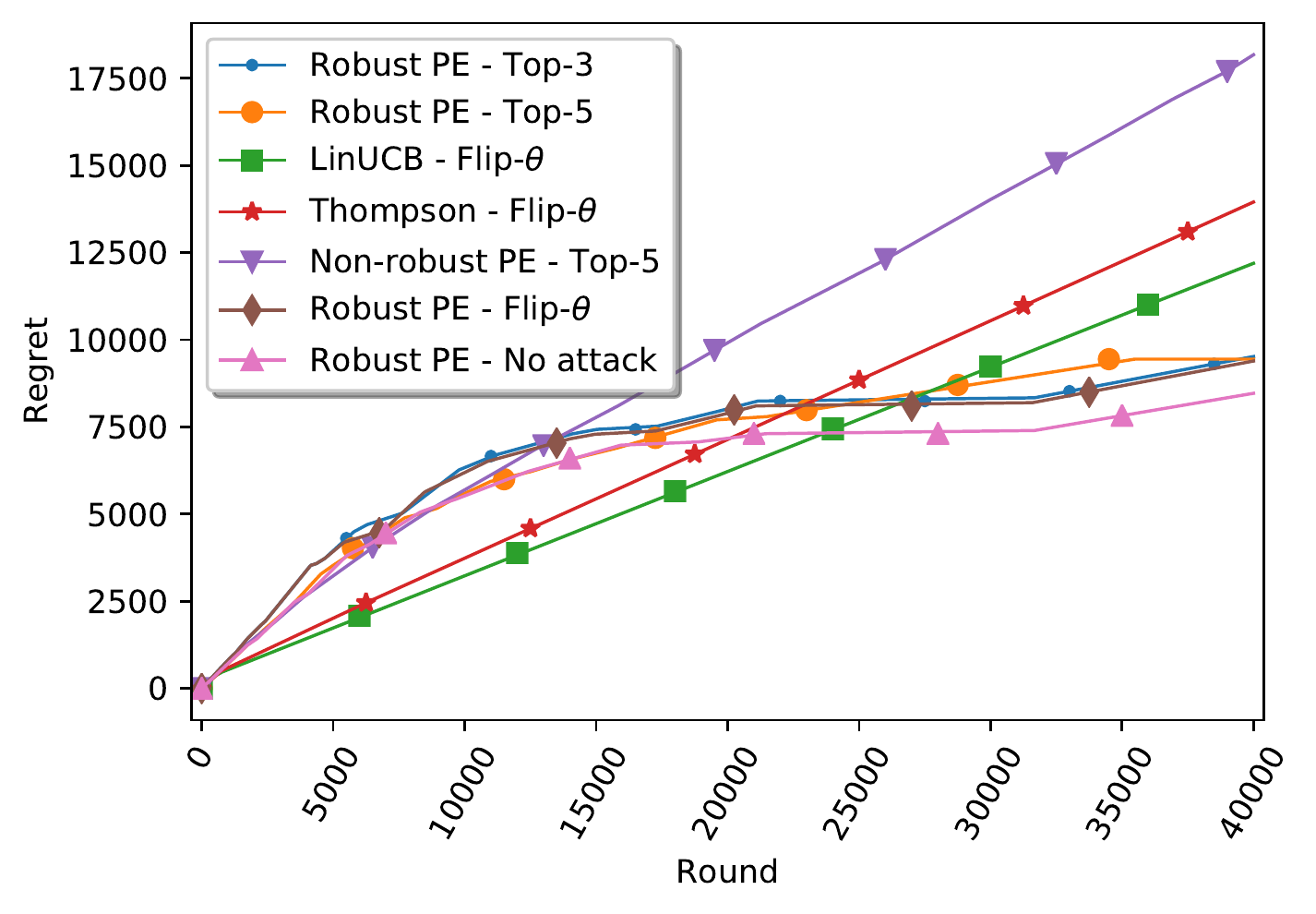}
    \includegraphics[width=0.325\textwidth]{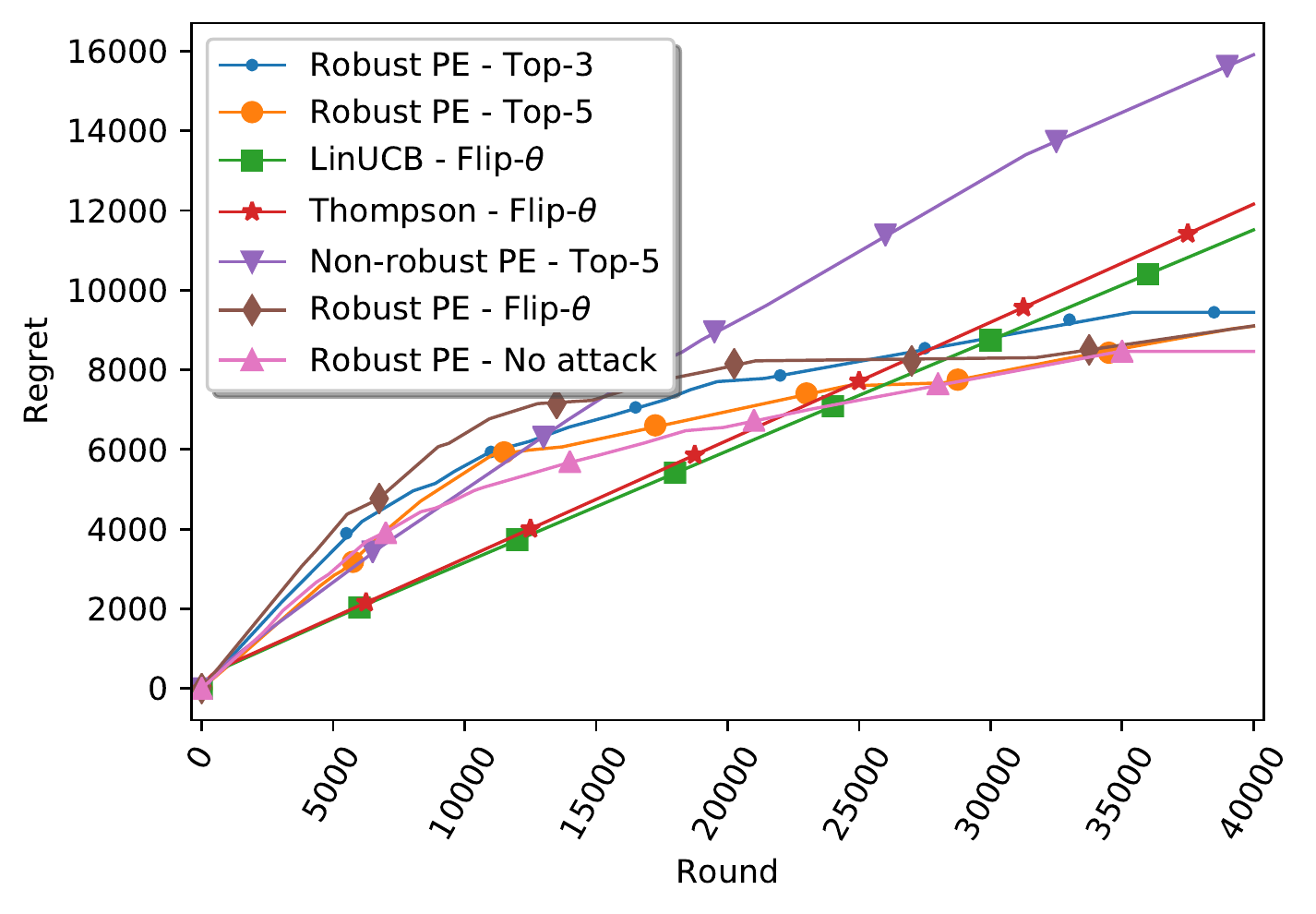}
    \includegraphics[width=0.325\textwidth]{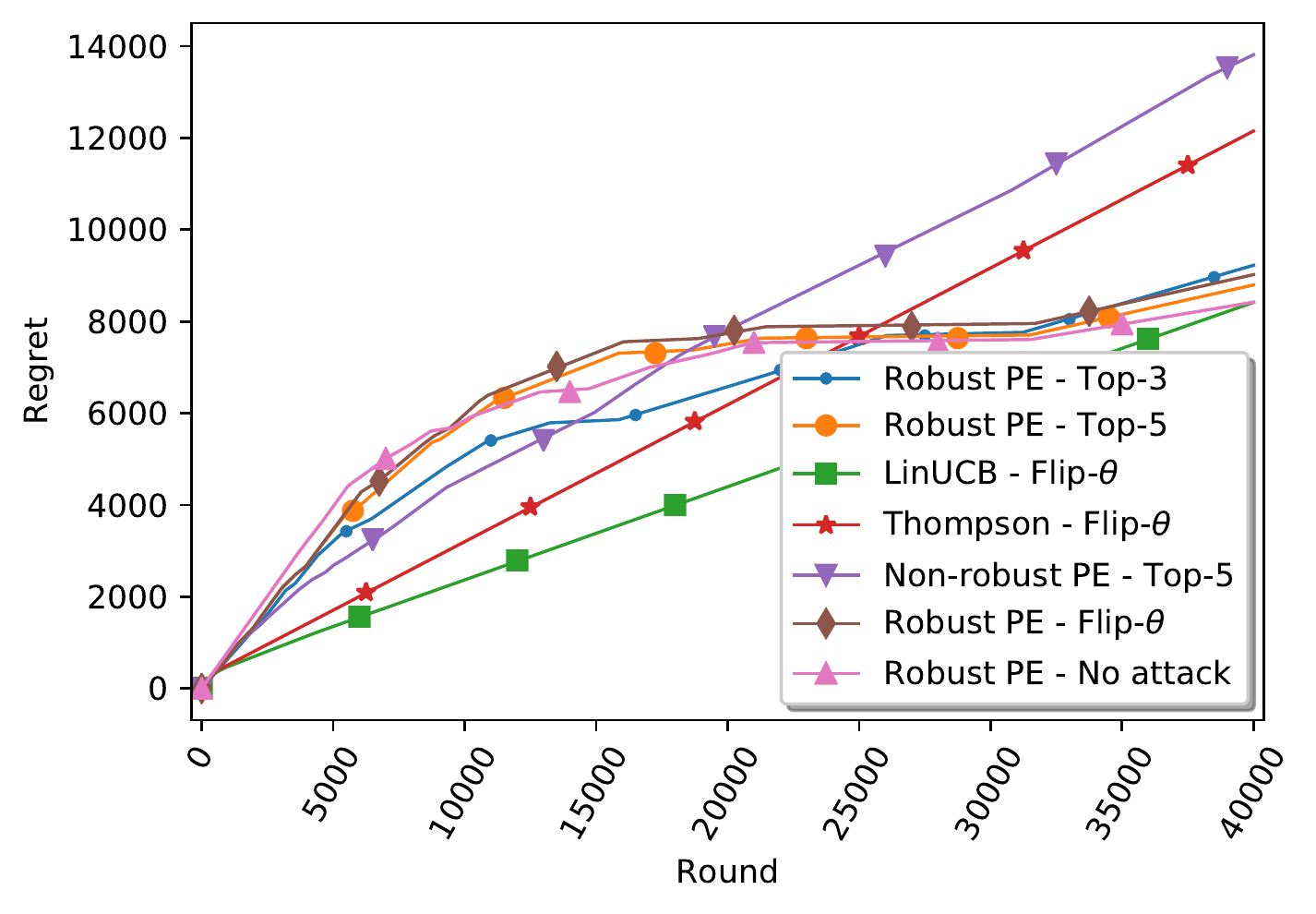}
    \caption{Non-contextual synthetic experiment with 40 trials: (Top-Left) Average regret as a function of time; (Remaining) Worst 4 runs among 40.}
    \label{fig:noncontext2}
    \vspace*{-1ex}  
\end{figure} 

\begin{figure}[t]
    \centering
    \includegraphics[width=0.3\textwidth]{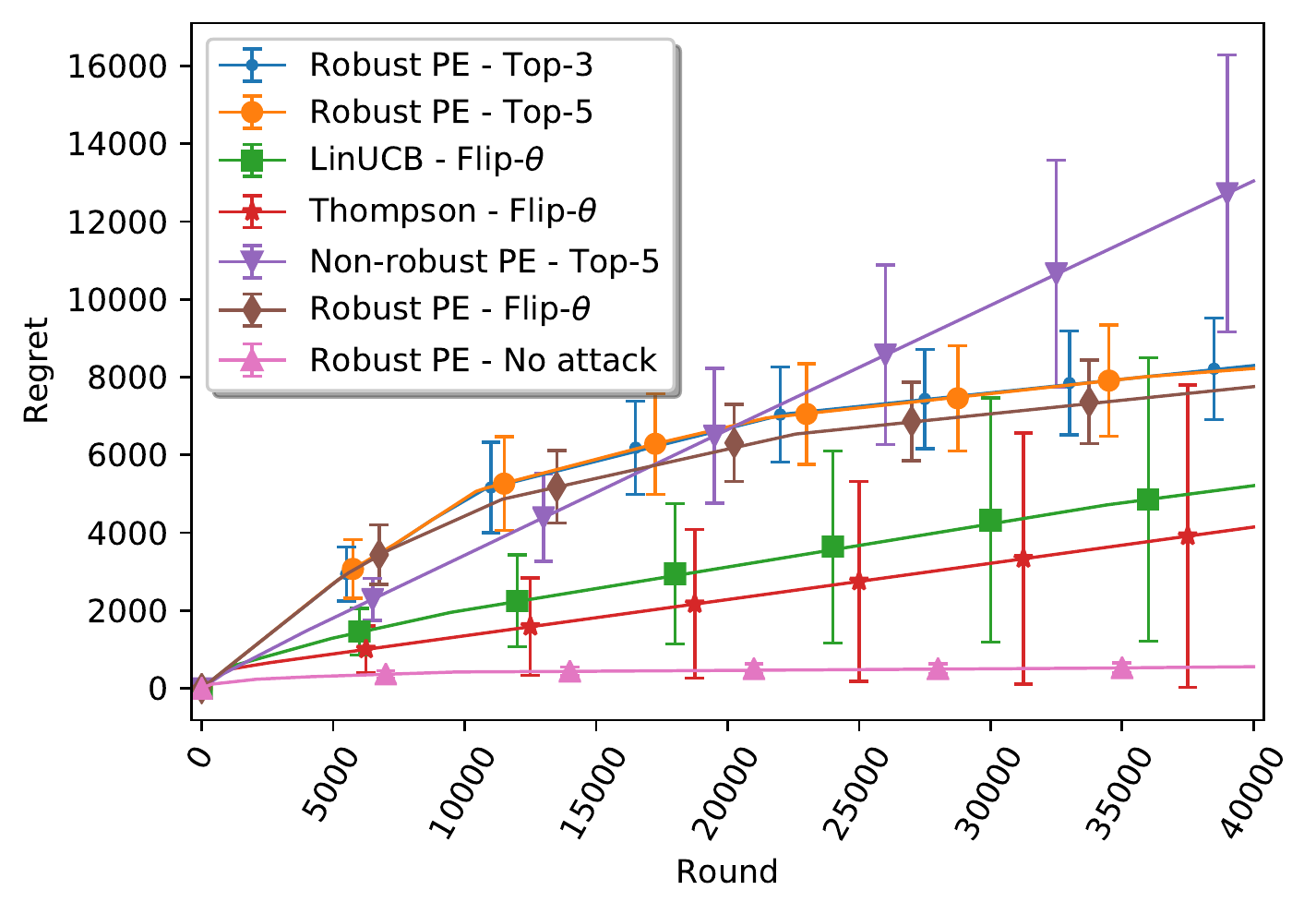}
    \includegraphics[width=0.3\textwidth]{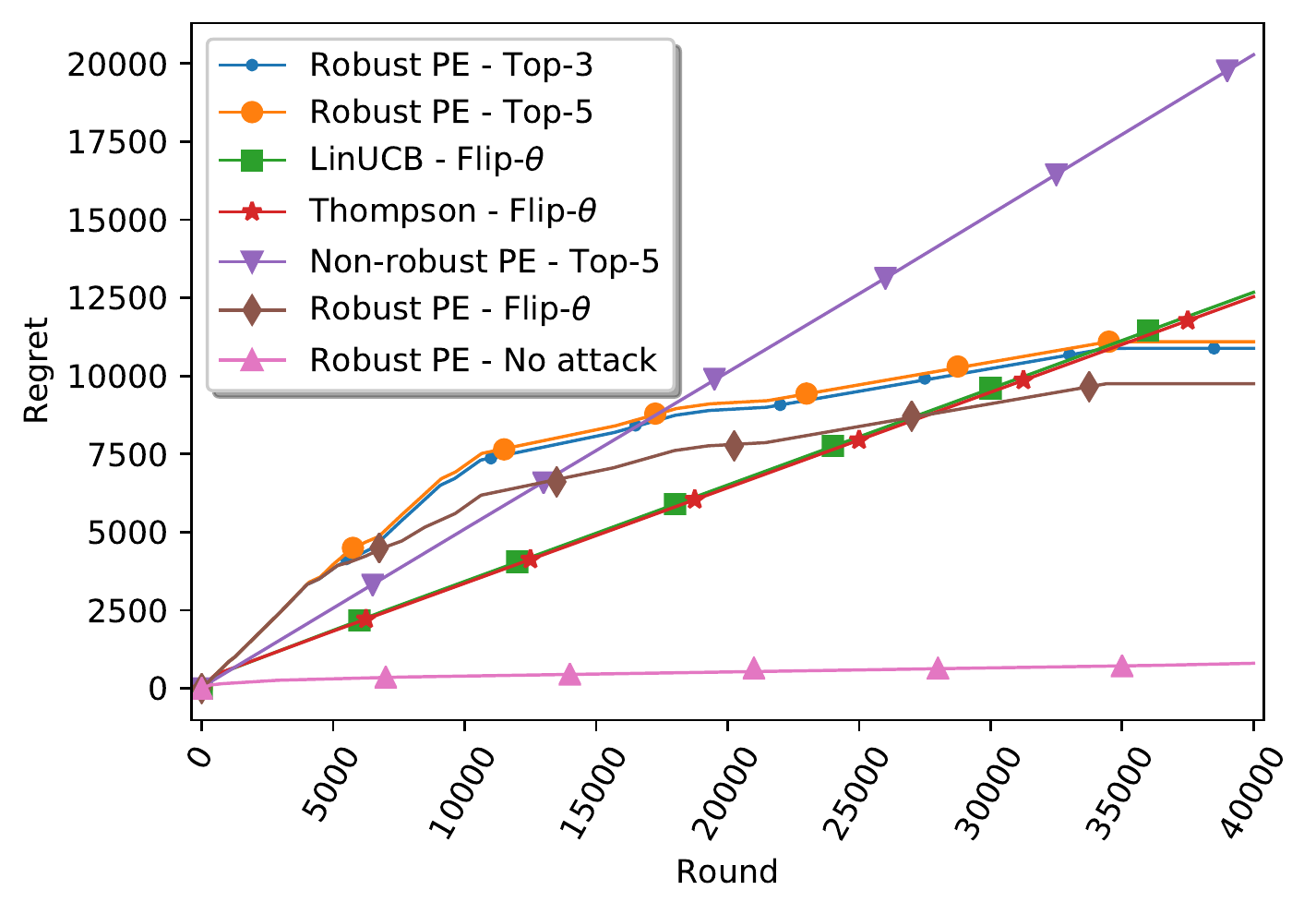}
    \includegraphics[width=0.3\textwidth]{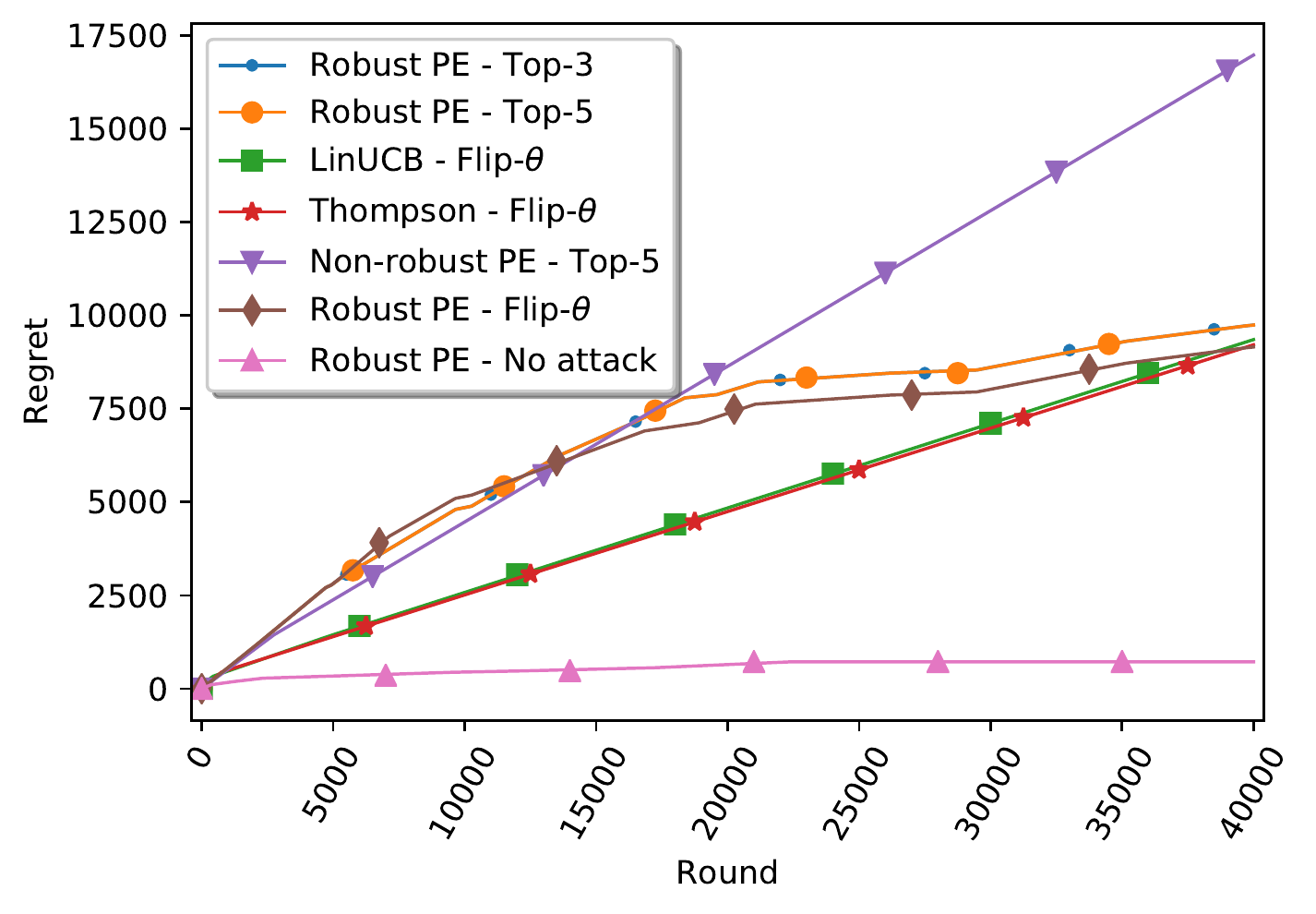}
    \caption{Non-contextual synthetic experiment with 10 trials and known $C$: (Left) Average regret as a function of time; (Middle) Worst run among 10; (Right) Second-worst run among 10.}
    \label{fig:noncontext_kc}
    \vspace*{-2.5ex}  
\end{figure} 

\section{Additional Related Works}\label{sec:more_rw}

\textbf{Adversarial bandits.} 
The most well-known adversarial multi-armed bandit problem is that in which the rewards are arbitrary, and the goal is to compete with the best fixed arm in hindsight \cite[Ch.~11]{lattimore2018bandit}, with the EXP3 algorithm being a particularly widely-adopted solution.  As discussed in \cite{lykouris2018stochastic}, in corrupted stochastic bandit problems, one still seeks to exploit the underlying stochastic structure, but in a more robust manner.  Accordingly, very different solutions are adopted.

A prominent adversarial setting specific to linear bandits is that handled by the EXP2 algorithm \cite{bubeck2012towards}.  However, this setting concerns {\em adversarial contexts} and {\em non-corrupted rewards}, whereas we consider{\em corrupted rewards}; as a result, the two settings and their bounds do not appear to be comparable.

\textbf{Best of both worlds.} The stochastic setting often leads to significantly smaller regret bounds, but at the expense of potentially restrictive modeling assumptions. Algorithms attaining the {\em best of both worlds} (stochastic and adversarial) \cite{bubeck2012best,seldin2014one,auer2016algorithm} are also related to the corruption-tolerant setting, but consider an unbounded adversary (and a different regret notion) in the adversarial case.  Hence, a key distinction is that the adversary's budget is ``all-or-nothing'' rather than being smoothly parametrized by $C$.  See \cite{lykouris2018stochastic} for further discussion.

\textbf{Model mismatch and misspecification.} A distinct but related direction in the linear bandit literature has been to address robustness to model mismatch and misspecification \cite{ghosh2017misspecified,krishnamurthy18semi} (see also \cite[Sec.~24.4]{lattimore2018bandit} and \cite{zanette2020learning}).  In \cite{ghosh2017misspecified}, the deviations for each arm are fixed and the same every round, whereas in \cite{krishnamurthy18semi} the deviations may depend on a context vector but not on the learner's action.  Hence, both can be viewed as considering a weaker adversary than the present paper.  On the other hand, this can lead to stronger regret guarantees, such as paying essentially no penalty under broad misspecification scenarios \cite{krishnamurthy18semi}.

\textbf{Fractional corruption model.} In \cite{kapoor_corruption-tolerant_2019}, a different corruption model was considered, both in the case of independent arms and linear rewards.  The constraint on the adversary therein is that at any time $t$, at most $\eta t$ fraction of the observed rewards have been corrupted by the adversary.  This is distinct from (and complementary to) the setting that we consider, in which the adversary can choose where to concentrate its budget.  The algorithms and bounds of \cite{kapoor_corruption-tolerant_2019} are not applicable in our setting.  

On the other hand, our algorithms and their regret bounds can be applied to the above-mentioned fractional corruption model, or even a more powerful corruption model in which the $\eta t$-fraction condition is only required to hold for $t=T$,\footnote{This variant is closely related to Huber's contamination model, in which each round may be adversarially corrupted independently with probability $\eta$.} provided that the adversary's corruption level is bounded by some value $B_0$ on each round that it corrupts (as is also assumed in \cite{kapoor_corruption-tolerant_2019}).  In particular, such behavior is already captured by our adversary upon setting $C = B_0 \eta T$.

\section{Discussion on Instance-Dependent vs.~Instance-Independent Bounds} \label{app:transfer}

In the setup of polyhedral domains considered in \cite{li2019stochastic}, under the assumption that the separation between the best and second-best corner point is at least $\Delta$, a regret bound of $O\big( \frac{C d^2 \log T}{\Delta} + \frac{d^5 \log \frac{d \log T}{\delta} \log T}{\Delta^2} \big)$ is proved, assuming a weaker adversary that knows the player's randomized action-selection distribution but not the specific action chosen.  To simplify the discussion, we assume $d = O(1)$ and $\Delta = \Theta(1)$, and ignore $\log T$ factors, thus focusing on the expression $\frac{C}{\Delta} + \frac{1}{\Delta^2}$ (which is slightly smaller than the above regret bound).

Consider a ``hard'' instance in which {\em every} suboptimal corner point has a gap of exactly $\Delta$ to the best corner point.  Then, the regret is trivially upper bounded by $\Delta T$, and combining this with the bound above would give a regret bound of $O\big( \max\big\{ \frac{C}{\Delta} + \frac{1}{\Delta^2}, \Delta T \big\} \big)$ or higher.  The idea in converting to an instance-independent regret bound is to maximize over $\Delta$, yielding $\max_{\Delta} \max\big\{ \frac{C}{\Delta} + \frac{1}{\Delta^2}, \Delta T \big\}$.

In Remark \ref{rem:comparison}, we claimed that an instance-independent regret bound deduced from \cite{li2019stochastic} would scale as $R_T = O( T^{2/3} + \sqrt{CT} )$ at best.  The two terms here follow by simply substituting two choices of $\Delta$ above: One that equates $\frac{C}{\Delta}$ with $\Delta T$ above (i.e., $\Delta = \sqrt{\frac{C}{T}}$), and one that equates $\frac{1}{\Delta^2}$ with $\Delta T$ (i.e., $\Delta = T^{-1/3}$).  The corresponding equated terms scale as $\sqrt{CT}$ and $T^{2/3}$ respectively, which establishes the desired claim.

\end{document}